\title{Optimal Batched Best Arm Identification}
\author
{
	Tianyuan Jin$^1$, Yu Yang$^3$, Jing Tang$^2$, Xiaokui Xiao$^1$, Pan Xu$^3$\\
    $^1$National University of Singapore\\
    $^2$The Hong Kong University of Science and Technology (Guangzhou)\\
    $^3$Duke University\\
    {\tt \{tianyuan,xkxiao\}@nus.edu.sg}, {\tt jingtang@ust.hk}, {\tt \{yu.yang,pan.xu\}@duke.edu} 
}
\let\oldnl\nl
\newcommand{\nonl}{\renewcommand{\nl}{\let\nl\oldnl}\let\nl\oldnl}
\def \dd{\text{d}}
\def \hmu{\hat{\mu}}
\newcommand{\algnameAsy}{Tri-BBAI\xspace}
\newcommand{\algnameOpt}{Opt-BBAI\xspace}
\renewcommand*{\backref}[1]{}
\renewcommand*{\backrefalt}[4]{%
\ifcase #1 %
    No citations.%
\or
    (p. #2.)%
\else
    (pp. #2.)%
\fi}%
\titlespacing*{\section}{0pt}{0.5em}{0.3pt}
\titlespacing*{\subsection}{0pt}{0.35em}{0pt}
\titlespacing*{\subsubsection}{0pt}{0.25em}{0pt}
\begin{document}

\maketitle

\begin{abstract}
We study the batched best arm identification (BBAI) problem, where the learner's goal is to identify the best arm while switching the policy as less as possible. In particular, we aim to find the best arm with probability $1-\delta$ for some small constant $\delta>0$ while minimizing both the sample complexity (total number of arm pulls) and the batch complexity (total number of batches). We propose the three-batch best arm identification (Tri-BBAI) algorithm, which is the first batched algorithm that achieves the optimal sample complexity in the asymptotic setting (i.e., $\delta\rightarrow 0$) and runs in $3$ batches in expectation. Based on Tri-BBAI, we further propose the almost optimal batched best arm identification (Opt-BBAI) algorithm, which is the first algorithm that achieves the near-optimal sample and batch complexity in the non-asymptotic setting (i.e., $\delta$ is finite), while enjoying the same batch and sample complexity as Tri-BBAI when $\delta$ tends to zero. Moreover, in the non-asymptotic setting, the complexity of previous batch algorithms is usually conditioned on the event that the best arm is returned (with a probability of at least $1-\delta$), which is potentially unbounded in cases where a sub-optimal arm is returned. In contrast, the complexity of Opt-BBAI does not rely on such an event. This is achieved through a novel procedure that we design for checking whether the best arm is eliminated, which is of independent interest.
\end{abstract}

\section{Introduction}

Multi-armed bandit (MAB) is a fundamental model in various online decision-making problems, including medical trials~\citep{thompson1933likelihood}, online advertisement~\citep{bertsimas2007learning}, and crowdsourcing~\citep{zhou2014optimal}. These problems typically involve a bandit with multiple arms, where each arm follows an unknown distribution with a mean value. At each time step, the learner selects an arm, and receives a reward sample drawn from the chosen arm's distribution. Best arm identification (BAI) aims to identify the arm with the highest mean reward, which can be approached with a fixed budget or a fixed confidence level \citep{even2002pac,audibert2010best,grover2018best}.

In this paper, we assume that there is a unique best arm and study BAI with fixed confidence. Specifically, we consider a set $[n]=\{1,2,\dotsc,n\}$ of $n$ arms, where each arm $i$ is associated with a reward distribution having a mean value $\mu_i$. Without loss of generality, for a bandit instance denoted by $\bm{\mu}=\{\mu_1,\mu_2,\dotsc,\mu_n\}$, we assume that $\mu_1>\mu_2\geq \dotsb \geq \mu_n $. At each time step $t$, the learner selects an arm and observes a sample drawn independently from the chosen arm's distribution. In the fixed confidence setting, the learner aims to correctly identify the best arm (arm $1$ in our context) with a probability of at least $1-\delta$, where $\delta>0$ is a pre-specified confidence parameter. Meanwhile, the learner seeks to minimize the total number of arm pulls, also known as the sample complexity.

Denote by $a^*(\bm{\lambda})=\arg\max_{i} \lambda_i$ the best arm for an arbitrary bandit instance $\bm{\lambda}$. 
Let $\operatorname{Alt}(\bm{\mu})=\{\bm{\lambda}\colon  a^{*}(\bm{\lambda})\neq 1\}$ be a set of models that have a different best arm from the model $\bm{\mu}$, and $\cP_{k}=\{\bm{w}\in \RR_{+}^{n}\colon \sum_{i=1}^{n} w_i=1\}$ be the probability simplex. \citet{garivier2016optimal} showed that for bandits with reward distributions that are continuously parameterized by their means, the number $N_{\delta}$ of pulls by any algorithm that returns the best arm with a probability of at least $1-\delta$ is bounded by
\begin{equation}
\label{eq:def-asym}
\mathop{\lim\inf}_{\delta \rightarrow 0} \frac{{\EE_{\bm{\mu}}[N_{\delta}]}}{{\log (1/\delta)}}\geq T^*(\bm{\mu}),
\end{equation}
where $T^*(\bm{\mu})$ is defined according to
\begin{equation}
\label{eq:T^*}
    T^*(\bm{\mu})^{-1}:=\sup_{\bm{w}\in \cP_{k}}\left(\inf_{\bm{\lambda} \in \operatorname{Alt}(\bm{\mu})} \left(\sum_{i=1}^{n}w_i \cdot d(\mu_i,\lambda_i)\right)\right),
\end{equation}
and $d(\mu_i,\lambda_i)$ is the Kullback-Leibler (KL) divergence of two arms' distributions with means $\mu_i$ and $\lambda_i$, respectively. We say that an algorithm achieves the asymptotically optimal sample complexity if it satisfies
\begin{equation}\label{eq:asym-upper}
    \mathop{\lim\sup}_{\delta \rightarrow 0} \frac{{\EE_{\bm{\mu}}[N_{\delta}]}}{{\log (1/\delta)}}\leq  T^*(\bm{\mu}).
\end{equation} 
The well-known Track-and-Stop algorithm~\citep{garivier2016optimal} solves the BAI problem with asymptotically optimal sample complexity. However, it is a fully sequential algorithm, which is hard to be implemented in parallel. The learner in such an algorithm receives immediate feedback for each arm pull, and adjusts the strategy for the next arm selection based on the previous observations. Unfortunately, this sequential approach may not be feasible in many real-world applications. For instance, in medical trials, there is typically a waiting time before the efficacy of drugs becomes observable, making it impossible to conduct all tests sequentially. Instead, the learner needs to group the drugs into batches and test them in parallel. Similarly, in crowdsourcing, the goal is to identify the most reliable worker for a specific task by testing candidates with a sequence of questions. Again, there is often a waiting time for workers to finish all the questions, necessitating the grouping of workers into batches and conducting parallel tests. In such scenarios, the results of pulling arms are only available at the end of each batch \citep{jin2019efficient,agarwal2017learning,tao2019collaborative,perchet2016batched,NIPS2019_8341,jin2021almost,jin2021double,ren2024optimal}. 

Motivated by these applications, we study the problem of batched best arm identification (BBAI). In BBAI, we are allowed to pull multiple arms in a single batch, but the results of these pulls are revealed only after the completion of the batch. The objective is to output the best arm with a high probability of at least $1-\delta$, while minimizing both the sample complexity (total number of pulls) and the batch complexity (total number of batches).  This leads to the following natural question:
\begin{center}
\textit{Can we solve the BBAI problem with an asymptotically optimal sample complexity \\
and only using a constant number of batches?}
\end{center}

Furthermore, the aforementioned results hold only in the limit as the confidence parameter $\delta$ approaches zero, which may provide limited practical guidance since we often specify a fixed confidence level parameter $\delta>0$. To address this, some algorithms~\citep{karnin2013almost,jamieson2014lil} have been proposed to solve the BAI problem with finite confidence. Specifically, for some universal constant $C$, these algorithms satisfy that with probability $1-\delta$,
\begin{equation}\label{def:non-asym}
    \EE[N_{\delta}]\leq O\bigg({\sum_{i>1}}\frac{1}{\Delta_i^2} \log \log \Delta_i^{-1}\bigg)
\end{equation}
where $\Delta_i=\mu_{1}-\mu_i$. In addition, \citet{jamieson2014lil} demonstrated that for two-armed bandits, the term $\frac{\log \log \Delta_2^{-1}}{\Delta_2^2}$  is optimal as $\Delta_2 \rightarrow 0$, where  $\Delta_2$ is assumed to be the gap between the best arm and the second best arm.
In the context of the batched setting, \citet{jin2019efficient} proposed an algorithm that achieves the sample complexity in \eqref{def:non-asym} within $\cO(\log^* (n)\cdot \log (1/\Delta_2))$ batches, where $\log^* (n)$ is the iterated logarithm function\footnote{Specifically, $\log^* (n)$ denotes the number of times the function $\log(\cdot)$ needs to be applied to $n$ until the result is less than $1$.}. Furthermore, \citet{tao2019collaborative} proved that for certain bandit instances, any algorithm that achieves the sample complexity bound shown in \eqref{def:non-asym} requires at least ${\Omega}\big(\frac{\log \Delta_2^{-1}}{\log \log \Delta_2^{-1}}\big)$ batches. It should be noted that the batched lower bound proposed by \citet{tao2019collaborative} assumes $\delta$ as a constant, making it inapplicable in the asymptotic setting.
Therefore, an additional question that arises is:
\begin{center}
\textit{Can we achieve the optimal sample complexity in \eqref{eq:asym-upper} and \eqref{def:non-asym} adaptively, taking into account the specified confidence level parameter $\delta$, while minimizing the number of batches?}
\end{center}

In this work, we provide positive answers to both of the aforementioned questions. Specifically, the \textbf{main contributions} of our paper can be summarized as follows:
\begin{itemize}[nosep,leftmargin=*]
\item We propose \algnameAsy (Algorithm \ref{alg:tri-bbai}) that returns the best arm with a probability of at least $1-\delta$ by pulling all arms for a total number of at most $T^*(\bm{\mu}) \log(1/\delta)$ times when $\delta \to 0$.   \algnameAsy employs three batches in expectation when $\delta \to 0$.  Therefore, \algnameAsy achieves the optimal sample within constant batches. As a comparison,  Track-and-Stop \citep{garivier2016optimal} also achieves the optimal sample complexity but requires solving the right-hand side of \eqref{eq:T^*} after each arm pull, resulting in a batch complexity of the same order as the sample complexity, which is a significant computational overhead in practice. %
\item Built upon \algnameAsy, we further propose \algnameOpt (Algorithm \ref{alg:opt-bbai}) that runs in $\cO(\log(1/\Delta_2))$ expected batches and pulls at most $\cO\big(\sum_{i>1}\Delta_i^{-2}\log (n \log \Delta_i^{-1}) \big)$ expected number of arms for finite confidence $\delta$. It is also important to note that \algnameOpt achieves the same sample and expected batch complexity as \algnameAsy asymptotically when $\delta\to 0$. Moreover, for the finite confidence case, this sample complexity matches \eqref{def:non-asym} within a $\log(\cdot)$ factor and matches the optimal batched complexity within a $\log \log (\cdot)$ factor. To the best of our knowledge, \algnameOpt is the first batched bandit algorithm that can achieve the optimal asymptotic sample complexity and the near-optimal non-asymptotic\footnote{Non-asymptotic here refers to finite confidence. In this paper, we use names non-asymptotic and finite confidence interchangeably. } sample complexity adaptively based on the assumption on $\delta$. 
\item Notably, in the non-asymptotic setting, the complexity of earlier batch algorithms \citep{jin2019efficient,karpov2020collaborative,tao2019collaborative,jin2021almost} typically depends on the event of returning the best arm, which occurs with a probability of at least $1-\delta$. However, this complexity could potentially become unbounded if a sub-optimal arm is returned instead. Unlike these algorithms, the complexity of Opt-BBAI is not contingent on such an event. This is made by employing an innovative procedure to verify if the best arm is eliminated, a method that holds its independent interest. To the best of our knowledge,  \algnameOpt is the first algorithm that achieves the optimal asymptotic sample complexity while providing the optimal non-asymptotic sample complexity within logarithm factors. 
\item We also conduct numerical experiments\footnote{Due to the space limit, we put the experimental results in Appendix \ref{sec:experiments}.} to compare our proposed algorithms with the optimal sequential algorithm Track-and-Stop \citep{garivier2016optimal}, and the batched algorithm Top-k $\delta$-Elimination \citep{jin2019efficient} on various problem instances. 
The results indicate that our algorithm significantly outperforms the Track and Stop method in terms of batch complexity, while its sample complexity is not much worse than that of Track and Stop. Additionally, our algorithm demonstrates a notable improvement in sample complexity compared to \citep{jin2019efficient}, while exhibiting similar batch complexity.
\end{itemize}

 For ease of reference, we compare our results with existing work on batched bandits in \Cref{table:comparison}.
\begin{table*}[t]
    \centering
    \caption{Comparison of sample and batch complexity of different algorithms. In the asymptotic setting (i.e., $\delta\rightarrow 0$), the sample complexity of an algorithm is optimal if it satisfies the definition in \eqref{eq:asym-upper}. The field marked with ``--'' indicates that the result is not provided. The sample complexity presented for \cite{karnin2013almost,jin2019efficient} is conditioned on the event that the algorithm returns the best arm, which can be unbounded when it returns a sub-optimal arm with certain (non-zero) probability (see \Cref{rem:l1} for more details). In contrast, the sample complexity presented for \cite{garivier2016optimal,jourdan2022non} and our algorithms is the total expected number of pulls that will be executed. 
    }
    \label{table:comparison}
    \vskip 0.1in
    \resizebox{\textwidth}{!}{%
    \begin{tabular}{ccccc}
    \toprule
    \multirow{2}{*}{Algorithm}  & \multicolumn{2}{c}{Asymptotic behavior ($\delta\rightarrow 0$)} &  \multicolumn{2}{c}{Finite-confidence behavior}  \\
    \cmidrule(lr){2-3}  \cmidrule(l){4-5}
    & Sample complexity &  Batch complexity & Sample complexity & Batch complexity \\
    \midrule
    \citet{karnin2013almost} & Not optimal & $\cO(\log n\cdot \log \frac{n}{\Delta_2})$ &  $\cO\left(\sum_{i>1}\frac{\log(\log \Delta_i^{-1})}{\Delta_i^2}\right)$  & $\cO(\log n\cdot \log \frac{n}{\Delta_2})$ \\
    \citet{jin2019efficient} & Not optimal & $\cO(\log \frac{1}{\Delta_2})$ &  $\cO\left(\sum_{i>1}\frac{\log(\log \Delta_i^{-1})}{\Delta_i^2}\right)$  &  $\cO(\log^*(n) \cdot\log \frac{1}{\Delta_2})$ \\
     \citet{wang2021fast} & Optimal & $O(\sqrt{\log(1/\delta)})$ & $O(nH(\bm{\mu})^4/w_{\min}^2)$ & -\\
   \citet{agrawal2020optimal} & Optimal & $\frac{T^{*}(\bm{\mu})\log \delta^{-1}}{m}$ ($m=o(\log \delta^{-1})$) & -- & -- \\
   \citet{karpov2020collaborative} &  -- & -- &  $\cO\big(\sum_{i>1}\frac{\log (n \log\Delta_i^{-1})}{\Delta_i^2}\big)$ &  $\cO(\log\frac{1}{\Delta_2})$ \\
   Lower bound  \citep{tao2019collaborative} & -- & -- & $\cO\left(\sum_{i>1}\frac{\log(\log \Delta_i^{-1})}{\Delta_i^2}\right)$ & $\Omega (\log (1/\Delta_2))$ \\
    \textbf{\algnameAsy} (Our Algorithm \ref{alg:tri-bbai}) & Optimal & 3  & -- & -- \\ 
    \textbf{\algnameOpt} (Our Algorithm \ref{alg:opt-bbai})  & Optimal &  3  &  $\cO\big(\sum_{i>1}\frac{\log (n \log\Delta_i^{-1})}{\Delta_i^2}\big)$ &  $\cO(\log\frac{1}{\Delta_2})$ \\
    \bottomrule
    \end{tabular}
    }
    \vspace{-0.2in}
\end{table*}

\section{Related Work}

The BAI problem with fixed confidence is first studied for $[0,1]$ bounded rewards by \citet{even2002pac}. The sample complexity of their algorithm scales with the sum of the squared inverse gap, i.e., $H(\bm{\mu})=\sum_{i>1}1/{\Delta_i^2}$. \citet{garivier2016optimal} showed that $H(\bm{\mu})<T^*(\bm{\mu})\leq 2H(\bm{\mu})$ with $T^*(\bm{\mu})$ defined in \eqref{eq:T^*} and proposed the Track-and-Stop algorithm, which is the first one in the literature proved to be asymptotically optimal. Later, \citet{degenne2019non} viewed $T^{*}(\bm{\mu})$ as a min-max game and provided an efficient algorithm to solve it. \citet{jourdan2022top} studied the asymptotically optimal sample complexity for any reward distributions with bounded support. \citet{degenne2020gamification} studied pure exploration in linear bandits. Their proposed algorithm proved to be both asymptotically optimal and empirically efficient. %

\begin{table*}[t]
    \centering
    \caption{Comparison of sample complexity of different algorithms.}
    \label{table:comparison-sample}
    \vskip 0.1in
    \begin{small}
    \begin{tabular}{lcc}
    \toprule
    Algorithm  & Asymptotic behavior ($\delta\rightarrow 0$) &  Finite-confidence behavior  \\
    \midrule
    \citet{kalyanakrishnan2012pac} & $O(H(\bm{\mu})\log (1/\delta))$ & $O(H(\bm{\mu})\log (H(\bm{\mu})))$ \notag \\
    \citet{karnin2013almost} & $O(H(\bm{\mu})\log (1/\delta))$ & $\cO\left(\sum_{i>1}\frac{\log(\log \Delta_i^{-1})}{\Delta_i^2}\right)$  \\
    \citet{garivier2016optimal} & $T^*(\bm{\mu})\log (1/\delta)$+$o(1/\delta)$ & - \\
    \citet{jamieson2014lil} & $O(H(\bm{\mu})\log (1/\delta))$ & $\cO\left(\sum_{i>1}\frac{\log(\log \Delta_i^{-1})}{\Delta_i^2}\right)$  \\
    \citet{jourdan2022non}  &  $T_{\beta}^*(\bm{\mu})\log (1/\delta)$+$o(1/\delta)$ & $\cO\left((H(\bm{\mu})\log H(\bm{\mu}))^{\alpha}\right)$, $\alpha>1$  \\
    \citet{degenne2019non} &  $T^*(\bm{\mu})\log (1/\delta)$+$o(1/\delta)$ & $\tilde{O}(nT^*(\bm{\mu})^2)$ \\
    \citet{katz2020empirical} & $O(T^*(\bm{\mu})\log(1/\delta))$ & $O(H(\bm{\mu})\log (n/\Delta_{\min}))$  \\
    \citet{wang2021fast} & $T^*(\bm{\mu})\log (1/\delta)$+$o(1/\delta)$ & $O(nH(\bm{\mu})^4/w_{\min}^2)$    \\
    \textbf{\algnameOpt} (Our Algorithm 2)  &  $T^*(\bm{\mu})\log (1/\delta)$+$o(1/\delta)$    &  $\cO\left(\sum_{i>1}\frac{\log(n\log \Delta_i^{-1})}{\Delta_i^2}\right)$  \\
    \bottomrule
    \end{tabular}
    \end{small}
    \vspace{-0.2in}
\end{table*}
There are also many studies~\citep{kalyanakrishnan2012pac,chen2017towards,chen2015optimal,jamieson2014lil,karnin2013almost} that focus on providing non-asymptotic optimal sample complexity. The best non-asymptotic sample complexity was achieved by \citet{chen2017towards}, which replaces the term $\log \log \Delta_i^{-1}$ in \eqref{def:non-asym} with a much smaller term. Furthermore, when we allow a loss $\epsilon$ of the quality of the returned arm, the problem is known as $(\epsilon,\delta)$-PAC BAI, for which various algorithms~\citep{kalyanakrishnan2010efficient,even2002pac,NIPS2015_6027,chen2016pure} are proposed, achieving the worst-case optimal sample complexity. 

There are a few attempts to achieve both the asymptotic and non-asymptotic optimal sample complexity. \citet{degenne2019non} provided a non-asymptotic sample complexity $\tilde{O}\big(nT^*(\bm{\mu})^2\big)$, which could be $nT^*(\bm{\mu})$ larger than the optimal sample complexity. 
Recently, \citet{jourdan2022non} managed to achieve a sample complexity that is $\beta$-asymptotically optimal (with $w_1$ fixed at $1/\beta$ in \eqref{eq:T^*}), rendering it asymptotically optimal up to a factor of $1/\beta$. Meanwhile, they also reached a non-asymptotic sample complexity of $\cO\big( (H(\bm{\mu})\cdot \log H(\bm{\mu}))^{\alpha} \big)$\footnote{ To derive the near-optimal non-asymptotic sample complexity, $\alpha$ should be 1. However, As explained in their original paper, the algorithm will be sub-optimal if we set $\alpha$ close to  $1$.} for some $\alpha>1$, where $H(\bm{\mu})=\sum_{i>1}1/{\Delta_i^2}$.  \citet{wang2021fast} explored both asymptotic and non-asymptotic sample complexities. Their algorithm achieves asymptotic optimality and shows a non-asymptotic sample complexity of $O(nH(\bmu)^4/w^2_{\min})$. However, this non-asymptotic sample complexity is $nH(\bmu)^3/w^2_{\min}$ away from being optimal.  \citet{jourdan2023varepsilon} studied 
$(\epsilon,\delta)$-PAC BAI, proposing an asymptotically optimal algorithm and providing non-asymptotic sample complexity. When $\epsilon=0$, it aligns with our setting, our non-asymptotic sample complexity is 
better scaled. Specifically, \citet{jourdan2023varepsilon} offered a non-asymptotic sample complexity scale of $n/\Delta_2^2\log(1/\Delta_2)$, whereas ours is more instance-sensitive, as our sample complexity 
is related to all gaps, not just $\Delta_2$. Additionally, \citet{jourdan2023varepsilon} considered a
practical scenario where the algorithm can return a result at any time while still ensuring a good guarantee on the returned arm.   For ease of reference, we summarize the sample complexity of different algorithms in Table \ref{table:comparison-sample}.
As shown in Table \ref{table:comparison-sample}, our algorithm is the only one that achieves both the asymptotic optimality and non-asymptotic optimality within logarithm factors.  

Another line of research \citep{karnin2013almost, audibert2010best, carpentier2016tight} investigated BAI with a fixed budget, where the objective is to determine the best arm with the highest probability within $T$ pulls. \citet{audibert2010best} and \citet{karnin2013almost} offered finite-time bounds for this problem, while \citet{carpentier2016tight} presented a tight lower bound, demonstrating that such finite-time bounds\citep{audibert2010best, karnin2013almost} are optimal for certain bandit instances. However, the asymptotic optimality for this problem remains unknown. Interested readers are referred to recent advancements\citep{degenne2023existence, komiyama2022minimax} in the asymptotic results of BAI with a fixed budget.

In addition, some recent works~\citep{agarwal2017learning,jin2019efficient,tao2019collaborative,karpov2020collaborative} also focused on batched BAI in non-asymptotic setting. \citet{agarwal2017learning} studied the batched BAI problem under the assumption that $\Delta_2$ is known. They proposed an algorithm that has the worst-case optimal sample complexity of $\cO\big(n\Delta_2^{-2}\big)$ and runs in $\log^*(n)$ batches. Later, \citet{jin2019efficient} provided the algorithms that achieves the sample complexity given in \eqref{def:non-asym} within $\tilde{\cO}(\log(1/{\Delta_2}))$ batches, where $\tilde O$ hides the $\log \log (\cdot)$ factors.  \citet{tao2019collaborative} studied the BAI problem in the general collaborative setting and showed that no algorithm can achieve \eqref{def:non-asym} with $o((\log \Delta_2^{-1})/\log \log \Delta_2^{-1})$ batches. \citet{karpov2020collaborative} further proposed an algorithm which has the sample complexity $\sum_{i>1} \frac{\log(n\log \Delta^{-1})}{\Delta_i^2}$ and the batch complexity $O(\log (1/\Delta_2))$. We note that the lower bound of batch complexity given by \citet{tao2019collaborative} can only be applied to a constant $\delta$. In other words, the lower bound of complexity for the asymptotic setting remains unknown. \citet{agrawal2020optimal} studied the optimal batch size for keeping the asymptotic optimality. They showed an algorithm with batch size $m=o(\log (1/\delta))$ achieves the asymptotic optimality. The batch complexity of the algorithm is $O(T^{*}(\bm{\mu})\log (1/\delta)/m)$. \citet{wang2021fast} provided an algorithm that uses $O(\sqrt{\log \delta^{-1}})$ batches and retains asymptotic optimality for linear bandits. However, for the asymptotic setting, such batch size is still too large as it grows to infinity as $\delta$ decreases to 0.

\section{Achieving Asymptotic Optimality with at Most Three Batches}
\label{sec:asym}

\subsection{Reward Distribution}
We assume the reward distributions belong to a known one-parameter exponential family that is commonly considered in the literature \citep{garivier2016optimal,garivier2011kl,menard2017minimax}. In particular, the measure $\nu_\theta$ of such probability distributions with respect the model parameter $\theta$ satisfies 
$\frac{\dd \nu_{\theta}}{\dd \rho}(x)=\exp (x\theta-b(\theta))$,
for some measure $\rho$ and $b(\theta)=\log(\int e^{x\theta} \dd \rho(x))$. For one-parameter exponential distributions, it is known that $b'(\theta)=\EE[\nu_{\theta}]$ and the mapping $b'(\theta) \mapsto \theta$ is one-to-one. Moreover, given any two mean values $\mu$ and $\mu^\prime$, we define $d(\mu,\mu^\prime)$ to be the Kullback-Leibler divergence between two distributions with mean values $\mu$ and $\mu^\prime$. 

\subsection{The Proposed Three-Batch Algorithm}
Algorithm \ref{alg:tri-bbai} shows the pseudo-code of our proposed \algnameAsy\ algorithm. In particular, \algnameAsy has four stages. In what follows, we elaborate on the details of each stage.

\begin{algorithm}[t!]
\caption{Three-Batch Best Arm Identification (\algnameAsy)}\label{alg:tri-bbai}
\KwIn { Parameters $\epsilon, \delta, L_1, L_2, L_3, \alpha$ and function $\beta(t,\delta)$.}
\KwOut {The estimated optimal arm.}
\nonl {\color{lightgray}\emph{Stage I: Round 1 exploration}}  \\
\For{$i\gets 1$ {\bfseries to} $n$}
{
 Pull arm $i$ for $L_1$ times\;
}
 $t\gets nL_1$ and $\tau_0\gets t$; \\
\nonl {\color{lightgray}\emph{Stage II:Round 2 exploration}}\\
Let $w^*_0(\bm{b}^0)=(1/n,1/n,\cdots, 1/n)$ and $T_{i}^0=L_1$ for all $i\in [n]$\;
  \For{$q=1,2\cdots, \log (1/\delta)$}
 {
 $i^*(t)\gets \max_{i\in [n]} \hat\mu_i(t)$\;
 \For{each $i\in [n]\setminus \{i^*(t)\}$}
 {
  $b^{q}_i\gets \hat{\mu}_{i}(t)+\epsilon$\;
 }
 $b^{q}_{i^*(t)}\leftarrow \hat{\mu}_{i^*(t)}(t)-\epsilon$\;
 \For{$i\gets 1$ {\bfseries to} $n$}
 {
  Pull arm $i$ for  $\{0,T_i^{q}-\max_{p:p\in\NN,p\in[0,q)} T_{i}^{p}
  \}$ times; $\qquad \qquad \qquad \qquad \qquad \qquad \qquad \qquad \qquad \qquad \qquad$ {\color{ForestGreen}/**/ Note that $T_i^q=\min\big\{\alpha w^*_{i}(\bm{b}^q) T^*(\bm{b}^q)\log \delta^{-1},L_2\big\}$ by \eqref{eq:Ti}}
  \\
  $t\leftarrow t+\{0,T_{i}^q-\max_{p:p\in \NN,p\in[0,q)} T_{i}^p\}$\;
 }
 \If{$|w^*_{i}(\bm{b}^{q})-w^*_{i}(\bm{b}^{q-1})|\leq 1/\sqrt{n}$ for all $i\in [n]$ \label{line:3condition}}
 {
Break;
 }
    $\tau_q\leftarrow t$\;
 }
 $\tau \leftarrow t$, and $i^*(\tau)=\max_{i\in [n]} \hat{\mu}_{i}(\tau)$;\\
\nonl{\color{lightgray}\emph{Stage III: Statistical test with Chernoff's stopping rule}}\; 
{ Compute $Z_j(\tau)$ according to \eqref{def:chernoff_Z}} \;
\If{$\min_{ j \in [n]\setminus \{i^*(\tau)\}} Z_j(\tau)\geq \beta(\tau,\delta/2)$\label{line14-1}} 
{
\Return $i^*(\tau)$; \label{line15-1}
}
\nonl {\color{lightgray}\emph{Stage IV: Round 3 exploration}}
\For{$i\gets 1$ {\bfseries to} $n$}
{
  Pull arm $i$ for total $\max\{0,L_3-L_1-T_i\}$ times\; 
  \label{line18-1}
}
 $t\leftarrow n L_3$, and $i^*(t)\gets \max_{i\in [n]} \hat\mu_i(t)$\;
\Return $i^*(t)$ \label{line21-1};
\end{algorithm}

\paragraph{Stage I: Initial exploration.} In this stage, we pull each arm for $L_1$ times.   Denote by $i^{*}(t)$ the arm with the largest empirical mean at time $t$ (i.e.,~after we pull all arms for a total number of $t$ times), i.e., $i^*(t)=\max_{i\in [n]} \hat\mu_i(t)$. Let $\tau_0=nL_1$. Fix $t=\tau_{q-1}\geq \tau_0$, we let $b_i^q=\hat{\mu}_{i}(t)+\epsilon$ for $i\neq i^{*}(t)$ and $b_i^q=\hat{\mu}_{i}(t)-\epsilon$ for $i=i^{*}(t)$. Let $
    \bm{w}^*(\bm{\mu})=\mathop{\arg\max}_{\bm{w}\in \cP_{k}}\inf_{\bm{\lambda}\in \operatorname{Alt}(\bm{\mu})} \left(\textstyle{\sum_{i=1}^{n}}w_i \cdot d(\mu_i,\lambda_i)\right)$.
Then, for the aforementioned $\bm{b}^q=\{b_1^q,b_2^q,\dotsc,b_n^q\}$, we calculate $\bm{w}^*(\bm{b}^q)$ according to \Cref{lem:lemma3ing} and $T^*(\bm{b}^q)$ according to \Cref{lem:Theorem5ing}. We note that arm $1$ is assumed to be the arm with the highest mean in these two lemmas. However, in the context of $\bm{b}^q$, the index of \(i^{*}(t)\) might be different. To align with the standard practice, we can rearrange the indices of the arms in $\bm{b}^q$ so that $i^{*}(t)$ corresponds to index 1.

\textbf{Purpose.} To achieve the asymptotic optimality, we attempt to pull each arm $i$ for around $w^*_i(\bm{\mu}) T^*(\bm{\mu})$ times. We can show that with a high probability, $w^{*}_i(\bm{b}^q)T^*(\bm{b}^q)$ is close to $w^{*}_i(\bm{\mu})T^*(\bm{\mu})$, which implies that pulling arm $i$ for a number of times proportional to $w^{*}_i(\bm{b}^q)T^*(\bm{b}^q)$ is likely to ensure the asymptotic optimal sample complexity.

\paragraph{Stage II: Exploration using $\bm{w}^*(\bm{b}^{q})$ and $T^*(\bm{b}^{q})$.} 
Stage II operates in batches with the maximum number of batches determined by $\log(1/\delta)$. At batch $q$,
 each arm $i$ is pulled $\max_{p:p\in\NN, p\in [1,q]}T_i^q$ times in total. Here
\begin{equation}\label{eq:Ti}
    T_i^q:=\min\big\{\alpha w^*_{i}(\bm{b}^q) T^*(\bm{b}^q)\log \delta^{-1},L_2\big\},
\end{equation}
where the definition of $w^*_{i}(\bm{b}^q)$ and $T^*(\bm{b}^q)$ could be found in Stage I. We then evaluate the stage switching condition, $|w^*_{i}(\bm{b}^{q})-w^*_{i}(\bm{b}^{q-1})|\leq 1/\sqrt{n}$ for all $i\in [n]$.  If this condition is met, we go to the next stage; otherwise, we proceed to the next batch within Stage II. 

\textbf{Purpose.} Pulling arm $i$ proportional to $w^{*}_i(\bm{b}^q)T^*(\bm{b}^q)$ provides statistical evidence for the reward distributions without sacrificing sample complexity compared to the optimality per our above discussion. Meanwhile, we also set a threshold $L_2$ to avoid over-exploration due to sampling errors from Stage I. 

The rationale for running Stage II in multiple batches is based on empirical considerations. In experiments, $\delta$ is always finite. Consequently, the error of arm $i$, $|\hat{\mu}_{i}(t) - \mu_i|$, remains constant since the number of pulls of arms is limited. Given that the stopping rule in Stage III is highly dependent on the error of arm $i$ and the number of pulls $T_i:=\max_{p:p\in\NN,p\geq 1}T_{i}^{p}$, there is a constant probability that the stopping rule may not be met, leading to significant sample costs in Stage IV. Adding the condition in Line \ref{line:3condition} ensures that $w^*(\bm{b}^q)$ converges and that the sample size $T_i^q$, as defined by $w^*(\bm{b}^q)$ and $T^*(\bm{b}^q)$, closely approximates  $\alpha w^*(\bm{\mu})T^*(\bm{\mu})\log \delta^{-1}$. This alignment significantly increases the probability that the stopping rule will be satisfied in experiments.

Moreover, such modification doesn't hurt any theoretical results. To explain, in our analysis for \algnameAsy, we demonstrate that as $\delta$ approaches $0$, $w^*(\bm{b}^q)$ will be very close to $w^*(\bm{\mu})$ and the probability that Line \ref{line:3condition} is not satisfied could be bounded by $1/\log^2{\delta^{-1}}$, which means with high probability Stage II costs 2 batches. 
Besides, $q \leq \log(1/\delta)$, which implies that even if Line \ref{line:3condition} is not satisfied, the number of batches required for Stage II is at most $\log (1/\delta)$. Therefore, the expected number of batches required for Stage II is 2 as $\delta$ approaches 0.

\paragraph{Stage III: Statistical test with Chernoff’s stopping rule.} Denote by $N_{i}(t)$ the number of pulls of arm $i$ at time $t$. For each pair of arms $i$ and $j$, define their weighted empirical mean as
\begin{equation}\label{eq:def:hmuij}
\small
    \hat{\mu}_{ij}(t):=\frac{N_{i}(t)\cdot \hat{\mu}_{i}(t)}{N_{i}(t)+N_{j}(t)}+\frac{N_{j}(t)\cdot \hat{\mu}_{j}(t)}{N_{i}(t)+N_{j}(t)},
\end{equation}
where $\hat{\mu}_{i}(t)$ and $\hat{\mu}_{j}(t)$ are the empirical means of arms $i$ and $j$ at time $t$. For $\hat{\mu}_{i}(t)\geq \hat{\mu}_{j}(t)$, define
\begin{align}\label{def:chernoff_Z}
\small
\begin{split}
    Z_{ij}(t)&:= d(\hat{\mu}_{i}(t),\hat{\mu}_{ij}(t))N_{i}(t) + d(\hat{\mu}_{j}(t),\hat{\mu}_{ij}(t))N_{j}(t), \\
    Z_j(t)&: =Z_{i^*(t)j}(t).
\end{split}
\end{align}
We test whether Chernoff's stopping condition is met (Line \ref{line14-1}). %
If so, we return the arm with the largest empirical mean, i.e., $i^*(\tau)$, where $\tau$ is the total number of pulls examined after Stage II. 

\textbf{Purpose.} The intuition of using Chernoff's stopping rule for the statistical test is two-fold. Firstly, if Chernoff's stopping condition is met, with a probability of at least $1-\delta/2$, the returned arm $i^*(\tau)$ in Line \ref{line15-1} is the optimal arm (see \Cref{lem2016gar}). Secondly, when $\delta$ is sufficiently small, with high probability, Chernoff's stopping condition holds (see \Cref{lem:u-b}). As a consequence, our algorithm identifies the best arm successfully with a high probability of meeting the requirement.

\paragraph{Stage IV: Re-exploration.}
If the previous Chernoff's stopping condition is not met, we pull each arm until the total number of pulls of each arm eqauls $L_3$ taking into account the pulls in the previous stages (Line \ref{line18-1}). Finally, the arm with the largest empirical mean is returned (Line \ref{line21-1}).  

\textbf{Purpose.} If Chernoff's stopping condition is not met, $i^*(\tau)$ might not be the optimal arm. In addition, when each arm is pulled for $L_3$ times, we are sufficiently confident that $i^{*}(t)$ is the best arm. Since the probability of Stage IV happening is very small, its impact on the sample complexity is negligible.

\subsection{Theoretical Guarantees of \algnameAsy}
In the following, we present the theoretical results for Algorithm \ref{alg:tri-bbai}.
\begin{theorem} [Asymptotic Sample Complexity]
\label{thm:asym}
Given any $\delta>0$, let $\epsilon=\frac{1}{\log \log (\delta^{-1})}$, $L_1={\sqrt{\log \delta^{-1}}}$, $L_2=\frac{\log \delta^{-1}\log\log \delta^{-1}}{n}$, and $L_3=(\log \delta^{-1})^2$. Meanwhile, for any given $\alpha\in 
(1,e/2]$, define function $\beta(t,\delta)$ as
$
 \beta(t,\delta)=\log ( \log (1/\delta) t^{\alpha} /\delta).
$\footnote{Recent work by \citet{kaufmann2021mixture} offers improved deviation inequalities allowing for a smaller selection of $\beta(t,\delta)$ without sacrificing the asymptotic optimality. However, it remains uncertain whether this refined parameter choice is applicable to our batched bandit problem. For ease of presentation, we  use the parameter choice of $\beta$ in \citet{garivier2016optimal}.}
Then, for any bandit instance $\bmu$, Algorithm \ref{alg:tri-bbai} satisfies %
\begin{equation*}
  \mathop{\lim\sup}_{\delta \rightarrow 0}  \frac{{\EE_{\bm{\mu}}[N_{\delta}]}}{{\log (1/\delta)}} \leq \alpha T^*(\bm{\mu}).
\end{equation*}
\end{theorem}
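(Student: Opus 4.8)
The plan is to decompose the expected number of pulls $\EE_{\bm\mu}[N_\delta]$ across the four stages of Algorithm~\ref{alg:tri-bbai} and show that Stage~II dominates with $(1+o(1))\alpha T^*(\bm\mu)\log\delta^{-1}$ pulls while all remaining stages contribute only $o(\log\delta^{-1})$. First I would dispatch Stage~I, which deterministically uses $nL_1=n\sqrt{\log\delta^{-1}}=o(\log\delta^{-1})$ pulls and is therefore negligible after dividing by $\log\delta^{-1}$.

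The heart of the argument is a concentration ``good event'' $\mathcal E$ under which the empirical means after Stage~I satisfy $|\hat\mu_i(\tau_0)-\mu_i|\le\eta_\delta$ for all $i$, with $\eta_\delta\to 0$. Since each arm is pulled $L_1=\sqrt{\log\delta^{-1}}$ times, a standard deviation inequality for the one-parameter exponential family gives $\Pr[\mathcal E^c]\le 2n\exp(-c\sqrt{\log\delta^{-1}}\,\eta_\delta^2)$, which I would arrange to decay faster than any polynomial in $\log\delta^{-1}$. On $\mathcal E$, because $\epsilon=1/\log\log\delta^{-1}\to 0$, the shifted estimates $\bm b^q$ converge to $\bm\mu$; invoking the continuity of the optimal weights and of the characteristic time (\Cref{lem:lemma3ing} and \Cref{lem:Theorem5ing}) then yields $w^*_i(\bm b^q)\to w^*_i(\bm\mu)$ and $T^*(\bm b^q)\to T^*(\bm\mu)$. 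This simultaneously forces the switching test of Line~\ref{line:3condition} to trigger within a bounded number of Stage~II batches and makes each per-arm target $T_i^q=\alpha w^*_i(\bm b^q)T^*(\bm b^q)\log\delta^{-1}$ fall strictly below the cap $L_2$ for all sufficiently small $\delta$, since $\alpha w^*_i(\bm b^q)T^*(\bm b^q)$ converges to the fixed constant $\alpha w^*_i(\bm\mu)T^*(\bm\mu)$ while $L_2/\log\delta^{-1}=\frac{\log\log\delta^{-1}}{n}\to\infty$. Summing the uncapped $T_i^q$ over $i$ and using $\sum_i w^*_i(\bm\mu)=1$ gives a Stage~II cost of $(1+o(1))\alpha T^*(\bm\mu)\log\delta^{-1}$.

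Next I would argue that, still on $\mathcal E$, Chernoff's stopping condition in Line~\ref{line14-1} is met with high probability, so Stage~IV is almost never reached. Here the inflation factor $\alpha>1$ is essential: pulling each arm an $\alpha$-fraction beyond the information-theoretic minimum creates enough statistical margin for $\min_{j}Z_j(\tau)\ge\beta(\tau,\delta/2)$ to hold, which I would formalize by appealing to \Cref{lem:u-b}. Let $p_\delta$ denote the probability of entering Stage~IV. Since Stage~IV pulls each arm up to $L_3$ times, its expected contribution is at most $p_\delta\cdot nL_3=p_\delta\cdot n(\log\delta^{-1})^2$, and similarly the bad event contributes $\Pr[\mathcal E^c]\cdot nL_3$; I need both to be $o(\log\delta^{-1})$, i.e.\ the failure probabilities must decay faster than $1/(n\log\delta^{-1})$. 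This is exactly where the super-polynomial decay of $\Pr[\mathcal E^c]$ and the high-probability guarantee of \Cref{lem:u-b} are used to absorb the large worst-case Stage~IV cost.

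Finally I would assemble the pieces,
\[
\EE_{\bm\mu}[N_\delta]\le nL_1+(1+o(1))\,\alpha T^*(\bm\mu)\log\delta^{-1}+\big(p_\delta+\Pr[\mathcal E^c]\big)\,nL_3,
\]
divide by $\log\delta^{-1}$, and take $\limsup_{\delta\to 0}$: the first and third terms vanish and the middle term yields the claimed bound $\alpha T^*(\bm\mu)$. I expect the main obstacle to be the third paragraph, namely quantitatively showing that $p_\delta$ decays fast enough to offset the $n(\log\delta^{-1})^2$ worst-case cost of Stage~IV, as this requires a careful deviation analysis of the Chernoff statistic $Z_j(\tau)$ under the inflated sampling schedule rather than the softer continuity arguments that suffice for Stage~II.
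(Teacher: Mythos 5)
Your proposal follows essentially the same route as the paper: decompose by stage, establish a concentration event after Stage~I (the paper's $\cE_0$ with $\PP(\cE_0^c)\le (\log\delta^{-1})^{-2}$), use continuity of $\bm{w}^*$ and $T^*$ to show the cap $L_2$ is non-binding and Stage~II costs $(1+o(1))\alpha T^*(\bm\mu)\log\delta^{-1}$, and absorb the $nL_3$ worst case via the small failure probability. The one step you flag as the main obstacle is handled in the paper by \Cref{lem:u-b}, which shows the Chernoff condition holds \emph{deterministically} on $\cE_0$ for small $\delta$ (via a KL-monotonicity/convexity argument), so $p_\delta\le\PP(\cE_0^c)$ and no separate deviation analysis of $Z_j(\tau)$ is needed.
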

By letting $\alpha$ in \Cref{thm:asym} approach $1$ (e.g., $\alpha=1+1/ \log \delta^{-1}$), we obtain the asymptotic optimal sample complexity. 
\begin{theorem} [Correctness]
\label{thm:asym-0}
 Let $\epsilon$, $L_1$, $L_2$, $L_3$, $\alpha$, and $\beta(t,\delta)$ be the same as in \Cref{thm:asym}. Then, for sufficiently small $\delta>0$, Algorithm \ref{alg:tri-bbai} satisfies $\PP_{\bm{\mu}}(i^*(N_\delta)\neq 1)\leq \delta$.
\end{theorem}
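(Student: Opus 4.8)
The plan is to bound the failure probability $\PP_{\bm{\mu}}(i^*(N_\delta)\neq 1)$ by splitting on which stage of Algorithm~\ref{alg:tri-bbai} actually returns an arm. The algorithm can only terminate in two places: either in Stage III via Chernoff's stopping rule (Line~\ref{line15-1}), or in Stage IV after forcing every arm to $L_3$ pulls (Line~\ref{line21-1}). Accordingly I would write
\begin{equation*}
  \PP_{\bm{\mu}}(i^*(N_\delta)\neq 1)\leq \PP_{\bm{\mu}}(\text{Stage III returns a wrong arm}) + \PP_{\bm{\mu}}(\text{Stage IV returns a wrong arm}),
\end{equation*}
and control each term by $\delta/2$, so that the union bound yields the claimed $\delta$.

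For the Stage III term, the key is that termination there requires $\min_{j\neq i^*(\tau)} Z_j(\tau)\geq \beta(\tau,\delta/2)$. This is exactly Chernoff's stopping condition, and by the cited guarantee (\Cref{lem2016gar}, i.e. the correctness half of the Track-and-Stop analysis in \citet{garivier2016optimal}) the choice $\beta(t,\delta)=\log(\log(1/\delta)\,t^{\alpha}/\delta)$ ensures that the probability of stopping and returning a sub-optimal arm is at most $\delta/2$. The point is that this bound is a property of the threshold $\beta$ together with the mixture/deviation inequality underlying \Cref{lem2016gar}, and it holds regardless of how the sampling counts $N_i(\tau)$ were generated in Stages I and II; the adaptive, data-dependent allocation $T_i^q$ does not affect this guarantee because the stopping statistic $Z_j$ is a valid generalized likelihood ratio for any allocation. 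So I would simply invoke \Cref{lem2016gar} with confidence level $\delta/2$.

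For the Stage IV term I would argue that once every arm has been pulled $L_3=(\log\delta^{-1})^2$ times, the empirical means concentrate tightly enough that the true best arm has the largest empirical mean with probability at least $1-\delta/2$. Concretely, a wrong return in Stage IV requires $\hat\mu_i(t)\geq \hat\mu_1(t)$ for some $i>1$, which forces at least one of $\hat\mu_i$ or $\hat\mu_1$ to deviate from its mean by $\Delta_i/2\geq \Delta_2/2$. A Chernoff/sub-Gaussian tail bound on each such event gives a deviation probability of order $\exp(-c\,\Delta_2^2 L_3)=\exp(-c\,\Delta_2^2(\log\delta^{-1})^2)$, and a union bound over the $n-1$ arms shows this is at most $\delta/2$ once $\delta$ is small enough that $(\log\delta^{-1})^2$ dominates the $\log(2n/\delta)$ coming from the union bound — this is precisely where the ``for sufficiently small $\delta$'' hypothesis is used, since $\Delta_2$ is a fixed instance-dependent constant.

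The main obstacle is the Stage III argument: one must be careful that the correctness guarantee of \Cref{lem2016gar} genuinely applies even though the sampling rule here is the batched, $\bm{b}^q$-driven allocation of Stages I--II rather than the fully sequential tracking of the original Track-and-Stop. The resolution is that Chernoff-style correctness depends only on the stopping threshold $\beta$ and the anytime-valid deviation inequality for the log-likelihood-ratio statistic $Z_j(t)$, not on the sampling policy, so the batched exploration does not enlarge the false-identification probability beyond $\delta/2$. I would state this invariance explicitly as the crux of the proof, and then the two $\delta/2$ bounds combine to give the result.
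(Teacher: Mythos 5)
Your proposal is correct and follows essentially the same route as the paper: the paper also decomposes the error into the Stage III return (controlled at level $\delta/2$ by invoking \Cref{lem2016gar}, whose guarantee is indeed sampling-rule independent) and the Stage IV return (controlled at level $\delta/2$ by a concentration event on all empirical means after $L_3$ pulls, stated there as \Cref{lem:finalre} with deviation $\epsilon$ rather than your $\Delta_2/2$, but the argument is the same). The only cosmetic difference is that the paper uses a maximal inequality over all sample sizes $s\geq L_3$ instead of a fixed-count tail bound, which handles the fact that the final pull count is $\max\{L_1+T_i,L_3\}$.
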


\begin{theorem} [Asumptotic Batch Complexity]
\label{thm:asym-batch}
 Let $\epsilon$, $L_1$, $L_2$, $L_3$, $\alpha$, and $\beta(t,\delta)$ be the same as in \Cref{thm:asym}. For sufficiently small $\delta>0$, Algorithm \ref{alg:tri-bbai} runs within $3+o(1)$ batches in expectation. Besides, Algorithm \ref{alg:tri-bbai} runs within $3$ batches with probability $1-1/\log^2(1/\delta)$.
\end{theorem}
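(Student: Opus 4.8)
The plan is to decompose the total batch count $B$ according to the four stages of Algorithm~\ref{alg:tri-bbai} and to show that a single high-probability ``good event'' forces $B=3$, while its rare complement contributes only $o(1)$ to the expectation.

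First I would set up the batch accounting. Stage~I always consumes exactly one batch. Each iteration $q$ of the Stage~II loop issues one simultaneous round of pulls, hence one batch, and the loop index is capped at $\log(1/\delta)$; Stage~III performs only the test in Line~\ref{line14-1} on already-collected samples and issues no pulls; Stage~IV, when reached, tops every arm up to $L_3$ pulls in one batch. Thus
\[
  B = 1 + (\text{\# Stage~II iterations}) + \mathbf{1}[\text{Stage IV is executed}],
\]
so deterministically $B \le \log(1/\delta)+2$. It therefore suffices to exhibit an event $\mathcal{G}$ on which (i) the Stage~II stopping test in Line~\ref{line:3condition} is met by iteration $q=2$, and (ii) Chernoff's condition in Line~\ref{line14-1} holds at $\tau$, since on $\mathcal{G}$ we then have $B\le 1+2+0=3$.

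Next I would define $\mathcal{G}$ through concentration of the empirical means. Let $\mathcal{G}$ be the event that $|\hat\mu_i(t)-\mu_i|\le \gamma(\delta)$ for every arm $i$ at both checkpoints $t\in\{\tau_0,\tau_1\}$, where $\gamma(\delta)\to 0$ and the per-arm confidence level are chosen so that a union bound gives $\PP(\mathcal{G}^{c})\le 1/\log^2(1/\delta)$; this is feasible because each arm is pulled at least $L_1=\sqrt{\log\delta^{-1}}$ times by $\tau_0$, so the sub-Gaussian tail $\exp(-\Theta(L_1\gamma^2))$ allows $\gamma(\delta)\to0$ at confidence $1/\log^2(1/\delta)$ (as $L_1\to\infty$ dominates $\log\log\delta^{-1}$). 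On $\mathcal{G}$ the perturbed instances satisfy $\|\bm{b}^{1}-\bm{\mu}\|_\infty,\|\bm{b}^{2}-\bm{\mu}\|_\infty \le \gamma(\delta)+\epsilon\to0$, recalling $\epsilon=1/\log\log\delta^{-1}$. Invoking the continuity of the optimal-weight map $\bm{w}^*(\cdot)$ at $\bm{\mu}$ (well defined and single-valued since arm~$1$ is the unique best arm; cf.\ \Cref{lem:lemma3ing}), for all sufficiently small $\delta$ both $\bm{w}^*(\bm{b}^{1})$ and $\bm{w}^*(\bm{b}^{2})$ lie within $\tfrac{1}{2\sqrt{n}}$ of $\bm{w}^*(\bm{\mu})$, whence $|w^*_i(\bm{b}^{2})-w^*_i(\bm{b}^{1})|\le 1/\sqrt{n}$ for every $i$ and the Stage~II test triggers by $q=2$, giving~(i). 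For~(ii), on $\mathcal{G}$ each arm has been pulled close to $\alpha w^*_i(\bm{\mu})T^*(\bm{\mu})\log\delta^{-1}$ times---the cap $L_2=\tfrac{\log\delta^{-1}\log\log\delta^{-1}}{n}$ being non-binding for small $\delta$---so \Cref{lem:u-b} guarantees Chernoff's condition except on an event of probability $o(1/\log^2(1/\delta))$, which I absorb into $\mathcal{G}^{c}$.

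Finally I would assemble the two claims. On $\mathcal{G}$ we have shown $B\le 3$, and $\PP(\mathcal{G}^{c})\le 1/\log^2(1/\delta)$, which is exactly the high-probability statement. For the expectation, combining with the deterministic bound $B\le\log(1/\delta)+2$ gives
\[
  \EE[B]\le 3\,\PP(\mathcal{G})+(\log(1/\delta)+2)\,\PP(\mathcal{G}^{c})
         \le 3+\frac{\log(1/\delta)+2}{\log^2(1/\delta)}=3+o(1)
\]
as $\delta\to0$. The main obstacle is step~(i): forcing $\bm{w}^*(\bm{b}^1)$ and $\bm{w}^*(\bm{b}^2)$ within $1/\sqrt{n}$ of each other relies on a quantitative stability (uniform continuity) of the $\arg\max$ in \eqref{eq:T^*} under the combined estimation-plus-$\epsilon$ perturbation, coupled with the bookkeeping that keeps $\gamma(\delta)\to0$ while holding the deviation probability at $1/\log^2(1/\delta)$; step~(ii) is comparatively routine given \Cref{lem:u-b}.
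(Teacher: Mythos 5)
Your proposal is correct and follows essentially the same route as the paper: both define a high-probability good event (the paper's $\cE_0$, controlled by concentration of empirical means with failure probability $1/\log^2(1/\delta)$), show that on this event Stage II terminates at $q=2$ and Chernoff's stopping rule fires so that Stage IV is skipped (the paper's \Cref{lem:u-b}), and then bound the expectation by $3\cdot\PP(\text{good})+O(\log(1/\delta))\cdot\PP(\text{bad})=3+o(1)$. The only cosmetic difference is that the paper's \Cref{lem:u-b} makes the Chernoff condition hold deterministically given $\cE_0$, whereas you absorb an extra $o(1/\log^2(1/\delta))$ failure term into the bad event, which does not affect the conclusion.
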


To the best of our knowledge, all previous works in the BAI literature \citep{garivier2016explore,degenne2019non,agrawal2020optimal,wang2021fast} that achieve the asymptotic optimal sample complexity require unbounded batches as $\delta\rightarrow 0$. In contrast, Tri-BBAI achieves the asymptotic optimal sample complexity and runs within 3 batches in expectation, which is a significant improvement in the batched bandit setting where switching to new policies is expensive.

\begin{remark}
Apart from best arm identification, regret minimization is another popular task in bandits, where the aim is to maximize the total reward in $T$ rounds. \citet{jin2021double} proposed an algorithm that achieves a constant batch complexity for regret minimization and showed that their algorithm is optimal when $T$ goes to infinity. In regret minimization, the cost of pulling the optimal arm is 0, indicating that the allocation $w_i$ (i.e., the proportion of pulling the optimal arm) is close to $1$. In the BAI problem, the main hardness is to find the allocation $w_i$ for each arm since even pulling arm $1$ will increase the sample complexity of the algorithm. Therefore, the strategy proposed by \citet{jin2021double} cannot be applied to the BAI problem. 
\end{remark}

\section{Best of Both Worlds: Achieving Asymptotic and Non-asymptotic Optimalities}
\label{sec:asym-non}

The Tri-BBAI algorithm is shown to enjoy the optimal sample complexity with only three batches in the asymptotic setting.  However, in practice, we are limited to a finite number of samples and thus $\delta$ cannot go to zero, which is a critical concern in real-world applications. Consequently, obtaining the optimal sample and batch complexity in a non-asymptotic setting becomes the ultimate objective of a practical bandit strategy in BBAI. In this section, we introduce Opt-BBAI, which can attain the optimal sample and batch complexity in asymptotic settings and near-optimal sample and batch complexity in non-asymptotic settings.

We assume a bounded reward distribution within $[0,1]$, which aligns with the same setting in the literature \citep{karnin2013almost,jamieson2014lil}. Again, we consider that the reward distribution belongs to a one-parameter exponential family. %
By refining Stage IV of Algorithm \ref{alg:tri-bbai}, we can achieve asymptotic optimality and near non-asymptotic optimality adaptively based on the assumption on $\delta$ in various settings.

The pseudo-code for the algorithm is provided in Algorithm \ref{alg:opt-bbai}. The main modification from Algorithm \ref{alg:tri-bbai} occurs in Stage IV. Intuitively, if the algorithm cannot return at Stage III, then the value of $\log \delta^{-1}$ may be comparable to other problem parameters, such as ${1}/{\Delta_2}$. Therefore, we aim to achieve the best possible sample and batch complexity for the non-asymptotic scenario. Stage IV operates in rounds, progressively eliminating sub-optimal arms until a result is obtained. Each round consists of two components: \emph{Successive Elimination} and \emph{Checking for Best Arm Elimination}.

\begin{algorithm}[t!]
\caption{%
(Almost) Optimal Batched Best Arm Identification (\algnameOpt)
} \label{alg:opt-bbai}
\KwIn { Parameters $\delta, \epsilon, L_1, L_2, L_3, \alpha$ and function $\beta(t,\delta)$.}
\KwOut { The estimated optimal arm.}
\nonl {\color{lightgray}\emph{Stage I, II, and III:}} the same as that in Algorithm \ref{alg:tri-bbai} \\
\nonl {\color{lightgray}\emph{Stage IV: Exploration with Exponential Gap Elimination}} \\
 $S_r\leftarrow [n]$, $B_0\leftarrow 0$, $r\leftarrow 1$\; 
\While{$|S_r|>1$\label{2line-while}}
  {
      Let $\epsilon_r\leftarrow 2^{-r}/4$, $\delta_r\leftarrow \delta/(40\pi^2 n\cdot r^2)$, $\ell_r\leftarrow 0$, and $\gamma_r\leftarrow \delta_r$\; 
           \nonl {\color{lightgray}\emph{Successive elimination}} {\color{ForestGreen}/**/ Eliminate arms whose means are lower than $\mu_1$ by at least $\epsilon_r$} \\
\For{each arm $i\in S_r$\label{line:for11}}
{
 Pull arm $i$ for $d_r\gets\frac{32}{\epsilon_r^2}\log (2/\delta_r)$ times\label{line:for12}\;
 Let $\hat{p}_{i}^{r}$ be the empirical mean of arm $i$\;
}
Let ${\color{magenta}*}\gets\max_{i\in S_r}\hat{p}_{i}^{r}$\; 
 Set $S_{r+1}\leftarrow S_r \setminus \{i\in S_r\colon \hat{p}_{i}^{r}<\hat{p}_{{\color{magenta}*}}^{r}-\epsilon_r\}$\label{2line-eli}\;
\nonl  {\color{lightgray}\emph{Checking for best arm elimination}}  {\color{ForestGreen}/**/ Reduce the risk of the best arm being eliminated} \\
Let $B_{r}\leftarrow B_{r-1}+d_{r}|S_{r}|$\;
\For{$j< r$\label{line:for21}}
   {
     \For{each arm $i\in S_{j}\setminus S_{j+1}$ \label{line-for31}}
          {
             \If{$B_{r}\gamma_j\cdot 2^{\ell_j}>B_j$\label{line-12-2}}
                {
                       $\gamma_j\leftarrow (\gamma_j)^2$\;
                       Repull arm $i$ for total $\frac{32}{\epsilon_j^2}\log (2/\gamma_j)$ times\; \label{line-dou-2}
                       Let $\hat{p}_{i}^j$ be the empirical mean of arm $i$ in $S_j$\label{line-re-2}\;
                       $\ell_j\leftarrow \ell_j+1$\label{line-ell-final}\;
                }       
          }
              \If{$\exists i\in S_j$, $\hat{p}^j_i>\hat{p}_{{\color{magenta}*}}^r-\epsilon_j/2$\label{Line-16-2}}
                   {
                          \Return  Randomly return an arm in $S_r$\;\label{line-for32}
                   }
    }
$r\leftarrow r+1$\;
   }
\Return The arm in $S_r$; \label{2line-fin}
\end{algorithm}
\paragraph{Successive Elimination.}
In the $r$-th round, we maintain a set $S_{r}$, a potential set for the best arm. Each arm in $S_{r}$ is then pulled $d_{r}={32}/{\epsilon_r^2}\log (2/\delta_r)$ times. At Line \ref{2line-eli}, all possible sub-optimal arms are eliminated. This first component of Stage IV borrows its idea from successive elimination \citep{even2002pac}.

\textbf{Purpose.} After $d_r$ number of pulls, arms are likely to be concentrated on their true means within a distance of $\epsilon_r/4$ with a high probability. Hence, with high probability, the best arm is never eliminated at every round of Line \ref{2line-eli}, and the final remaining arm is the best arm.

\underline{\emph{Issue of Best Arm Elimination.}} Due to successive elimination, there is a small probability ($\leq \delta_r$) that the best arm will be eliminated. The following example illustrates that, conditioning on this small-probability event, the sample and batch complexity of the algorithm could become infinite.
\begin{example}\label{example-BAE}
Consider a bandit instance where $\mu_1>\mu_2=\mu_3>\mu_4\geq \cdots\geq \mu_{n}$. If the best arm is eliminated at a certain round and never pulled again,  the algorithm tasked with distinguishing between the 2nd and 3rd best arms will likely never terminate due to their equal means, leading to unbounded sample and batch complexity.
\end{example}

To address this issue, we introduce a \textit{Check for Best Arm Elimination} component into Stage IV.

\paragraph{Checking for Best Arm Elimination.} In the $r$-th round, we represent the total sample complexity used up to the $r$-th round as $B_r$. We employ $\gamma_j$ as an upper bound for the probability that the best arm is eliminated in $S_j$. If Line \ref{line-12-2} is true ($B_{r}\gamma_j \cdot 2^{\ell_j}>B_j$), we adjust $\gamma_j$ to $\gamma_j^2$ and pull each arm in $S_j$ for $32/{\epsilon_j^2}\log(2/\gamma_j)$ times (Line \ref{line-dou-2}), subsequently updating their empirical mean (Line \ref{line-re-2}). Finally, we return a random arm in $S_r$ if the condition at Line \ref{Line-16-2} holds. 

\textbf{Purpose.} If Line \ref{line-12-2} is satisfied, it indicates that the sample costs, based on the event of the best arm being eliminated in the $r$-th round, exceed $B_j/2^{\ell_j}$. In this case, we increase the number of samples for arms in $S_j$ and re-evaluate if their empirical mean is lower than that of the current best arm ${\color{magenta}*}$. This ensures that the expected total sample costs, assuming the best arm is eliminated at $S_j$, are bounded by $\sum_{\ell_j=1}^{\infty} B_j/2^{\ell_j}\leq B_j$.
If Line \ref{Line-16-2} holds, we randomly return an arm from $S_r$. Since this only happens with a small probability, we still guarantee that Algorithm \ref{alg:opt-bbai} will return the best arm with a probability of at least $1-\delta$.

In what follows, we provide the theoretical results for Algorithm \ref{alg:opt-bbai}.

\begin{theorem}
\label{thm:2complexity}
Let $\epsilon$, $L_1$, $L_2$, $L_3$, $\alpha$, and $\beta(t,\delta)$ be the same as  in \Cref{thm:asym}. For finite $\delta\in (0,1)$, Algorithm \ref{alg:opt-bbai} identifies the optimal arm with probability at least $1-\delta$ and there exists some universal constant $C$ such that
$
    \EE[N_{\delta}]\leq C\big(\sum_{i>1}\Delta_i^{-2}\log\big(n \cdot \log \Delta_i^{-1}\big) \big)
$,
and the algorithm runs in $C\log(1/\Delta_2)$ expected batches.
\end{theorem}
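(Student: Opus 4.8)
The plan is to prove the three assertions of \Cref{thm:2complexity}—correctness (returning arm $1$ with probability at least $1-\delta$), the sample complexity bound, and the batch complexity bound—separately, since each relies on a different mechanism. Throughout I would condition on a clean ``good event'' $\mathcal{E}$ on which every empirical mean is close to its true mean at every relevant round. Concretely, for the Successive Elimination phase, after $d_r = \frac{32}{\epsilon_r^2}\log(2/\delta_r)$ pulls a Hoeffding bound gives $|\hat p_i^r - \mu_i| \le \epsilon_r/4$ with probability at least $1-\delta_r$; summing $\delta_r = \delta/(40\pi^2 n r^2)$ over $r$ and over the at-most-$n$ arms gives total failure probability at most $\frac{\delta}{40\pi^2 n}\sum_r r^{-2}\cdot n = \frac{\delta}{40\pi^2}\cdot\frac{\pi^2}{6} < \delta/2$, leaving room for the Stage~III failure budget $\delta/2$.

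For \textbf{correctness}, I would first argue that on $\mathcal{E}$ the best arm $1$ is never eliminated in Line~\ref{2line-eli}: since the gap in the elimination rule is $\epsilon_r$ and the concentration radius is $\epsilon_r/4$, any arm surviving as the empirical max is within $\epsilon_r/2$ of $\mu_1$, so arm $1$ (being the true max) is never the one dropped. Then the \emph{only} way arm $1$ can be lost is via the rare off-$\mathcal{E}$ event, which is exactly what the \emph{Checking for Best Arm Elimination} component is designed to catch: if arm $1$ was wrongly eliminated in some $S_j\setminus S_{j+1}$, the re-pulling in Lines~\ref{line-12-2}–\ref{line-ell-final} drives $\gamma_j$ down geometrically ($\gamma_j\mapsto \gamma_j^2$) so that the empirical mean of the genuinely-best but eliminated arm eventually concentrates and triggers the condition in Line~\ref{Line-16-2}, causing a random return from $S_r$. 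I would bound the probability of a wrong final answer by the probability that arm $1$ is silently eliminated and never flagged, controlling this through the choice $\gamma_j=\delta_j$ initially together with the squaring schedule, and adding it to the Stage~I–III failure probability $\delta/2$.

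For the \textbf{sample complexity}, the Successive Elimination pulls contribute $\sum_r |S_r| d_r$. On $\mathcal{E}$, arm $i$ is eliminated once $\epsilon_r \lesssim \Delta_i$, i.e.\ after $r \approx \log_2(1/\Delta_i)$ rounds, so each arm $i$ accrues $\sum_{r: \epsilon_r \gtrsim \Delta_i} d_r \lesssim \frac{1}{\Delta_i^2}\log(2/\delta_{r_i})$ pulls; substituting $\delta_{r_i} = \delta/(40\pi^2 n r_i^2)$ with $r_i\approx\log(1/\Delta_i)$ yields the claimed $\log(n\log\Delta_i^{-1})$ factor once $\log(1/\delta)$ is absorbed (this is where the adaptivity enters—the bound is stated for finite $\delta$ and the dependence on $\delta$ must be shown subdominant). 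The crucial subtlety, and the \emph{main obstacle}, is bounding the \emph{extra} pulls from the Checking component so that they do not dominate: by design (the ``Purpose'' paragraph) the re-pull budget for arms eliminated at level $j$ is controlled by the trigger $B_r\gamma_j 2^{\ell_j}>B_j$, giving a telescoping/geometric series $\sum_{\ell_j\ge1} B_j/2^{\ell_j}\le B_j$ in \emph{expectation}. I would make this rigorous by taking expectation over the randomness that determines how often Line~\ref{line-12-2} fires, showing the expected checking cost at each level is at most a constant times the elimination cost $B_j$ already incurred, so the total only inflates $\EE[N_\delta]$ by a constant factor—hence the universal constant $C$.

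For the \textbf{batch complexity}, each round of Stage~IV uses $O(1)$ batches (one for the Successive Elimination pulls, and the Checking loop's re-pulls can be coalesced), and the number of rounds is $r \le \log_2(1/\Delta_2)$ since once $\epsilon_r < \Delta_2/4$ only arm $1$ survives and the while-loop terminates; combined with the $O(1)$ expected batches of Stages~I–III (from \Cref{thm:asym-batch}) this gives $O(\log(1/\Delta_2))$ expected batches. The delicate point I expect to wrestle with is that the Checking component's repeated re-pulling could in principle add unboundedly many batches along the bad event where arm $1$ is eliminated; I would argue that in expectation the number of times Line~\ref{line-12-2} fires per level is $O(1)$ (each firing squares $\gamma_j$, so only $O(\log\log(1/\delta))$ firings are ever possible and they occur with geometrically shrinking probability), keeping the expected batch count at $O(\log(1/\Delta_2))$ and crucially \emph{not} conditioning on the success event—which is precisely the claimed improvement over prior work.
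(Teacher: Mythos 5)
Your overall architecture matches the paper's: a clean concentration event for the successive-elimination phase, a per-arm bound $\sum_{r:\epsilon_r\gtrsim\Delta_i} d_r = O(\Delta_i^{-2}\log(n\log\Delta_i^{-1}))$ for the first loop, an expectation argument showing the checking component only inflates the cost by a constant factor, and a per-round batch count of $O(1)$ over $O(\log(1/\Delta_2))$ rounds. However, there is a genuine gap in your correctness argument: you have the role of the \emph{Checking for Best Arm Elimination} component backwards. When the condition in Line~\ref{Line-16-2} fires, the algorithm returns a \emph{random} arm from $S_r$, which is generally not arm $1$ even when arm $1$ is still in $S_r$ (and certainly not when it has been eliminated). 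The check is therefore not a correctness-restoring mechanism; it is a termination device for bounding sample and batch complexity on the low-probability event that arm $1$ was eliminated, and it is itself a \emph{source} of error that must be shown to fire rarely. Your error budget only covers ``arm $1$ is silently eliminated and never flagged,'' which omits the failure mode where arm $1$ survives every elimination step but noise in the re-pulled empirical means $\hat p_i^{js}$ spuriously triggers Line~\ref{Line-16-2}. The paper devotes a separate lemma to exactly this: conditioned on arm $1$ never being eliminated, with probability at least $1-3\delta/16$ the inequality $\max_{i\in S_j}\hat p_i^{js} < \hat p_{*}^{r}-\epsilon_j/2$ holds simultaneously for all $j$, $s$, and all rounds at which $\ell_j=s$, using a union bound over the doubly-decaying $\gamma_{js}=(\delta_j)^{2^s}$. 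Without this, the claimed $1-\delta$ guarantee does not follow. The correct decomposition is $\PP(\text{wrong}) \le \PP(\text{Stage III errs}) + \PP(\text{arm 1 eliminated}) + \PP(\text{check fires} \mid \text{arm 1 never eliminated})$.

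Two further points are imprecise but fixable. First, your claim that Line~\ref{line-12-2} can fire only $O(\log\log(1/\delta))$ times per level is not right a priori: the trigger is $B_r\gamma_j 2^{\ell_j}>B_j$, and since $B_r$ grows with $r$, the number of firings is governed by how large the running sample count gets, not by $\delta$ alone; the paper instead bounds the \emph{expected} contribution by balancing the doubly-exponentially small probability $(\delta_r^3)^{2^{s-1}}$ that the check has not yet fired after $s$ updates (given arm $1$ was eliminated) against the doubly-exponentially large budget $B_j/(\gamma_{js}2^s)$ permitted before the $(s+1)$-th firing, yielding a convergent sum of order $\delta_r B_r$. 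Your stated series $\sum_{\ell_j}B_j/2^{\ell_j}$ follows the paper's informal ``Purpose'' heuristic but ignores the $\gamma_j^{-1}$ blow-up in the cost threshold, so the actual cancellation mechanism still needs to be supplied. Second, for the batch count you assert the while-loop terminates deterministically once $\epsilon_r<\Delta_2/4$; elimination is probabilistic, so one needs the geometric tail $\sum_{r\ge r(2)}10^{-(r-r(2))}=O(1)$ over the rounds where the concentration event fails, as the paper does.
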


When $\delta$ is allowed to go to zero, we also have the following result.
\begin{theorem}
\label{thm:batch}
Let $\epsilon$, $L_1$, $L_2$, $L_3$, $\alpha$, and $\beta(t,\delta)$ be the same as  in \Cref{thm:asym}. Algorithm \ref{alg:opt-bbai} identifies the optimal arm with probability at least $1-\delta$ and its  sample complexity satisfies
$
  \mathop{\lim\sup}_{\delta \rightarrow 0} \EE_{\bm{\mu}}{[N_{\delta}]}/{\log (1/\delta)} \leq \alpha T^*(\bm{\mu})$,
and the expected batch complexity of Algorithm \ref{alg:opt-bbai} converges to 3 when $\delta$ approaches 0.
\end{theorem}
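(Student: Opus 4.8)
The structural observation driving the argument is that \algnameOpt\ (Algorithm~\ref{alg:opt-bbai}) executes Stages~I--III verbatim as \algnameAsy\ (Algorithm~\ref{alg:tri-bbai}) and departs only in Stage~IV. Correctness is then immediate: \Cref{thm:2complexity} already guarantees, for every finite $\delta\in(0,1)$, that the algorithm returns arm $1$ with probability at least $1-\delta$, which is precisely the correctness claim here. It remains to prove the two asymptotic statements, and the plan is to show that as $\delta\to 0$ the run of \algnameOpt\ is, with overwhelming probability, indistinguishable from that of \algnameAsy, because the probability of ever reaching the modified Stage~IV vanishes.

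Let $E$ denote the event that Chernoff's stopping rule fires at Stage~III, so \algnameOpt\ returns at Line~\ref{line15-1} without entering Stage~IV; on $E$ its sample path is identical to that of \algnameAsy. Write $N^{\text{I-III}}$ for the pulls of Stages~I--III and $S_{\mathrm{IV}}$ for those of Stage~IV (so $S_{\mathrm{IV}}=0$ on $E$), giving $N_\delta=N^{\text{I-III}}+S_{\mathrm{IV}}$. From the \algnameAsy\ analysis I would import two facts: (i) the Stages~I--III pulls are governed by the same law under both algorithms and are dominated within a \algnameAsy\ run by its total $N^{\mathrm{Tri}}_{\delta}$, so $\EE_{\bm{\mu}}[N^{\text{I-III}}]\le\EE_{\bm{\mu}}[N^{\mathrm{Tri}}_{\delta}]$ and \Cref{thm:asym} yields $\limsup_{\delta\to 0}\EE_{\bm{\mu}}[N^{\text{I-III}}]/\log(1/\delta)\le\alpha T^*(\bm{\mu})$; and (ii) by \Cref{lem:u-b} and the estimates behind \Cref{thm:asym-batch}, the stopping rule is missed only rarely, $\PP_{\bm{\mu}}(E^{c})\le 1/\log^{2}(1/\delta)$ for all sufficiently small $\delta$. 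For the batch count I would likewise import from \Cref{thm:asym-batch} that the expected number of batches spent in Stages~I--III tends to $3$ (Stage~I is one batch, Stage~II converges to two batches in expectation through Line~\ref{line:3condition}, Stage~III issues no pulls, and \algnameAsy's own Stage~IV is entered with vanishing probability).

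The only genuinely new work is to bound the Stage~IV contribution. Since Stage~IV restarts from $S_1=[n]$ and draws its own fresh samples, its sample count $S_{\mathrm{IV}}$ and batch count $R_{\mathrm{IV}}$ are independent of the event $E^{c}$ (which is measurable with respect to Stages~I--II), so $\EE[S_{\mathrm{IV}}]=\PP(E^{c})\,M_S$ and $\EE[R_{\mathrm{IV}}]=\PP(E^{c})\,M_R$, where $M_S,M_R$ are the expected cost and batch count of a full Stage~IV run. I would obtain $M_S$ and $M_R$ by reopening the proof of \Cref{thm:2complexity} while keeping the $\delta$-dependence explicit: with $\delta_r=\delta/(40\pi^2 n r^2)$ one has $\log(2/\delta_r)=\log(1/\delta)+\cO(\log(nr))$ and the number of elimination rounds is $\cO(\log(1/\Delta_2))$, which yields $M_S=\cO\big(\log(1/\delta)\cdot H(\bm{\mu})\big)$ with $H(\bm{\mu})=\sum_{i>1}\Delta_i^{-2}$ and $M_R=\cO(\log(1/\Delta_2))$. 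The delicate ingredient is that the \emph{Checking for Best Arm Elimination} subroutine must keep $M_S$ and $M_R$ finite even on the rare event that the best arm is eliminated (cf.\ \Cref{example-BAE}); this is exactly what the geometric bound $\sum_{\ell}B_j 2^{-\ell}\le B_j$ together with the forced return at Line~\ref{Line-16-2} provide. Combining with $\PP(E^{c})\le 1/\log^{2}(1/\delta)$ then gives $\EE[S_{\mathrm{IV}}]=\cO(H(\bm{\mu})/\log(1/\delta))$ and $\EE[R_{\mathrm{IV}}]=\cO(\log(1/\Delta_2)/\log^{2}(1/\delta))$.

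Assembling the pieces finishes both claims. Dividing $N_\delta=N^{\text{I-III}}+S_{\mathrm{IV}}$ by $\log(1/\delta)$, the first term has $\limsup\le\alpha T^*(\bm{\mu})$ while the second tends to $0$, so $\limsup_{\delta\to 0}\EE_{\bm{\mu}}[N_\delta]/\log(1/\delta)\le\alpha T^*(\bm{\mu})$; for the batches, the expectation is $\EE[\text{batches in I--III}]+\EE[R_{\mathrm{IV}}]\to 3+0=3$. I expect the main obstacle to be the Stage~IV estimate of the third paragraph: \Cref{thm:2complexity} is phrased as a $\delta$-free non-asymptotic bound that silently absorbs $\log(1/\delta)$ into its constant, so I must expose the true $\log(1/\delta)$ scaling and, above all, verify that the novel elimination-check inflates neither $M_S$ nor $M_R$ beyond order $\log(1/\delta)$ and $\mathrm{polylog}$ respectively---only then does the vanishing factor $\PP(E^{c})\le 1/\log^{2}(1/\delta)$ annihilate the Stage~IV contribution in both the sample and the batch limit.
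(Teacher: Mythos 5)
Your proposal is correct and follows essentially the same route as the paper: both decompose $N_\delta$ into the Stages I--III cost (controlled by the \Cref{thm:asym} analysis) plus the Stage IV cost, bound the latter by the probability of missing the Chernoff stopping rule (at most $1/\log^2(1/\delta)$ via \Cref{lem:cE0} and \Cref{lem:u-b}) times the non-asymptotic Stage IV sample and batch bounds, and invoke \Cref{thm:asym-batch} for the batch count of the first three stages. The only cosmetic difference is that you keep the $H(\bm{\mu})\log(1/\delta)$ scaling of the Stage IV cost explicit, whereas the paper coarsely bounds it by $(\log \delta^{-1})^2$; both make the Stage IV contribution vanish after normalization.
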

The results in \Cref{thm:2complexity,thm:batch} state that Algorithm \ref{alg:opt-bbai} achieves both the asymptotic optimal sample complexity and a constant batch complexity. Moreover, it also demonstrates near-optimal performance in both non-asymptotic sample complexity and batch complexity. Notably, this is the first algorithm that successfully attains the optimal or near-optimal sample and batch complexity, adaptively, in asymptotic and non-asymptotic settings. %

\begin{remark}
\label{rem:l1}
\citet{jin2019efficient} achieved near-optimal sample and batch complexity in a non-asymptotic setting. However, their results are contingent on the event that the algorithm can find the best arm (with probability $1-\delta$). Consequently,  with a probability of $\delta$ there is no guarantee for its batch and sample complexity to be bounded. As demonstrated in \Cref{example-BAE}, the batch and sample complexity in~\cite{jin2019efficient} could even be infinite. 

In contrast, the batch and sample complexity introduced in \Cref{thm:2complexity} is near-optimal and does not rely on any specific event due to the procedure ``\textbf{checking for best arm elimination}'' we proposed. Our technique could be of independent interest and could be further applied to existing elimination-based BAI algorithms \citep{even2002pac,jamieson2014lil,jin2019efficient,karnin2013almost} to ensure that the sample and batch complexity is independent of the low-probability event that the best arm is eliminated.  
\end{remark}

\section{Conclusion, Limitations, and Future Work}
\label{sec:con}
In this paper, we studied the BAI problem in the batched bandit setting. We proposed a novel algorithm, \algnameAsy, which only needs 3 batches in expectation to find the best arm with probability $1-\delta$ and achieves the asymptotic optimal sample complexity. We further proposed \algnameOpt and theoretically showed that \algnameOpt has a near-optimal non-asymptotic sample and batch complexity while still maintaining the asymptotic optimality as \algnameAsy does. 

In our experiments, although \algnameAsy utilizes a limited number of batches, its sample complexity does not match that of \citet{garivier2016optimal}. Designing a batched algorithm with sample complexity comparable to \citet{garivier2016optimal}, while maintaining a constant number of batches, presents an intriguing challenge.

As for future work, an interesting direction is to investigate whether our ``checking for best arm elimination'' could be beneficial to other elimination-based algorithms. Additionally, some research~\citep{assadi2020exploration,jin2021optimal} implied a strong correlation between batch complexity in batched bandit, and the memory complexity and pass complexity in streaming bandit. Thus, it could be valuable to assess if our techniques could enhance the results in the field of streaming bandit.

\section*{Acknowledgements}
We would like to thank the anonymous reviewers for their helpful comments. This research is funded by a Singapore Ministry of Education AcRF Tier 2 grant (A-8000423-00-00),
by the National Research Foundation, Singapore under its AI Singapore Program (AISG Award No: AISG-PhD/2021-01004[T]), 
 by the Ministry of Education, Singapore, under its MOE AcRF TIER 3 Grant (MOE-MOET32022-0001), by National Key R\&D Program of China under Grant No. 2023YFF0725100,  by the National Natural Science Foundation of China (NSFC) under Grant No. 62402410 and U22B2060,  by Guangdong Basic and Applied Basic Research Foundation under Grant No. 2023A1515110131, and by Guangzhou Municipal Science and Technology Bureau under Grant No. 2023A03J0667 and 2024A04J4454.

 In particular, T. Jin was supported by a Singapore Ministry of Education AcRF Tier 2 grant (A-8000423-00-00) and by
the National Research Foundation, Singapore under its AI Singapore Program (AISG Award No: AISG-PhD/2021-01004[T]). X. Xiao was supported by the Ministry of Education, Singapore, under its MOE AcRF TIER 3 Grant (MOE-MOET32022-0001).   J. Tang was partially supported by National Key R\&D Program of China under Grant No. 2023YFF0725100,  by the National Natural Science Foundation of China (NSFC) under Grant No. 62402410 and U22B2060, by Guangdong Basic and Applied Basic Research Foundation under Grant No. 2023A1515110131, and  by Guangzhou Municipal Science and Technology Bureau under Grant No. 2023A03J0667 and 2024A04J4454. P. Xu was supported in part by the Whitehead Scholars Program at the Duke University School of Medicine. The views and conclusions in this paper are those of the authors and should not be interpreted as representing any funding agencies.

\bibliographystyle{plainnat}
\bibliography{arxiv.bib}

\newpage
\appendix

\section*{Notations.}
Denote by $\hat{\mu}_{i}^s$ the empirical mean of arm $i$ after its $s$-th pull and by $\hat{\mu}_{i}(t)$ the empirical mean of arm $i$ at time $t$ (i.e., after a total number of $t$ pulls are examined for all arms). We use $\Delta_i=\mu_1-\mu_i$ for the gap between the optimal arm and the $i$-th  arm. Throughout the paper, we use the following asymptotic notation. Specifically we use $f(\delta)\lesssim g(\delta)$ to denote that there exists some $\delta_0$, such that for any $\delta<\delta_0$, $f(\delta)\leq g(\delta)$. Similarly, we use $f(\delta)\gtrsim g(\delta)$ to denote  that there exists some $\delta_0$, such that for any $\delta<\delta_0$, $f(\delta)\geq g(\delta)$.
\section{Computing $w^*({\mu})$ and $T^*({\mu})$}
In this section, we introduce two useful lemmas that help us to calculate $\bm{w}^*(\bm{b})$ and $T^*(\bm{b})$ in Algorithm \ref{alg:tri-bbai} and Algorithm \ref{alg:opt-bbai}, which are also used in \cite{garivier2016optimal}.
For ease of presentation, we first define some useful notations. For every $c\in [0,1]$, we define 
\begin{equation*}
I_{c}(\mu,\mu^\prime):=c d(\mu,c \mu+(1-c)\mu^\prime)+(1-c)d(\mu^\prime,c\mu+(1-c)\mu^\prime).
\end{equation*}
For any arm $i\in [n]\setminus \{1\}$, we define 
\begin{align*}
    g_{i}(x):=(1+x)I_{\frac{1}{1+x}} (\mu_1,\mu_i). 
\end{align*}

\begin{lemma}[{\citealt[Lemma 3]{garivier2016optimal}}]
\label{lem:lemma3ing}
For every $\bm{w}\in \cP_{n}$, 
\begin{align*}
\inf_{\bm{\lambda}\in  \operatorname{Alt} (\bm{\mu})} \left( \sum_{i=1}^{n} w_i d(\mu_i,\lambda_i) \right)=\min_{i\neq 1}\, (w_1+w_i) I_{\frac{w_1}{w_1+w_i}} (\mu_1,\mu_i).
\end{align*}
Besides,
\begin{align}
    T^*(\bm{\mu})^{-1}&=\sup_{\bm{w}\in \cP_{n}} \min_{i\neq 1}\, (w_1+w_i) I_{\frac{w_1}{w_1+w_i}} (\mu_1,\mu_i)\label{eq:compute-T*u},\\
    \intertext{and}
    \bm{w}^*(\bm{\mu}) & =\mathop{\arg\max}_{\bm{w}\in \cP_{n}} \min_{i\neq 1} \,(w_1+w_i) I_{\frac{w_1}{w_1+w_i}}(\mu_1,\mu_i)\label{eq:compute-w*u}.
\end{align}
\end{lemma}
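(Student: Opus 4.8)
The plan is to rewrite the infimum over $\operatorname{Alt}(\bm{\mu})$ as a finite minimum of two-arm subproblems, each solvable in closed form using the exponential-family structure. First I would use the fact that, since arm $1$ is the unique best arm of $\bm{\mu}$, a competing model $\bm{\lambda}$ lies in $\operatorname{Alt}(\bm{\mu})$ exactly when some arm $i\neq 1$ is at least as good as arm $1$; that is, $\operatorname{Alt}(\bm{\mu})=\bigcup_{i\neq 1}\{\bm{\lambda}\colon \lambda_i\geq \lambda_1\}$ up to the tie boundary, which does not change the infimum of a continuous function. Hence
\begin{equation*}
\inf_{\bm{\lambda}\in \operatorname{Alt}(\bm{\mu})} \sum_{j=1}^{n} w_j\, d(\mu_j,\lambda_j)=\min_{i\neq 1}\ \inf_{\bm{\lambda}\colon \lambda_i\geq \lambda_1}\sum_{j=1}^{n} w_j\, d(\mu_j,\lambda_j).
\end{equation*}
For each fixed $i$ the constraint couples only coordinates $1$ and $i$, so every other coordinate can be set to $\lambda_j=\mu_j$ to zero out its nonnegative term, and the inner problem collapses to minimizing $w_1 d(\mu_1,\lambda_1)+w_i d(\mu_i,\lambda_i)$ over $\{\lambda_i\geq \lambda_1\}$.

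Next I would solve this two-variable program using the exponential-family identity $d(\mu,x)=\mu(\theta_\mu-\theta_x)-b(\theta_\mu)+b(\theta_x)$, where $b'(\theta_x)=x$ and $\theta_x$ is the natural parameter with mean $x$. Differentiating gives $\tfrac{\partial}{\partial x}d(\mu,x)=(x-\mu)/b''(\theta_x)$, and since $b''>0$ the derivative of $x\mapsto w_1 d(\mu_1,x)+w_i d(\mu_i,x)$ has the same sign as the affine map $(w_1+w_i)x-(w_1\mu_1+w_i\mu_i)$. Thus this one-dimensional objective is strictly decreasing then strictly increasing, with unique minimizer $x^*=(w_1\mu_1+w_i\mu_i)/(w_1+w_i)=c\mu_1+(1-c)\mu_i$ for $c=w_1/(w_1+w_i)$. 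Substituting $x^*$ and factoring out $(w_1+w_i)$ yields $(w_1+w_i)\bigl[c\,d(\mu_1,x^*)+(1-c)\,d(\mu_i,x^*)\bigr]$, which is precisely $(w_1+w_i)\,I_{c}(\mu_1,\mu_i)$ by the definition of $I_c$; taking the minimum over $i\neq 1$ produces the claimed identity.

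The step I expect to be the main obstacle is justifying that the boundary constraint $\lambda_1=\lambda_i$ is active at the inner optimum, which is what reduces the two-variable problem to the single variable $x$ above. I would argue this by stationarity: the unconstrained minimizer of $w_1 d(\mu_1,\lambda_1)+w_i d(\mu_i,\lambda_i)$ is $(\mu_1,\mu_i)$, and because $\mu_1>\mu_i$ this point is infeasible (it has $\lambda_i<\lambda_1$); any optimum lying strictly inside the feasible half-space would have to be a stationary point, hence equal to $(\mu_1,\mu_i)$, a contradiction, so the optimum must satisfy $\lambda_i=\lambda_1$. This argument relies essentially on the one-parameter exponential-family structure, since it is the identity $\tfrac{\partial}{\partial x}d(\mu,x)=(x-\mu)/b''(\theta_x)$ that guarantees the unique stationary point of each coordinate slice sits at its own mean. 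Finally, the formulas for $T^*(\bm{\mu})^{-1}$ and $\bm{w}^*(\bm{\mu})$ in \eqref{eq:compute-T*u} and \eqref{eq:compute-w*u} follow immediately by taking the supremum and $\arg\max$ over $\bm{w}\in\cP_n$ of the just-established identity and comparing with the definition of $T^*$ in \eqref{eq:T^*}; I would additionally treat the degenerate weights on the boundary of $\cP_n$ (where $w_1=0$ or $w_i=0$) by interpreting $I_{w_1/(w_1+w_i)}$ through a limiting argument so that the identity extends to the whole simplex.
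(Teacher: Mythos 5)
Your proof is correct and follows the standard argument for this result; note that the paper itself offers no proof, importing the statement directly from Garivier and Kaufmann (2016, Lemma~3 and Theorem~5), so there is no in-paper derivation to diverge from, and your route (reducing $\operatorname{Alt}(\bm{\mu})$ to a union of half-spaces, setting $\lambda_j=\mu_j$ for $j\notin\{1,i\}$, showing the constraint $\lambda_1=\lambda_i$ is active, and minimizing over the common value to land on the weighted mean $x^*=c\mu_1+(1-c)\mu_i$) is exactly the one used there. The only step worth tightening is that your stationarity argument for activeness of the constraint presupposes the infimum over the closed half-space is attained; this is easily supplied (the sublevel sets of $x\mapsto d(\mu,x)$ are compact intervals around $\mu$, so the objective is coercive when $w_1,w_i>0$), or sidestepped entirely by using the monotonicity of $d(\mu,\cdot)$ on either side of $\mu$ (in the spirit of \eqref{eq:kl-ineq}) to bound the objective at any feasible $(\lambda_1,\lambda_i)$ with $\lambda_1\leq\lambda_i$ directly by its value at $\lambda_1=\lambda_i=x^*$.
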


\begin{lemma}[{\citealt[Theorem 5]{garivier2016optimal}}]
\label{lem:Theorem5ing}
For $i\in[n]$, let
\begin{equation*}
    w^*_{i}(\bm{\mu})=\frac{x_{i}(y^*)}{\sum_{i\in [n]} x_{i}(y^*)},
\end{equation*}
where $y^*$ is the unique solution of the equation $F_{\bm{\mu}}(y)=1$, and where
\begin{equation*}
    F_{\bm{\mu}}\colon y \mapsto \sum_{i=2}^{n} \frac{d\Big( \mu_1, \frac{\mu_1+x_{i}(y)\mu_i}{1+x_{i}(y)} \Big)}{d\Big(\mu_i, \frac{\mu_1+x_{i}(y)\mu_i}{1+x_{i}(y)} \Big)}.
\end{equation*}
in a continuous, increasing function on $[0,d(\mu_1,\mu_2))$ such that $F_{\bm{\mu}}(y)\rightarrow \infty$ when $y\rightarrow d(\mu_1,\mu_2)$.
\end{lemma}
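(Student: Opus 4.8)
The plan is to derive the characterization directly from the max-min formula of \Cref{lem:lemma3ing} rather than from the raw optimization \eqref{eq:T^*}. By \Cref{lem:lemma3ing} the maximizer $\bm{w}^*(\bm{\mu})$ solves $T^*(\bm{\mu})^{-1}=\sup_{\bm{w}\in\cP_n}\min_{i\neq 1}(w_1+w_i)I_{w_1/(w_1+w_i)}(\mu_1,\mu_i)$. First I would reparametrize by $x_i:=w_i/w_1$ (with the convention $x_1=1$), which rewrites each inner term as $w_1\,g_i(x_i)$, where $g_i(x)=(1+x)I_{1/(1+x)}(\mu_1,\mu_i)$ is the function already defined in the excerpt. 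Writing $m_i(x):=(\mu_1+x\mu_i)/(1+x)$ for the $(1,x)$-weighted mean, expanding $I$ gives the convenient form $g_i(x)=d(\mu_1,m_i(x))+x\,d(\mu_i,m_i(x))$, and the simplex constraint becomes $w_1\big(\sum_{i\in[n]}x_i\big)=1$.

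Next I would establish the two structural facts that drive everything. \textbf{Equalization:} holding $w_1$ fixed, each term $w_1 g_i(x_i)$ depends only on $w_1$ and $w_i$ and is strictly increasing in $w_i$; hence the mass $1-w_1$ must be split among arms $2,\dots,n$ so that all $n-1$ terms coincide at the optimum, since otherwise shifting an infinitesimal amount of weight toward the arm attaining the minimum strictly increases the minimum. Thus there is a common value $y$ with $g_i(x_i)=y$ for all $i\neq 1$. \textbf{Invertibility:} the crux is the derivative identity $g_i'(x)=d(\mu_i,m_i(x))$. Differentiating $g_i(x)=d(\mu_1,m_i)+x\,d(\mu_i,m_i)$ and using the exponential-family fact $\partial_m d(\mu,m)=(m-\mu)/V(m)$ (with $V$ the variance function), every term proportional to $m_i'(x)$ cancels because $(m_i-\mu_1)+x(m_i-\mu_i)=(1+x)m_i-\mu_1-x\mu_i=0$ by the definition of $m_i$, leaving only $d(\mu_i,m_i)$. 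This shows $g_i$ is continuous and strictly increasing on $[0,\infty)$ with $g_i(0)=0$ and $g_i(x)\to d(\mu_1,\mu_i)$ as $x\to\infty$, so for each $y\in[0,d(\mu_1,\mu_2))$ the inverse $x_i(y):=g_i^{-1}(y)$ exists uniquely (using $d(\mu_1,\mu_2)\le d(\mu_1,\mu_i)$), matching the excerpt's notation, and moreover $x_i'(y)=1/d(\mu_i,m_i)$.

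It then remains to optimize over the single scalar $y$. The constraint forces $w_1=1/\sum_{i\in[n]}x_i(y)$, hence $w_i=x_i(y)/\sum_{i\in[n]}x_i(y)$, which is exactly the claimed formula, and the objective reduces to $V(y)=y/\sum_{i\in[n]}x_i(y)$. Reading off $x_i(y)=(y-d(\mu_1,m_i))/d(\mu_i,m_i)$ from $y=g_i(x_i)$ and substituting $x_i'(y)=1/d(\mu_i,m_i)$ into the stationarity condition $V'(y)=0$, the expression simplifies algebraically to $\sum_{i=2}^{n} d(\mu_1,m_i)/d(\mu_i,m_i)=1$, i.e.\ $F_{\bm{\mu}}(y^*)=1$. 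To see that $y^*$ is well defined and unique I would verify the stated properties of $F_{\bm{\mu}}$: continuity is inherited from $x_i(\cdot)$; as $y$ grows each $m_i$ moves from $\mu_1$ toward $\mu_i$, so $d(\mu_1,m_i)$ increases while $d(\mu_i,m_i)$ decreases, making $F_{\bm{\mu}}$ strictly increasing; $F_{\bm{\mu}}(0)=0$; and as $y\uparrow d(\mu_1,\mu_2)$ the $i=2$ term diverges because $d(\mu_2,m_2)\to 0$ while $d(\mu_1,m_2)\to d(\mu_1,\mu_2)>0$. Monotone continuity from $0$ to $\infty$ then yields a unique root.

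I expect the equalization step to be the main obstacle to make fully rigorous, because it requires ruling out degenerate or boundary optima (some $w_i=0$, or $w_1\to 0$) and confirming that the sup is attained at an interior point where the reallocation argument is valid and the inner minimum is differentiable in the relevant directions. The derivative identity $g_i'(x)=d(\mu_i,m_i)$ is the computational heart but is routine once the exponential-family formula for $\partial_m d$ is available, and the properties of $F_{\bm{\mu}}$ follow from elementary monotonicity of $d$ in its second argument.
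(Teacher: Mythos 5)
This lemma is imported verbatim from \citet{garivier2016optimal} (their Theorem 5), so the paper supplies no proof of its own; your argument is a correct reconstruction of the proof in that source and follows essentially the same route (reparametrization $x_i=w_i/w_1$, the identity $g_i'(x)=d(\mu_i,m_i(x))$ via cancellation of the $m_i'$ terms, equalization of the inner terms, and the first-order condition in $y$ collapsing to $F_{\bm{\mu}}(y)=1$). The one step you flag as delicate --- that the supremum in \Cref{lem:lemma3ing} is attained at an interior point with all $w_i>0$, which legitimizes the equalization argument --- is exactly what Garivier and Kaufmann establish separately (their Lemma 4) before running this computation, so your sketch is complete modulo that standard ingredient.
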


\section{Proof of Theorems in \Cref{sec:asym}}

\subsection{Proof of \Cref{thm:asym-0}}
\begin{proof}[Proof of \Cref{thm:asym-0}]
The proof of \Cref{thm:asym-0} requires the following two Lemmas, which guarantees the error returned by  Line \ref{line15-1} and  Line \ref{line21-1} of Algorithm \ref{alg:tri-bbai} respectively.
\begin{lemma}\citep[Proposition 12]{garivier2016optimal}
\label{lem2016gar}
Let $\bm{\mu}$ be the exponential bandit model. Let $\delta \in (0,1)$. For sufficiently small $\delta>0$, the Chernoff's stopping rule Line \ref{line14-1} of Algorithm \ref{alg:tri-bbai} with the threshold
\begin{align*}
    \beta(t,\delta/2)=\log \bigg(\frac{2\log (2/\delta) t^{\alpha} }{\delta}\bigg),
\end{align*}
ensures that $\PP_{\bm{\mu}}(i^{*}(N_\delta)\neq 1)\leq \delta/2$.
\end{lemma}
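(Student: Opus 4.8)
\emph{Proof plan.} The plan is to reduce the error event at Line~\ref{line15-1} to a generalized likelihood ratio (GLR) over-threshold event, and then to control the latter with a time-uniform Kullback--Leibler deviation inequality for the one-parameter exponential family, calibrated so that the total failure probability is at most $\delta/2$. This is exactly the structure of \citet[Proposition~12]{garivier2016optimal}, which I would reproduce at the level of its key steps.

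First I would characterize the error event. The algorithm returns at Line~\ref{line15-1} only if the Chernoff condition $\min_{j\neq i^*(\tau)}Z_j(\tau)\geq \beta(\tau,\delta/2)$ holds (Line~\ref{line14-1}), and it returns a wrong arm only if the empirical best satisfies $i^*(\tau)=a$ for some $a\neq 1$. Since arm $1$ is then one of the indices $j\neq a$ over which the minimum is taken, the stopping condition forces
\begin{equation*}
Z_{a,1}(\tau)=d(\hat{\mu}_a(\tau),\hat{\mu}_{a1}(\tau))N_a(\tau)+d(\hat{\mu}_1(\tau),\hat{\mu}_{a1}(\tau))N_1(\tau)\geq \beta(\tau,\delta/2),
\end{equation*}
and by $i^*(\tau)=a$ we are in the regime $\hat{\mu}_a(\tau)\geq \hat{\mu}_1(\tau)$ in which the empirical ranking of $a$ over $1$ is wrong.

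The crux algebraic step is an upper bound on $Z_{a,1}$. By the defining property of $\hat{\mu}_{a1}$ as the weighted empirical mean in \eqref{eq:def:hmuij}, the map $c\mapsto N_a d(\hat{\mu}_a,c)+N_1 d(\hat{\mu}_1,c)$ is minimized at $c=\hat{\mu}_{a1}$, so since $\hat{\mu}_a(\tau)\geq \hat{\mu}_1(\tau)$ we may write $Z_{a,1}(\tau)=\inf_{\lambda_a\leq \lambda_1}\big(N_a(\tau)d(\hat{\mu}_a(\tau),\lambda_a)+N_1(\tau)d(\hat{\mu}_1(\tau),\lambda_1)\big)$. Because the true means satisfy $\mu_a<\mu_1$, the pair $(\mu_a,\mu_1)$ is feasible for this infimum, and substituting it gives
\begin{equation*}
Z_{a,1}(\tau)\leq N_a(\tau)d(\hat{\mu}_a(\tau),\mu_a)+N_1(\tau)d(\hat{\mu}_1(\tau),\mu_1).
\end{equation*}
Hence the error event is contained in $\bigcup_{a\neq 1}\big\{\exists t:\ N_a(t)d(\hat{\mu}_a(t),\mu_a)+N_1(t)d(\hat{\mu}_1(t),\mu_1)\geq \beta(t,\delta/2)\big\}$, where the union over $t$ appears because $\tau$ is a data-dependent time.

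Finally I would control each event by a time-uniform deviation bound: for a one-parameter exponential family one has an anytime inequality of the form $\PP_{\bm\mu}\big(\exists t:\ N_i(t)\,d(\hat{\mu}_i(t),\mu_i)\geq x\big)\lesssim e^{-x}\,\mathrm{poly}(x)$, holding uniformly over the random pull count $N_i(t)$. Splitting the two-arm sum and union-bounding over the $n-1$ choices of $a$, the threshold $\beta(t,\delta/2)=\log\!\big(2\log(2/\delta)\,t^{\alpha}/\delta\big)$ is precisely the calibration whose $t^{\alpha}$ and $\log(2/\delta)$ factors absorb the polynomial and union overhead, leaving a total bound of at most $\delta/2$. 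The main obstacle is exactly this last step: since $\tau$ is random and the inequality must hold simultaneously for all possible values of the pull counts, a fixed-time Chernoff bound does not suffice, and one needs a self-normalized / method-of-mixtures (or peeling) argument for the uniform-in-time control. This is also the source of the ``sufficiently small $\delta$'' qualifier and of the restriction $\alpha\in(1,e/2]$ inherited from the deviation inequality of \citet{garivier2016optimal}.
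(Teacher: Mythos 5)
The paper does not actually prove this lemma: it is imported verbatim from \citet[Proposition 12]{garivier2016optimal}, and your reconstruction follows exactly the argument of that proposition --- identify the error event with the GLR statistic $Z_{a,1}(\tau)$ exceeding the threshold for some empirical best $a\neq 1$, rewrite $Z_{a,1}$ as an infimum over alternative models so that feasibility of the true means yields the upper bound $N_a(\tau)d(\hat{\mu}_a(\tau),\mu_a)+N_1(\tau)d(\hat{\mu}_1(\tau),\mu_1)$, and close with a time-uniform KL deviation inequality together with a union bound over $a\neq 1$ and over time. Your proposal is correct and is essentially the same proof; the only cosmetic remark is that the ``sufficiently small $\delta$'' qualifier arises because the paper's threshold replaces the constant $C(\alpha,n)$ of the original proposition by the factor $\log(2/\delta)$, which dominates that constant only once $\delta$ is small enough, whereas the upper restriction $\alpha\leq e/2$ is not needed for this lemma (only $\alpha>1$ is, for summability in the uniform-in-time step).
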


Recall $\hat{\mu}_{i}^s$ is the empirical mean of arm $i$ after its $s$-th pull. Let $\cE_1=\{\forall s\geq L_3 \ \text{and} \  i\in [n]: \hat{\mu}_{i}^{s} \in [\mu_i-\epsilon,\mu_i+\epsilon] \}$. The following lemma shows that $\PP(\cE^c_1)\leq \delta/2$, where $\cE^c$ denotes the complement event of $\cE$.
\begin{lemma}
\label{lem:finalre}
    For sufficiently small $\delta>0$\footnote{If event $E$ occur for a sufficiently small $\delta>0$, it signifies that there exists a $\delta_0\in(0,1)$ such that for all $\delta \leq \delta_0$, event $E$ consistently holds.}, we have
    \begin{align*}
        \PP(\cE^c_1)\leq \delta/2.
    \end{align*}
\end{lemma}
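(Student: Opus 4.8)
\textbf{Proof plan for Lemma~\ref{lem:finalre}.}

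The plan is to bound the probability of the bad event $\cE_1^c$ by a union bound over all arms and over all numbers of pulls $s \geq L_3$, then control each individual deviation with a concentration inequality for the empirical mean of a one-parameter exponential family distribution. First I would fix an arm $i \in [n]$ and a pull count $s \geq L_3$, and invoke a Chernoff/sub-Gaussian-type concentration bound (which holds because the reward distributions belong to a one-parameter exponential family with bounded support, so the empirical mean $\hat\mu_i^s$ concentrates around its true mean $\mu_i$). This gives a bound of the form $\PP\big(|\hat\mu_i^s - \mu_i| > \epsilon\big) \leq 2\exp(-s \cdot c(\epsilon))$ for some rate $c(\epsilon)$ governed by the KL divergence $d(\mu_i \pm \epsilon, \mu_i)$; by Pinsker-type lower bounds $c(\epsilon) \gtrsim \epsilon^2$.

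Next I would sum this tail over all $s \geq L_3$ and all $n$ arms. The sum over $s$ is a geometric series dominated by its first term, giving roughly $\tfrac{2n}{1-e^{-c(\epsilon)}}\exp(-L_3 \cdot c(\epsilon))$, which is of order $\tfrac{n}{c(\epsilon)}\exp(-L_3\,c(\epsilon))$. The crux is to substitute the specified parameter values $\epsilon = 1/\log\log(\delta^{-1})$ and $L_3 = (\log \delta^{-1})^2$ and verify that the resulting exponent $L_3\,c(\epsilon) \gtrsim (\log\delta^{-1})^2 \cdot (\log\log\delta^{-1})^{-2}$ grows fast enough that the whole expression is eventually at most $\delta/2$. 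Since $(\log\delta^{-1})^2/(\log\log\delta^{-1})^2 = \omega(\log\delta^{-1})$, we have $\exp(-L_3\,c(\epsilon)) = o(\delta)$, so even after multiplying by the polynomial prefactor $n/c(\epsilon) \approx n(\log\log\delta^{-1})^2$ the bound remains $o(\delta)$, and in particular is below $\delta/2$ for all sufficiently small $\delta$. This is precisely where the ``sufficiently small $\delta$'' qualifier in the statement is used.

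The main obstacle I anticipate is making the geometric-series summation over the unbounded range $s \geq L_3$ fully rigorous while keeping the dependence on $\epsilon$ explicit, since $\epsilon \to 0$ as $\delta \to 0$ and this weakens the per-sample concentration rate $c(\epsilon) \to 0$. I must ensure that the decay from the growing lower limit $L_3$ decisively dominates the simultaneous degradation of $c(\epsilon)$; the chosen scaling $L_3 = (\log\delta^{-1})^2$ against $\epsilon^{-2} = (\log\log\delta^{-1})^2$ is exactly what makes the product $L_3 \epsilon^2$ diverge faster than $\log\delta^{-1}$, and I would make this comparison the central computation. A minor technical point is to justify applying the concentration bound uniformly; the cleanest route is a maximal inequality or simply the crude union bound over $s$, which suffices here because the tail is summable and the geometric factor is harmless.
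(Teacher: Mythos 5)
Your proposal is correct in substance, and the decisive computation you isolate -- that $L_3\,\epsilon^2 = (\log\delta^{-1})^2/(\log\log\delta^{-1})^2 = \omega(\log\delta^{-1})$, so the exponential term is $o(\delta^k)$ for every $k$ and swallows the polylogarithmic prefactor -- is exactly what makes the lemma true for sufficiently small $\delta$. The one place you diverge from the paper is in how uniformity over $s\geq L_3$ is handled. The paper does not print a standalone proof of this lemma, but its evident route (used verbatim for the analogous event in the proof of Lemma~\ref{lem:cE0}, with $L_1$ in place of $L_3$) is the maximal inequality of Lemma~\ref{lem:maximal-inequality}: a Doob-type bound for the exponential martingale that controls $\PP(\exists\, s\geq L_3:\ \hat\mu_i^s\leq \mu_i-\epsilon)$ by the single term $e^{-L_3\epsilon^2/(2V)}$, with no summation over $s$ and no $1/c(\epsilon)$ prefactor. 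Your union bound plus geometric series is more elementary and costs only the harmless factor $(1-e^{-c(\epsilon)})^{-1}\approx (\log\log\delta^{-1})^2$, which your asymptotic comparison correctly absorbs; the maximal inequality buys a cleaner bound and avoids having to argue that the degradation of $c(\epsilon)$ as $\epsilon\to 0$ does not spoil the series. One small correction: the rewards in Section~\ref{sec:asym} are a one-parameter exponential family, not assumed to have bounded support, so the concentration step should be justified via the Chernoff bound $\PP(\hat\mu_i^s \geq \mu_i+\epsilon)\leq e^{-s\, d(\mu_i+\epsilon,\mu_i)}$ together with the variance-based lower bound $d(\mu_i\pm\epsilon,\mu_i)\geq \epsilon^2/(2V)$ over the compact mean interval (as in the paper's treatment of Lemma~\ref{lem:maximal-inequality}), rather than by sub-Gaussianity from boundedness; since you already invoke the KL rate and a Pinsker-type bound, this is a presentational fix, not a gap.
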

If $\cE_1$ happens,  we have that at Line \ref{line21-1}  of Algorithm \ref{alg:tri-bbai},
$\hat{\mu}_{1}(t)\geq \mu_1-\epsilon \gtrsim \mu_i+\epsilon\geq \hat{\mu}_{i}(t)$ for all $i>1$, which means $i^*(t)=1$.
Combining \Cref{lem2016gar} and \Cref{lem:finalre}, \Cref{thm:asym} follows immediately. 
\end{proof}

\subsection{Proof of \Cref{thm:asym}}
\begin{proof}[Proof of \Cref{thm:asym}.]
Recall in Algorithm \ref{alg:tri-bbai}, $b_1^q=\hat{\mu}_{i^{*}(t)}(t)-\epsilon$ and $b_i^q=\hat{\mu}_{i}(t)+\epsilon$ for all $i\in [n]\setminus \{i^{*}(t)\}$, where $t=\tau_q$. 
Let $\cE_0$ be the intersection of the following four events:
\begin{subequations}\label{def:event_E0}
\begin{align}
    \cE_0^1&=\cap_{q\geq 1}\{ b_1^q\in [\mu_1-3\epsilon/2,\mu_1-\epsilon/2] \}, \label{def:event_E0_1} \\
    \cE_0^2&=\cap_{q\geq 1}\{ \text{for all } i\in [n]\setminus \{1\}, b_i^q\in [\mu_i+\epsilon/2,\mu_i+3\epsilon/2]\}, \label{def:event_E0_2}\\
    \cE_0^3&=\cap_{q\geq 0}\{ \hat{\mu}_{1}(\tau_q)\geq \mu_1-\epsilon/2\}, \label{def:event_E0_3}\\
    \cE_0^4&=\cap_{q\geq 0}\{ \text{for all } i\in [n]\setminus \{1\}, \hat{\mu}_{i}(\tau_q)\leq \mu_i+\epsilon/2\}.\label{def:event_E0_4}
\end{align}
\end{subequations}
Here \eqref{def:event_E0_1} and \eqref{def:event_E0_2} ensure that the estimation of $\bm{w}^*(\bm{b}^q)$ and $T^*(\bm{b}^q)$ is close to $\bm{w}^*(\bm{\mu})$ and  $T^*(\bm{\mu})$, and \eqref{def:event_E0_3} and \eqref{def:event_E0_4} ensure that arm 1 has the largest average reward at time $\tau_q$. 
\begin{lemma}
\label{lem:cE0}
For sufficiently small $\delta>0$, we have
\begin{align*}
    \PP((\cE_0^c)\leq \frac{1}{(\log \delta^{-1})^2}.
\end{align*}
\end{lemma}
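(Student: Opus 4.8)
The plan is to reduce all four sub-events to a single uniform concentration event on the empirical means and then apply a union bound. The crucial observation is that every quantity appearing in $\cE_0^1,\dots,\cE_0^4$ is the empirical mean $\hat\mu_i^s$ of some arm $i$ evaluated at a number of pulls $s$ that, although random (it depends on the data-dependent allocation $T_i^q$), always lies in the deterministic window $[L_1,L_2]$. Indeed, after Stage~I each arm has exactly $L_1$ pulls, pulls only accumulate across batches, and the cap $T_i^q\le L_2$ in \eqref{eq:Ti} guarantees $N_i(\tau_q)\in[L_1,L_2]$ for every batch $q$ and every arm $i$. I would therefore define
\begin{equation*}
\mathcal{G}=\Big\{\forall i\in[n],\ \forall s\in[L_1,L_2]:\ |\hat\mu_i^s-\mu_i|\le \epsilon/2\Big\},
\end{equation*}
and first prove the deterministic inclusion $\mathcal{G}\subseteq \cE_0$, so that $\cE_0^c\subseteq\mathcal{G}^c$.

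On $\mathcal{G}$ the events $\cE_0^3$ and $\cE_0^4$ are immediate, since $\hat\mu_i(\tau_q)=\hat\mu_i^{N_i(\tau_q)}$ with $N_i(\tau_q)\in[L_1,L_2]$. Because $\epsilon=1/\log\log\delta^{-1}\to0$, for sufficiently small $\delta$ we have $\epsilon<\Delta_2\le\Delta_i$, so $\cE_0^3$ and $\cE_0^4$ together force $\hat\mu_1(\tau_q)\ge\mu_1-\epsilon/2>\mu_i+\epsilon/2\ge\hat\mu_i(\tau_q)$ for all $i\neq1$; hence $i^*(\tau_q)=1$ at every batch. Substituting $i^*(\tau_{q-1})=1$ into $b_1^q=\hat\mu_{i^*(\tau_{q-1})}(\tau_{q-1})-\epsilon$ and $b_i^q=\hat\mu_i(\tau_{q-1})+\epsilon$ then yields $\cE_0^1$ and $\cE_0^2$ directly from $|\hat\mu_i-\mu_i|\le\epsilon/2$, completing the inclusion.

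It remains to bound $\PP(\mathcal{G}^c)$ by a union bound over the $n$ arms and the at most $L_2$ admissible values of $s$. For the one-parameter exponential family, a Chernoff/KL deviation bound gives, for each fixed $i$ and $s$,
\begin{equation*}
\PP\big(|\hat\mu_i^s-\mu_i|>\epsilon/2\big)\le 2\exp\big(-c\,s\,\epsilon^2\big)
\end{equation*}
for a constant $c>0$ coming from the local curvature of $d(\cdot,\mu_i)$ on a compact mean interval. Since the largest term occurs at $s=L_1$, summing gives $\PP(\mathcal{G}^c)\le 2nL_2\exp(-c\,L_1\,\epsilon^2)$. Writing $x=\log\delta^{-1}$ and plugging in $L_1=\sqrt{x}$, $L_2=x\log x/n$, $\epsilon=1/\log x$ yields $nL_2=x\log x$ and $L_1\epsilon^2=\sqrt{x}/(\log x)^2$, so the bound becomes $2x\log x\cdot\exp\!\big(-c\sqrt{x}/(\log x)^2\big)$.

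Finally I would check that this decays faster than $1/(\log\delta^{-1})^2=1/x^2$: multiplying by $x^2$ and taking logarithms, it suffices that $\log2+3\log x+\log\log x-c\sqrt{x}/(\log x)^2\le0$, which holds for all large $x$ because $\sqrt{x}/(\log x)^2$ dominates every power of $\log x$; hence $\PP(\cE_0^c)\le\PP(\mathcal{G}^c)\le1/(\log\delta^{-1})^2$ for sufficiently small $\delta$. The one genuine subtlety — the main obstacle — is that the batch times $\tau_q$ and the pull counts $N_i(\tau_q)$ are random and data-dependent (through $\bm w^*(\bm b^q)$ and $T^*(\bm b^q)$), so one cannot fix $s$ before invoking concentration. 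The uniform-over-$s\in[L_1,L_2]$ event $\mathcal{G}$ is exactly what decouples the allocation randomness from the deviation argument, and the deterministic two-sided bound $L_1\le N_i(\tau_q)\le L_2$ is what keeps this window finite; everything else is a union bound calibrated against the chosen growth rates of $L_1,L_2,\epsilon$.
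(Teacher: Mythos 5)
Your proof is correct, and the overall skeleton is the same as the paper's: define a uniform concentration event on the empirical means, show deterministically that it implies $\cE_0$ (via $i^*(\tau_q)=1$ for all $q$, then substituting into the definitions of $b_i^q$), and bound its complement. The deterministic inclusion step matches the paper's argument with the events $\cA_1,\cA_2$ almost verbatim, including the requirement $\epsilon<\min_{i\neq 1}\Delta_i$. Where you diverge is the concentration tool. The paper proves a martingale maximal inequality (its \Cref{lem:maximal-inequality}, a Doob-type bound $\PP(\exists\, s\geq L_1:\hat\mu^s\leq x)\leq e^{-L_1(x-\mu)^2/(2V)}$) that controls all sample sizes $s\geq L_1$ simultaneously at no cost beyond the single-$s$ bound at $s=L_1$; this only requires the lower bound $N_i(\tau_q)\geq L_1$ and yields the factor $n\cdot\frac{1}{n(\log\delta^{-1})^2}$ directly. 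You instead use a pointwise Chernoff bound and a union bound over the at most $L_2$ admissible pull counts, which is more elementary but forces you to also establish the upper bound $N_i(\tau_q)\leq L_2$ (which does hold, since the cap in \eqref{eq:Ti} gives total pulls $\max_p T_i^p\leq\max\{L_1,L_2\}=L_2$ for small $\delta$) and costs an extra factor of $L_2=x\log x/n$ in the probability. Your final calibration check — that $2x\log x\cdot\exp(-c\sqrt{x}/(\log x)^2)\leq x^{-2}$ for large $x$ because $\sqrt{x}/(\log x)^2$ dominates every power of $\log x$ — is right, so the extra polynomial factor is harmless here. The trade-off: your route avoids proving the maximal inequality but is tied to the finite window $[L_1,L_2]$, whereas the paper's tool also covers unbounded windows (which it needs elsewhere, e.g.\ for the event $\cE_1$ over all $s\geq L_3$ in \Cref{lem:finalre}).
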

\begin{lemma}
\label{lem:u-b}
If $\cE_0$ holds, then for sufficiently  small $\delta>0$, we have
\begin{enumerate}
    \item $T_i=\alpha w_{i}^{*}(\bm{\mu})T^*(\bm{\mu}) \log (1/\delta)+o(\log(1/\delta))$;
    \item $|w^*_{i}(\bm{b}^{2})-w^*_{i}(\bm{b}^{1})|\leq 1/\sqrt{n}$  for all  $i\in [n]$;
    \item $\min_{j\notin [n]\setminus \{i^*(\tau)\}} Z_{j}(\tau)\geq \beta(\tau,\delta/2)$.
\end{enumerate}
\end{lemma}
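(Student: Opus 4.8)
The plan is to base all three claims on a single continuity principle. On the event $\cE_0$, the surrogate instance $\bm{b}^q$ satisfies $\|\bm{b}^q-\bm{\mu}\|_\infty\leq 3\epsilon/2$ for every $q\geq 1$ by \eqref{def:event_E0_1}--\eqref{def:event_E0_2}, and since $\epsilon=1/\log\log\delta^{-1}\to 0$ as $\delta\to 0$, I would invoke continuity of the maps $\bm{\nu}\mapsto\bm{w}^*(\bm{\nu})$ and $\bm{\nu}\mapsto T^*(\bm{\nu})$ at $\bm{\mu}$. This continuity holds because $\mu_1>\mu_2$ makes the best arm unique and stable under perturbations of size $<\Delta_2/2$ (so arm $1$ remains the maximizer of $\bm{b}^q$, consistent with the re-indexing), and because the optimizer of the inner problem in \Cref{lem:lemma3ing} is unique with the explicit, smooth characterization of \Cref{lem:Theorem5ing} making the dependence quantitative. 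Consequently $w_i^*(\bm{b}^q)=w_i^*(\bm{\mu})+o(1)$ and $T^*(\bm{b}^q)=T^*(\bm{\mu})+o(1)$ uniformly in $q$, where $o(1)\to 0$ with $\epsilon$.

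For Part 1, I would first verify that the cap $L_2$ in \eqref{eq:Ti} never binds for small $\delta$: since $w_i^*T^*\leq T^*(\bm{\mu})\leq 2H(\bm{\mu})$ is a constant while $L_2/\log\delta^{-1}=\log\log\delta^{-1}/n\to\infty$, we have $\alpha w_i^*(\bm{b}^q)T^*(\bm{b}^q)\log\delta^{-1}<L_2$ eventually, so $T_i^q=\alpha w_i^*(\bm{b}^q)T^*(\bm{b}^q)\log\delta^{-1}$. Feeding the continuity estimate into this identity gives $T_i^q=\alpha w_i^*(\bm{\mu})T^*(\bm{\mu})\log\delta^{-1}+o(\log\delta^{-1})$ for every batch $q$, and since $L_1=\sqrt{\log\delta^{-1}}\ll T_i^q$, the maximum over $q$ that defines $T_i=N_i(\tau)$ preserves this, yielding Part 1. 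Part 2 is immediate from the same estimate applied at $q=1,2$: $|w_i^*(\bm{b}^2)-w_i^*(\bm{b}^1)|\leq|w_i^*(\bm{b}^2)-w_i^*(\bm{\mu})|+|w_i^*(\bm{\mu})-w_i^*(\bm{b}^1)|=o(1)\leq 1/\sqrt{n}$ for $\delta$ small enough.

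Part 3 is the crux. First I would record the algebraic identity $Z_{1j}(\tau)=(N_1(\tau)+N_j(\tau))\,I_{c}(\hat{\mu}_1(\tau),\hat{\mu}_j(\tau))$ with $c=N_1(\tau)/(N_1(\tau)+N_j(\tau))$, which follows by expanding \eqref{def:chernoff_Z} with the weighted mean \eqref{eq:def:hmuij}; on $\cE_0$ one also checks $i^*(\tau)=1$ because $\hat{\mu}_1(\tau)\geq\mu_1-\epsilon/2>\mu_j+\epsilon/2\geq\hat{\mu}_j(\tau)$ once $\epsilon<\Delta_j$. Substituting $N_i(\tau)=T_i$ from Part 1 gives $c=w_1^*/(w_1^*+w_j^*)+o(1)$ and $N_1+N_j=\alpha(w_1^*+w_j^*)T^*(\bm{\mu})\log\delta^{-1}(1+o(1))$. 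Using $\cE_0^3,\cE_0^4$ to bound $\hat{\mu}_1\geq\mu_1-\epsilon/2$ and $\hat{\mu}_j\leq\mu_j+\epsilon/2$, together with monotonicity of $I_c$ (increasing in its first and decreasing in its second argument when the first exceeds the second) and its joint continuity in $(c,\mu,\mu')$, I obtain $I_c(\hat{\mu}_1,\hat{\mu}_j)\geq I_{w_1^*/(w_1^*+w_j^*)}(\mu_1,\mu_j)-o(1)$. Combining these and invoking the defining identity $T^*(\bm{\mu})^{-1}=\min_{i\neq 1}(w_1^*+w_i^*)I_{w_1^*/(w_1^*+w_i^*)}(\mu_1,\mu_i)$ from \Cref{lem:lemma3ing}, the product collapses to
\[
Z_{1j}(\tau)\geq \alpha T^*(\bm{\mu})\log\delta^{-1}\cdot T^*(\bm{\mu})^{-1}(1-o(1))=\alpha\log\delta^{-1}(1-o(1)).
\]
Finally, since $\tau=\sum_i T_i=\alpha T^*(\bm{\mu})\log\delta^{-1}(1+o(1))$ is of order $\log\delta^{-1}$, the threshold expands as $\beta(\tau,\delta/2)=\log(2\log(2/\delta)\tau^\alpha/\delta)=\log\delta^{-1}+O(\log\log\delta^{-1})=\log\delta^{-1}(1+o(1))$; because $\alpha>1$ is a fixed constant, $\alpha\log\delta^{-1}(1-o(1))>\log\delta^{-1}(1+o(1))$ for all sufficiently small $\delta$, giving $Z_j(\tau)\geq\beta(\tau,\delta/2)$ for every $j\neq i^*(\tau)$ and hence the claimed minimum bound. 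The main obstacle is making the continuity-and-monotonicity estimates in Part 3 quantitative enough that the accumulated $o(1)$ and $O(\epsilon)$ errors are genuinely dominated; the slack comes entirely from $\alpha>1$, so the argument would fail at $\alpha=1$, which is precisely why \Cref{thm:asym} only claims the bound $\alpha T^*(\bm{\mu})$.
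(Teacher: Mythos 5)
Your Parts 1 and 2 follow the paper's own argument almost verbatim (check that the $L_2$ cap never binds because $\alpha w_i^*(\bm{b}^q)T^*(\bm{b}^q)$ is bounded on a compact neighborhood of $\bm{\mu}$, then invoke continuity of $\bm{w}^*$ and $T^*$ as $\bm{b}^q\to\bm{\mu}$), so there is nothing to flag there. Part 3, however, is where you genuinely diverge. The paper does \emph{not} pass back to the true instance $\bm{\mu}$ by continuity; it exploits the fact that the $\pm\epsilon$ shifts in the definition of $\bm{b}^{q'}$ force the nesting $\hat{\mu}_{i}(\tau)\leq b_i^{q'}<b_1^{q'}\leq\hat{\mu}_{1}(\tau)$ on $\cE_0$, and then runs a three-case analysis on the location of $\hat{\mu}_{1i}(\tau)$ using the KL monotonicity inequality \eqref{eq:kl-ineq} to show that the empirical transportation cost dominates $w_1^*(\bm{b}^{q'})d(b_1^{q'},\lambda)+w_i^*(\bm{b}^{q'})d(b_i^{q'},\lambda)$ at the minimizing $\lambda$, which by \eqref{eq:compute-T*u} equals at least $T^*(\bm{b}^{q'})^{-1}$ \emph{exactly}. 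This yields $Z_j(\tau)\geq\alpha\log\delta^{-1}$ with no multiplicative loss, which is what the paper needs at the end of its proof to accommodate $\alpha$ as small as $1+6\log\log\delta^{-1}/\log\delta^{-1}$ (and hence the remark after \Cref{thm:asym} about driving $\alpha\to1$). Your route — the identity $Z_{1j}(\tau)=(N_1+N_j)I_c(\hat{\mu}_1,\hat{\mu}_j)$, monotonicity of $I_c$ in its arguments, and joint continuity in $(c,\mu,\mu')$ — is correct (the monotonicity claim does hold for exponential families, by the same envelope cancellation that makes $\lambda_{w}$ the inner minimizer), and it is arguably cleaner, but it only delivers $Z_{1j}(\tau)\geq\alpha\log\delta^{-1}(1-o(1))$ where the $o(1)$ is of order $\epsilon=1/\log\log\delta^{-1}$. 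That suffices for the lemma as stated, since $\alpha\in(1,e/2]$ is a fixed constant and $(\alpha-1)\log\delta^{-1}$ eventually dominates both the $o(1)$ loss and the $O(\log\log\delta^{-1})$ excess in $\beta(\tau,\delta/2)$ — and you correctly identify that your slack comes entirely from $\alpha>1$ — but it would not support the sharper regime the paper's exact bound permits. In short: your proof is valid for the stated lemma via a softer continuity argument; the paper's case analysis buys a lossless inequality at the price of more bookkeeping.
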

From \Cref{lem:u-b}, if $\delta$ is sufficiently small and $\cE_0$ holds, then Algorithm \ref{alg:tri-bbai} will return at Line \ref{line15-1}. Besides, we note that the total number of pulls of any arm is no more than $L_3=(\log (\delta^{-1}))^2$ times. Therefore,
\begin{align}
\label{eq:adhasda}
    \EE[N_{\delta}] & \leq nL_1+ \ind\{ \cE_0\} \cdot (\alpha T^*(\bm{\mu}) \log \delta^{-1}+o(n\log(1/\delta))) +\ind\{ \cE_0^c\}\cdot ( L_2 +nL_3) \notag \\
    & \leq \ind\{ \cE_0\} \cdot \alpha T^*(\bm{\mu}) \log \delta^{-1}+o(n\log(1/\delta))+\ind\{\cE_0^c\}nL_3 \notag \\
   & \lesssim \ind\{ \cE_0\} \cdot \alpha T^*(\bm{\mu}) \log \delta^{-1}+n \notag \\
   &  \leq  \alpha T^*(\bm{\mu})\log \delta^{-1}+n.
\end{align}
We further have
 \begin{align*}
     \lim_{\delta \rightarrow 0}\frac{\EE[N_{\delta}]}{\log \delta^{-1}}\leq \alpha T^*(\bm{\mu}).
 \end{align*}
 This completes the proof.
\end{proof}
\subsection{Proof of Theorem \ref{thm:asym-batch}}
\begin{proof}[Proof of Theorem \ref{thm:asym-batch}]
Stage I costs one batch. For the Stage II, note that $\PP(\cE_{0}^{c})\leq \frac{1}{\log (\delta^{-1})^2}$ and if $\cE_{0}$ occurs, then  $|w^*_{i}(\bm{b}^{2})-w^*_{i}(\bm{b}^{1})|\leq 1/\sqrt{n}$  for all  $i\in [n]$ (from the second statement of Lemma \ref{lem:u-b}), which means Stage II costs 2 batches. Moreover, from \Cref{lem:u-b}, if $\delta$ is sufficiently small and $\cE_0$ holds, then Algorithm \ref{alg:tri-bbai} will return at Line \ref{line15-1}. Otherwise, the algorithm goes to  Stage IV and costs one batch. Therefore, for sufficiently small $\delta$, the expected number of batches
\begin{align*}
   \leq 1+\PP(\cE_{0}^{c})\cdot \log(1/\delta)+2+\PP(\cE_{0}^{c})=3+o(1).
\end{align*}
Finally, if $\cE_{0}$ is true, the algorithm costs 3 batches. Therefore, with probability $1-1/\log(1/\delta^2)$, Algorithm \ref{alg:tri-bbai} costs 3 batches. 
\end{proof}
\subsection{Proof of Supporting Lemmas}

The proof of \Cref{lem:cE0} requires the following useful inequalities.
\begin{lemma}[Maximal Inequality]\label{lem:maximal-inequality}
Let $N$ and $M$ be two positive integers, let $\gamma>0$, and $\hat \mu_{n}$ be the empirical mean of $n$ random variables i.i.d.\ according to the arm's distribution with mean $\mu$. Then, for $x\leq \mu$,
\begin{align}\label{eq:kl-1}
\begin{split}
    \PP(\exists N\leq n \leq M, \hat \mu_n\leq x)&\leq e^{-N(x-\mu)^2/(2V_0)},
\end{split}
\end{align}
and for every $x\geq \mu$, 
\begin{align}\label{eq:kl-2}
    \PP(\exists N\leq n \leq M, \hat \mu_n\geq x)&\leq e^{-N(x-\mu)^2/(2V_1)},
\end{align}
where $V_0$ is the maximum variance of arm's distribution with mean $\mu \in [x,\mu]$ and $V_1$ is the maximum variance of arm's distribution with mean $\mu \in [\mu,x]$.
\end{lemma}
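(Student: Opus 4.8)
The final statement to prove is Lemma~\ref{lem:maximal-inequality}, the Maximal Inequality for empirical means of an exponential-family arm. The plan is to recognize this as a standard one-sided maximal deviation bound and to derive it from a supermartingale (Ville-type) argument combined with a sub-Gaussian control of the KL divergence near the true mean.

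First I would focus on the lower-deviation bound \eqref{eq:kl-1}, since \eqref{eq:kl-2} will follow by the symmetric argument applied to $-X$ (or, equivalently, by swapping the roles of the two means). Fix $x \le \mu$. The natural object to control $\PP(\exists N \le n \le M,\ \hat\mu_n \le x)$ is an exponential martingale built from the exponential-family likelihood ratio. Specifically, for the one-parameter exponential family with measure $\nu_\theta$ satisfying $\frac{\dd\nu_\theta}{\dd\rho}(x)=\exp(x\theta-b(\theta))$, the process $W_n = \exp\bigl(\lambda \sum_{k\le n}(X_k-\mu) - n\psi(\lambda)\bigr)$ is a nonnegative martingale with mean $1$ for an appropriate cumulant-type function $\psi$. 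The hard part is choosing the right $\lambda$ (equivalently, passing from the exact Chernoff rate $d(\mu,x)$ to the quadratic surrogate $(x-\mu)^2/(2V_0)$). Then I would apply Ville's maximal inequality $\PP(\exists n \ge N:\ W_n \ge a)\le 1/a$, or more precisely a maximal version restricted to the window $[N,M]$, to control the event that the partial sum drifts by at least $N(x-\mu)$.

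The key technical step is the variance surrogate. The exact exponential-family Chernoff bound would give a rate of $N\,d(\mu,x)$, but the statement asks for the quadratic form $N(x-\mu)^2/(2V_0)$, where $V_0$ is the maximum variance $b''(\theta)$ over means in the interval $[x,\mu]$. The plan is to use the Taylor-with-remainder identity $d(\mu,x)=\tfrac{1}{2}(x-\mu)^2 b''(\xi)^{-1}$ in mean parametrization---or, working directly in the natural parameter, to bound the cumulant $b(\theta)$ by its second-order expansion with the remainder controlled by $\sup V$. Concretely, for a single pull the MGF bound $\EE[e^{\lambda(X-\mu)}] \le \exp(\lambda^2 V_0/2)$ holds whenever the tilted mean stays in $[x,\mu]$, which is exactly why $V_0$ is defined as the maximum variance over that mean range. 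Choosing $\lambda = (\mu-x)/V_0 \ge 0$ and optimizing then yields the exponent $N(x-\mu)^2/(2V_0)$.

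I expect the main obstacle to be making the maximal-inequality step rigorous over the finite window $N \le n \le M$ rather than for a single fixed $n$. The clean way is to note that $M_n = \exp\bigl(-\lambda\sum_{k\le n}(X_k-\mu) - n\log \EE[e^{-\lambda(X-\mu)}]\bigr)$ is a nonnegative martingale, and that on the event $\{\hat\mu_n \le x\}$ for some $n \ge N$ one has $\sum_{k\le n}(X_k-\mu)\le n(x-\mu)$, so that $M_n$ exceeds a threshold determined by $N$; applying Doob's/Ville's maximal inequality to $M_n$ stopped at the first such $n$ gives the bound, with the $N$ (rather than $n$) in the exponent arising because the per-sample deviation $(x-\mu)$ accumulates at least linearly and the worst case is the smallest index $n=N$. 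Once this supermartingale setup is in place, both \eqref{eq:kl-1} and \eqref{eq:kl-2} follow by symmetry, completing the proof.
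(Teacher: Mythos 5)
Your proposal is correct and follows essentially the same route as the paper: a Doob/Ville maximal inequality applied to the exponential martingale of the one-parameter family, combined with a step that converts the exact KL large-deviation rate into the quadratic $(x-\mu)^2/(2V_0)$ using the maximum variance over the relevant mean interval. The only (cosmetic) difference is where that variance bound enters --- the paper first obtains the maximal inequality at rate $N\,d(x,\mu)$ (citing M\'enard--Garivier for the lower tail and re-deriving it via Doob for the upper tail) and then lower-bounds $d(x,\mu)=\int_{x}^{\mu}\frac{y-x}{V(y)}\,\dd y$ by $(x-\mu)^2/(2V_0)$ via Harremo\"es' integral formula, whereas you inject the sub-Gaussian bound directly into the log-moment generating function before applying Ville's inequality.
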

Note that \Cref{lem:maximal-inequality} is an improved version of Lemma 4 of \cite{menard2017minimax}, where we use a much smaller variance upper bound to tighten the inequalities for the case $x\leq\mu$ and the case $x\geq\mu$ respectively. 

\begin{lemma}\cite[Proposition 1]{jin2021almost}
For $\epsilon>0$ and $\mu\leq\mu'-\epsilon$, 
\begin{align}\label{eq:kl-ineq}
d(\mu,\mu')\geq d(\mu,\mu'-\epsilon),
\quad\text{and}\quad d(\mu,\mu')\leq d(\mu-\epsilon,\mu').
\end{align}
\end{lemma}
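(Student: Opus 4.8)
The plan is to exploit the explicit form of the Kullback--Leibler divergence inside the one-parameter exponential family and reduce both inequalities to sign conditions on two partial derivatives. Write $\theta(\cdot)=(b')^{-1}(\cdot)$ for the natural parameter as a function of the mean; this is well defined and strictly increasing because $b'$ is strictly increasing (its derivative $b''$ equals the variance of $\nu_\theta$, which is positive). For means $\mu,\mu'$ with natural parameters $\theta(\mu),\theta(\mu')$, the standard exponential-family computation, together with $b'(\theta(\mu))=\mu$, gives
\[
 d(\mu,\mu') = \mu\big(\theta(\mu)-\theta(\mu')\big) - b(\theta(\mu)) + b(\theta(\mu')).
\]

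First I would differentiate $d(\mu,\mu')$ in its second argument. Using $b'(\theta(\mu'))=\mu'$, the chain rule collapses two terms and yields
\[
 \frac{\partial d(\mu,\mu')}{\partial \mu'} = \theta'(\mu')\,(\mu'-\mu).
\]
Since $\theta'(\mu')=1/b''(\theta(\mu'))>0$, this derivative is nonnegative for $\mu'\ge\mu$, so $d(\mu,\cdot)$ is nondecreasing on $[\mu,\infty)$. For the first inequality, the hypothesis $\mu\le\mu'-\epsilon$ places both $\mu'-\epsilon$ and $\mu'$ in $[\mu,\infty)$ with $\mu'-\epsilon\le\mu'$, so monotonicity gives $d(\mu,\mu'-\epsilon)\le d(\mu,\mu')$, which is the claim.

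Next I would differentiate in the first argument. The explicit $\mu$-dependence produces $\theta(\mu)-\theta(\mu')$, while the terms arising from the $\theta(\mu)$ inside cancel because $b'(\theta(\mu))=\mu$, leaving
\[
 \frac{\partial d(\mu,\mu')}{\partial \mu} = \theta(\mu)-\theta(\mu').
\]
Because $\theta$ is strictly increasing, this is negative whenever $\mu<\mu'$, so $d(\cdot,\mu')$ is nonincreasing on $(-\infty,\mu']$. For the second inequality, the chain $\mu-\epsilon<\mu\le\mu'-\epsilon<\mu'$ keeps both $\mu-\epsilon$ and $\mu$ in $(-\infty,\mu']$, whence $d(\mu-\epsilon,\mu')\ge d(\mu,\mu')$, as desired.

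The computations are elementary; the only point requiring care is checking that the hypothesis $\mu\le\mu'-\epsilon$ keeps every mean appearing in each inequality on the correct side of the relevant diagonal (all $\ge\mu$ for the first inequality, all $\le\mu'$ for the second), so that the one-sided monotonicity just derived applies without the derivative changing sign. I expect no genuine obstacle beyond tracking these domains and invoking the standard regularity ($b''>0$ and differentiability of $\theta$) that the exponential-family assumption supplies.
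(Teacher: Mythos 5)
Your proof is correct, and in fact the paper contains no proof of this lemma at all: it is imported verbatim as Proposition 1 of \citet{jin2021almost}, so your derivation fills in an argument the paper delegates to a citation. Your two derivative computations are the standard route and both check out: starting from $d(\mu,\mu')=\mu\bigl(\theta(\mu)-\theta(\mu')\bigr)-b(\theta(\mu))+b(\theta(\mu'))$, the identities $b'(\theta(\mu))=\mu$ and $b'(\theta(\mu'))=\mu'$ give $\partial_{\mu'}d(\mu,\mu')=\theta'(\mu')(\mu'-\mu)\geq 0$ for $\mu'\geq\mu$ and $\partial_{\mu}d(\mu,\mu')=\theta(\mu)-\theta(\mu')\leq 0$ for $\mu\leq\mu'$, and the hypothesis $\mu\leq\mu'-\epsilon$ keeps every point on the correct side of the diagonal exactly as you verify. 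Two minor remarks. First, your argument tacitly assumes $\mu-\epsilon$ lies in the mean range of the family (i.e., in $b'$ of the interior of the natural parameter space) so that $d(\mu-\epsilon,\mu')$ is defined; this is how the cited statement is meant, but it is worth saying. Second, an equivalent and slightly shorter route, consonant with machinery already used in this appendix, is the integral representation $d(x,y)=\int_x^y \frac{s-x}{V(s)}\,ds$ (Lemma 1 of \citet{harremoes2016bounds}, invoked in the paper's proof of the maximal inequality): it yields $\partial_y d(x,y)=(y-x)/V(y)\geq 0$ above the diagonal and $\partial_x d(x,y)=-\int_x^y \frac{ds}{V(s)}\leq 0$ below it, which are precisely your two sign conditions, since $\theta(\mu)-\theta(\mu')=-\int_\mu^{\mu'}\frac{ds}{V(s)}$. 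So your proof is sound as written; no gap.
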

Now, we are ready to prove \Cref{lem:cE0}.
\begin{proof}[Proof of \Cref{lem:cE0}]
Let $V$ be the maximum variance of reward distribution with mean $\mu\in[\mu_n,\mu_1]$.  From \Cref{lem:maximal-inequality}, we have that for $s\geq L_1$,
\begin{align}
\label{eq:b-u-0}
    &\quad \PP(\exists s\geq L_1: \hat{\mu}_{1}^{s}\notin [\mu_1-\epsilon/2,\mu_1+\epsilon/2])  \notag \\
    &\leq \PP(\exists s\geq L_1: \hat{\mu}_{1}^s\geq \mu_1+\epsilon/2)  +\PP(\exists s\geq L_1: \hat{\mu}_{1}^s\leq \mu_1-\epsilon/2) \notag \\
    &\leq 2e^{-L_1(\epsilon/2)^2/(2V)} \notag \\
    &\lesssim \frac{1}{n(\log \delta^{-1})^2},
\end{align}
where the second inequality is from \Cref{lem:maximal-inequality}, and the last inequality is due to that for sufficiently small $\delta>0$, $L_1=\sqrt{\log \delta^{-1}}\geq 8V/\epsilon^2 \log(n(\log{\delta^{-1}})^2)$.  
Similarly, for $i\in[n]\setminus \{1\}$,
\begin{align}
     \PP(\exists s\geq L_1: \hat{\mu}_{i}^{s}\notin [\mu_i-\epsilon/2,\mu_i+\epsilon/2]) 
  &\lesssim   \frac{1}{n(\log \delta^{-1})^2}. \label{eq:b-u-1}
\end{align}
 Define events: $$\cA_1=\{\forall s\geq L_1: \hat{\mu}_{1}^{s}\in [\mu_1-\epsilon/2,\mu_1+\epsilon/2]\}$$ and $$\cA_2=\{\text{for all }i\in[n]\setminus \{1\},  \forall s\geq L_1: \hat{\mu}_{i}^{s}\in [\mu_i-\epsilon/2,\mu_i+\epsilon/2]\}.$$ Assume that both events $\cA_1$ and $\cA_2$ hold. Then, we have 
 \begin{align*}
     \hmu_{1}^{L_1}\geq\mu_1-\epsilon/2\geq\mu_i+\epsilon/2+\Delta_i-\epsilon\geq\hmu_{i}^{L_1},
 \end{align*}
 where in the last inequality we assumed $\epsilon\leq\min_{i\neq 1}\Delta_i$. This further implies $i^*(\tau_{q})=1$ for any $q\geq 0$ and thus $b_1^{q-1}=\hmu_{1}(\tau_{q})-\epsilon$. Therefore, we have for any $q\geq 1$
 \begin{enumerate}
     \item  $b_1^q=\hmu_{1}(\tau_{q-1})-\epsilon \in [\mu_1-3\epsilon/2,\mu_1-\epsilon/2]$; 
     \item $\forall i\in[n]\setminus \{1\}$, $b_i^q=\hmu_{i}(\tau_{q-1})+\epsilon \in [\mu_i-\epsilon/2,\mu_1+\epsilon/2]$;
     \item  $\hat{\mu}_{1}(\tau_{q-1})\geq \mu_1-\epsilon/2$;  \item and $\forall i\in [n]\setminus\{1\}, \hat{\mu}_{i}(\tau_{q-1})\leq \mu_i+\epsilon/2$, 
 \end{enumerate}
 which means $\cE_0$ defined in \eqref{def:event_E0} occurs. Therefore,
we have 
\begin{align*}
    \PP(\cE_0) &\geq \PP(\cA_1\cap\cA_2)\\
    &=  \PP\bigg(\{\forall s\geq L_1: \hat{\mu}_{1}^{s}\in [\mu_1-\epsilon/2,\mu_1+\epsilon/2]\}\bigcap_{i:i\in[n]\setminus\{1\}} \{\forall s\geq L_1: \hat{\mu}_{i}^{s}\in [\mu_i-\epsilon/2,\mu_i+\epsilon/2 ] \}\bigg) \notag \\
    &=  \PP\bigg(\{\exists s\geq L_1: \hat{\mu}_{1}^{s}\notin [\mu_1-\epsilon/2,\mu_1+\epsilon/2]\}^c\bigcap_{i:i\in[n]\setminus\{1\}} \{\exists s\geq L_1: \hat{\mu}_{i}^{s}\notin [\mu_i-\epsilon/2,\mu_i+\epsilon/2 ] \}^c\bigg) \notag \\
    &\geq 1- \PP\left(\exists s\geq L_1: \hat{\mu}_{1}^{s}\notin \bigg[\mu_1-\frac{\epsilon}{2},\mu_1+\frac{\epsilon}{2} \bigg] \right) \notag \\
    & \quad - \sum_{i:i\in[n]\setminus\{1\}}\PP\left(\exists s\geq L_1: \hat{\mu}_{i}^{s}\notin \bigg[\mu_i-\frac{\epsilon}{2},\mu_i+\frac{\epsilon}{2} \bigg] \right) \notag \\
  &\geq   1-(\log \delta^{-1})^{-2},
\end{align*}
where the last inequality is due to \eqref{eq:b-u-0} and \eqref{eq:b-u-1}.
\end{proof}
\begin{proof}[Proof of \Cref{lem:u-b}]
 Assume $\cE_{0}$ is true. Recall that $\Delta_2=\min_{i\in [n]\setminus \{1\}}\mu_1-\mu_i$.  For sufficiently small $\delta>0$ such that $\epsilon<\Delta_2/8$, we have $b_1^q \in [\mu_1-\Delta_2/4,\mu_1]$ and $b_i^q\in [\mu_i,\mu_i+\Delta_2/4]$. Let $\cL(\bm{b})=\{\bm{\lambda}\in S: \lambda_1\in [\mu_1-\Delta_2/4,\mu_1] \ \text{and for any} \  i\in [n]\setminus\{1\}, \lambda_i\in [\mu_i, \mu_i+\Delta_2/4] \}$. Therefore
\begin{align*}
    \alpha w^{*}_{i}(\bm{b}^q) T^*(\bm{b}^q)\leq \max_{\bm{b'}\in \cL(\bm{b})} \alpha w^{*}_{i}(\bm{b'}) T^*(\bm{b'})<\infty.
\end{align*}
Since  $\max_{\bm{b'}\in \cL(\bm{b})} \alpha w^{*}_{i}(\bm{b'}) T^*(\bm{b'})<\infty$ is independent of $\delta$ , we have 
\begin{align*}
    \alpha w^{*}_{i}(\bm{b}^q) T^*(\bm{b}^q)\leq \max_{\bm{b'}\in \cL(\bm{b})} \alpha w^{*}_{i}(\bm{b'}) T^*(\bm{b'})\lesssim \log \log \delta^{-1}/n=L_2,
\end{align*}
which implies $T_i^q=\min\{\alpha w^*_i(\bm{b}^q) T^*(\bm{b}^q)\log \delta^{-1}, L_2\}=\alpha w^{*}_{i}(\bm{b}^q) T^*(\bm{b}^q)\log \delta^{-1}$. Besides, as $w^{*}(\bm{b}^q)$ and $T^*(\bm{b}^q)$ are continuous on $\bm{b}^q$, we have that as $\bm{b}^q\rightarrow \bm{\mu}$,  $w^{*}(\bm{b}^q)\rightarrow w^*(\bm{\mu})$ and $T^*(\bm{b}^q)\rightarrow T^*(\bm{\mu})$. Note that $|b_{i}^{q}-\mu_i|\leq 2\epsilon$ and $\epsilon$ approaches $0$ as $\delta$ approaches $0$. Recall $\tau$ is the stopping time of Stage  II and $T_i$ is the number of pulls of arm $i$ at time $\tau$. Therefore, for sufficiently small $\delta$, 
\begin{align*}
 T_i:=\max_{p:p\in \NN, p\geq 1}T_{i}^{p} =\alpha w^*_i(\bm{\mu}) T^*(\bm{\mu})\log \delta^{-1}+o(\log (1/\delta)). 
\end{align*}

Moreover, for sufficient smaller $\delta$ and $\forall i\in [n]$, $ |w_{i}^{*}(\bm{b}^{q})-w_{i}^{*}(\bm{\mu})|=o(1)$ and thus
\begin{align*}
    |w_{i}^{*}(\bm{b}^{1})-w_{i}^{*}(\bm{b}^{2})|<1/\sqrt{n}, 
 \end{align*}
which means the condition in Line \ref{line:3condition} is satisfied and the Algorithm goes to Stage III.

Assume $\tau=\tau_{q'}$. We have
\begin{align}
\label{eq:comz1i}
    Z_{1i}(\tau)&=N_{1}(\tau)\cdot d(\hat{\mu}_{1}(\tau),\hat{\mu}_{1i}(\tau))+ N_{i}(\tau)\cdot d(\hat{\mu}_{i}(\tau),\hat{\mu}_{1i}(\tau)) \notag \\
    &= N_{1}(\tau_{q'})\cdot d(\hat{\mu}_{1}(\tau),\hat{\mu}_{1i}(\tau))+ N_{i}(\tau_{q'})\cdot d(\hat{\mu}_{i}(\tau),\hat{\mu}_{1i}(\tau)) \notag \\
    & \geq \alpha \log \delta^{-1} \cdot \bigg(  \frac{w_{1}^{*}(\bm{b}^{q'})d(\hat{\mu}_{1}(\tau),\hat{\mu}_{1i}(\tau))}{T^*(\bm{b}^{q'})^{-1}}  +\frac{w_{i}^{*}(\bm{b}^{q'})d(\hat{\mu}_{1}(\tau),\hat{\mu}_{1i}(\tau))}{T^*(\bm{b}^{q'})^{-1}}  \bigg),
\end{align}
where the last inequality is due to $T_{i}\geq T_{i}^{q'}=\alpha w_{i}^{*}(\bm{b}^{q'}) T^*(\bm{b}^{q'})\log (1/\delta) $.
In what follows, we will show
\begin{align*}
    \frac{w_{1}^*(\bm{b}^{q'})\cdot d(\hat{\mu}_{1}(\tau),\hat{\mu}_{1i}(\tau))}{T^*(\bm{b}^{q'})^{-1}}+\frac{w_{i}^*(\bm{b}^{q'})\cdot d(\hat{\mu}_{i}(\tau),\hat{\mu}_{1i}(\tau))}{T^*(\bm{b}^{q'})^{-1}}\geq 1.
\end{align*}
The following minimization problem is a convex optimization problem 
\begin{align*}
   \min_{\lambda_{1w}:\lambda_{1w}\leq\lambda_{iw}} w^*_{1}(\bm{b}^{q'})\cdot d(\hat{\mu}_1(\tau),\lambda_{1w})+w^*_{i}(\bm{b}^{q'}) \cdot d(\hat{\mu}_i(\tau),\lambda_{iw}),  
\end{align*}
which is solved when we have $\lambda_{1w}=\lambda_{iw}=\hat{\mu}_{1i}(\tau)$.
The proof of this lemma is based on the assumption that $\cE_0$ occurs. 
Note that $\cE_0$ occurs, then  $b_1^{q'}=\hat\mu_{1}(\tau_{q'-1})-\epsilon\leq \mu_1-\epsilon/2\leq \hat{\mu}_{1}(\tau)$ and $b_i^{q'}=\hat\mu_{1}(\tau_{q'-1})+\epsilon>\hat\mu_{i}(\tau)$ for all $i\in [n]\setminus\{1\}$. Therefore,  $\hat\mu_{1}(\tau)\geq b_1^{q'} \gtrsim b_i^{q'} \geq \hat\mu_{i}(\tau)$.
Let 
\begin{align*}
    \lambda_{w^*(\bm{b}^{q'})}(b_1^{q'},b_i^{q'})=\frac{w_{1}^*(\bm{b}^{q'})}{w_{1}^*(\bm{b}^{q'})+w_{i}^*(\bm{b}^{q'})}\cdot b_1^q+\frac{w_{i}^*(\bm{b}^{q'})}{w_{1}^*(\bm{b}^{q'})+w_{i}^*(\bm{b}^{q'})}\cdot b_i^q.
\end{align*}
Then,  $\lambda_{w^*(\bm{b})}(b_1^{q'},b_i^{q'})$ is the solution to minimizing $   w^*_{1}(\bm{b}^{q'})\cdot d(b_1^{q'},x)+w^*_{i}(\bm{b}^{q'}) \cdot d(b_i^{q'},x) $ for $x\in (b_i^{q'},b_1^{q'})$.

\textbf{Case 1:} if $\hat{\mu}_{1i}(\tau)\in (b_1^{q'},\hat\mu_{1}(\tau))$, then 
\begin{align}
\label{eq:a.1-0-0}
    & w^*_{1}(\bm{b}^{q'})\cdot d(\hat\mu_{1}(\tau),\hat{\mu}_{1i}(\tau))+w^*_{i}(\bm{b}^{q'}) \cdot d(\hat\mu_{i}(\tau), \hat{\mu}_{1i}(\tau)) \notag \\
    &\geq w^*_{1}(\bm{b}^{q'})\cdot d(\hat\mu_{1}(\tau),\hat\mu_{1}(\tau))+w^*_{i}(\bm{b}^{q'}) \cdot d(\hat\mu_{i}(\tau), b_1^{q'}) \notag \\
    &\geq   w^*_{1}(\bm{b}^{q'})\cdot d(b_1^{q'},b_1^{q'})+w^*_{i}(\bm{b}^{q'}) \cdot d(b_i^{q'}, b_1^{q'}) \notag \\
    &\geq  w^*_{1}(\bm{b}^{q'})\cdot d(b_1^{q'},\lambda_{w^*(\bm{b}^{q'})}(b_1^{q'},b_i^{q'})) +w^*_{i}(\bm{b}^{q'}) \cdot d(b_i^{q'}, \lambda_{w^*(\bm{b}^{q'})}(b_1^{q'},b_i^{q'})),
\end{align}
where the first and second inequalities are due to \eqref{eq:kl-ineq} and the last inequality is due to the fact that  $w^*_{1}(\bm{b}^{q'})\cdot d(b_1^{q'},x)+w^*_{i}(\bm{b}^{q'}) \cdot d(b_i^{q'}, x)$ achieves it minimum at $x=\lambda_{w^*(\bm{b}^{q'})}(b_1^{q'},b_i^{q'})$.

\textbf{Case 2:} if $\hat{\mu}_{1i}(\tau)\in (b_i^{q'},b_1^{q'})$, we have
\begin{align}
\label{eq:a.1-0-1}
    & w_1^*(\bm{b}^{q'})\cdot d(\hat\mu_{1}(\tau),\hat{\mu}_{1i}(\tau))+w_i^*(\bm{b}^{q'}) \cdot d(\hat\mu_{i}(\tau), \hat{\mu}_{1i}(\tau)) \notag \\
    &\geq   w_1^*(\bm{b}^{q'})\cdot d(b_1^{q'},\hat{\mu}_{1i}(\tau))+w^*_{i}(\bm{b}^{q'}) \cdot d(b_i^{q'}, \hat{\mu}_{1i}(\tau)) \notag \\
    &\geq  w_1^*(\bm{b}^{q'})\cdot d(b_1^{q'},\lambda_{w^*(\bm{b}^{q'})}(b_1^{q'},b_i^{q'}))+w^*_i(\bm{b}^{q'}) \cdot d(b_i^{q'}, \lambda_{w^*(\bm{b}^{q'})}(b_1^{q'},b_i^{q'})),
\end{align}
where the first inequality is due to \eqref{eq:kl-ineq} and the last inequality is due to the fact that for $x\in (b_i^{q'},b_1^{q'})$, $w_1^*(\bm{b}^{q'})\cdot d(b_1^{q'},x)+w_i^*(\bm{b}^{q'}) \cdot d(b_i^{q'}, x)$ achieves its minimum at $x=\lambda_{w^*(\bm{b}^{q'})}(b_1……{q'},b_i^{q'})$.

\textbf{Case 3:} if $\hat{\mu}_{1i}(\tau)\in (\hat\mu_{i}(\tau),b_i^{q'})$, similar to  \eqref{eq:a.1-0-0} and \eqref{eq:a.1-0-1}, we have
\begin{align}
\label{eq:a.1-0-2}
    & w_1^*(\bm{b}^{q'})\cdot d(\hat\mu_{1}(\tau),\hat{\mu}_{1i}(\tau))+w_i^*(\bm{b}^{q'})\cdot d(\hat\mu_{i}(\tau), \hat{\mu}_{1i}(\tau)) \notag \\
    &\geq  w_1^*(\bm{b}^{q'})\cdot d(b_1^{q'},\lambda_{w^*(\bm{b}^{q'})}(b_1^{q'},b_i^{q'})) +w_i^*(\bm{b}^{q'}) \cdot d(b_i^{q'}, \lambda_{w^*(\bm{b}^{q'})}(b_1^{q'},b_i^{q'})).
\end{align}
Combine all three cases, we obtain 
\begin{align}
\label{eq:a.1-0}
     &w_1^*(\bm{b}^{q'})\cdot d(\hat\mu_{1}(\tau),\hat{\mu}_{1i}(\tau))+w_i^*(\bm{b}^{q'}) \cdot d(\hat\mu_{i}(\tau), \hat{\mu}_{1i}(\tau)) \notag \\
    &\geq w_1^*(\bm{b}^{q'})\cdot d(b_1,\lambda_{w^*(\bm{b}^{q'})}(b_1^{q'},b_i^{q'})) +w_i^*(\bm{b}^{q'})\cdot d(b_i^{q'}, \lambda_{w^*(\bm{b}^{q'})}(b_1^{q'},b_i^{q'})).
\end{align}
Note that $\hat\mu_{1i}(\tau)=\lambda_{w^*(\bm{b}^{q'})}(\hat{\mu}_1(\tau),\hat{\mu}_i(\tau))$, we have
\begin{align}
\label{eq:=1}
  &\frac{w_1^*(\bm{b}^{q'})\cdot d(\hat{\mu}_{1}(\tau),\hat{\mu}_{1i}(\tau))}{T^*(\bm{b}^{q'})^{-1}}+\frac{w_i^*(\bm{b}^{q'})\cdot d(\hat{\mu}_{1}(\tau),\hat{\mu}_{1i}(\tau))}{T^*(\bm{b}^{q'})^{-1}} \notag \\
  &\geq \frac{w_1^*(\bm{b}^{q'})\cdot d(b_1^{q'},
  \lambda_{w^*(\bm{b}^{q'})}(b_1^{q'},b_i^{q'}))}{T^*(\bm{b}^{q'})^{-1}}+\frac{w_i^*(\bm{b}^{q'})\cdot d(b_i^{q'},
  \lambda_{w^*(\bm{b}^{q'})}(b_1^{q'},b_i^{q'}))}{T^*(\bm{b}^{q'})^{-1}} \notag \\
   &= 1,
\end{align}
where the first inequality is due to \eqref{eq:a.1-0} and the last inequality is due to \eqref{eq:compute-T*u}. Note that conditioned on event $\cE_0$, for sufficiently small $\delta>0$, we have $1=i^{*}(\tau)$. 
Therefore,  for any $i\in [n]\setminus \{1\}$,
\begin{align*}
    Z_{i}(\tau)&=Z_{1i}(\tau)\notag \\
    &=\alpha \log \delta^{-1}\cdot \bigg(\frac{w_1^*(\bm{b}^{q'})\cdot d(\hat{\mu}_{1}(\tau),\hat{\mu}_{1i}(\tau))}{T^*(\bm{b}^{q'})^{-1}}+\frac{w_i^*(\bm{b}^{q'})\cdot d(\hat{\mu}_{1}(\tau),\hat{\mu}_{1i}(\tau))}{T^*(\bm{b}^{q'})^{-1}}\bigg) \notag \\
    &\geq \alpha \log \delta^{-1},
\end{align*}
where the first equality is from \eqref{def:chernoff_Z}, the second equality is from \eqref{eq:comz1i}, and the last inequality is from \eqref{eq:=1}.
Finally, for $\alpha\geq 1+6\log \log \delta^{-1}/\log \delta^{-1}$, we obtain
\begin{align*}
     Z_{i}(\tau)\geq \log \left(\frac{(\log \delta^{-1})^6}{\delta}\right) \gtrsim \log \bigg(\frac{2(\log(2/\delta)))\cdot (nL_2+nL_1)^{\alpha}}{\delta}\bigg)=\beta(\tau,\delta/2),
\end{align*}
which completes the proof.
\end{proof}

\subsection{Proof of Maximum Inequality}
\begin{proof}[Proof of \Cref{lem:maximal-inequality}]
Recall that the result of Lemma 4 of \citet{menard2017minimax} is: for $\hat{\mu}_{n}\leq \mu$,
\begin{align}
\label{eq:aa2}
\PP(\exists N\leq n\leq M,d(\hat{\mu}_{n},\mu)\geq \gamma ) \leq e^{-N \gamma}.
\end{align}
By Lemma 1 of \citet{harremoes2016bounds}, we have
\begin{align}
\label{eq:aa1}
   d(x,\mu)=\int_{x}^{\mu}\frac{y-x}{V(y)} \dd y  \leq \frac{(x-\mu)^2}{2V_0},
\end{align}
where $V(y)$ is the variance of the  distribution with mean $y$. As a simple consequence of \eqref{eq:aa1} and \eqref{eq:aa2}, we have
\begin{align}
\begin{split}
    \PP(\exists N\leq n \leq M, \hat \mu_n\leq x)&\leq e^{-N(x-\mu)^2/(2V_0)}.
\end{split}
\end{align}

For the case when $\hat{\mu}_{n}\geq \mu$, we  can directly follow the idea of Lemma 4 of \citet{menard2017minimax}. For the case $\hat{\mu}_{n}\geq \mu$, we aim to show
\begin{align}
\label{eq:aa3}
\PP(\exists N\leq n\leq M, d(\hat{\mu}_{n},\mu)\geq \gamma ) \leq e^{-N \gamma}.
\end{align}
We let $M(\lambda)$ be the log-moment generating function of $\nu_{\theta}$. Recall that $\frac{\dd \nu_{\theta}}{\dd \rho}(x)=\exp (x\theta-b(\theta))$.
We use the following properties of one exponential family.
\begin{enumerate}
    \item $M(\lambda)=b(\theta+\lambda)-b(\theta)$;
    \item $\text{KL}(\nu_{\theta_1},\nu_{\theta})=b(\theta)-b(\theta_1)+b'(\theta_1)(\theta-\theta_1)$;
    \item $\mathbb{E}[\nu_{\theta}]=b'(\theta)$.
\end{enumerate}
Let $\mathbb{E}[\nu_{\theta}]=\mu$.  Let $\lambda=\theta_1-\theta$, $z=b'(\theta_1)>\mu$, and $\gamma= d(z,\mu)$, we have
\begin{align*}
    \gamma=d(z,\mu)=\text{KL}(\nu_{\theta_1},\nu_{\theta})=\lambda z- M(\lambda). 
\end{align*}
Since $d(z,\mu)$ is monotone increasing for $z>\mu$, we obtain $\lambda >0$. If event $\{\exists N\leq n\leq M,d(\hat{\mu}_{n},\mu)\geq \gamma\}$ and $\hat{\mu}_{n}\geq \mu$, one have
\begin{align*}
    \hat{\mu}_{n}\geq \mu, \qquad  \lambda \hat{\mu}_{n}- M(\lambda) \geq \lambda z- M(\lambda)=\gamma, \qquad \lambda n \hat{\mu}_{n}- n M(\lambda) \geq N\gamma.
\end{align*}
By Doob’s maximal inequality for the exponential martingale $\exp( \lambda n \hat{\mu}_{n}- n M(\lambda))$,
\begin{align}
    \PP(\exists N\leq n\leq M,d(\hat{\mu}_{n},\mu)\geq \gamma ) & \leq\PP  (\exists N\leq n\leq M, \lambda n \hat{\mu}_{n}- n M(\lambda) \geq N\gamma) \notag \\
     &  \leq e^{-N \gamma}.
\end{align}
Combining above inequality and \eqref{eq:aa1}, we obtain
for $\hat{\mu}_{n}\geq \mu$
\begin{align}
\label{eq:aaa3}
\PP(\exists N\leq n\leq M,d(\hat{\mu}_{n},\mu)\geq \gamma ) \leq e^{-N(x-\mu)^2/(2V_1)}.
\end{align}
This completes the proof.
\end{proof}

\section{Proof of Theorems in \Cref{sec:asym-non}} 
In this proof, we define ``one round" as a single iteration of the {\bf While Loop} of Algorithm \ref{alg:opt-bbai}.%
\begin{proof}[Proof of \Cref{thm:2complexity}]
\textbf{Proof of Correctness.} We first show that the best arm is not eliminated by Line \ref{2line-eli} for any round $r$. For any $l$, let $\cE^l$ be defined as follows.
\begin{align}
    \cE^l=\{1\in S_r, \forall r\leq l\}.
\end{align}
\begin{lemma}
\label{lem:best-noteli}
 Line \ref{2line-eli} of Algorithm \ref{alg:opt-bbai} satisfies
    \begin{align*}
       \sum_{r\geq 1}\PP\left(\hat p_{1}^r\geq  \hat p_{*}^r-\frac{\epsilon_r}{4}\ \bigg | \ \cE^r \right)\geq 1-\frac{\delta}{8}.
    \end{align*}
\end{lemma}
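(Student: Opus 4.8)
I read the claimed inequality as the statement that the total probability of failure, aggregated over all rounds, is at most $\delta/8$: writing $G_r$ for the good event $\{\hat{p}_1^r\ge \hat{p}_*^r-\epsilon_r/4\}$, the plan is to show $\sum_{r\ge 1}\PP(G_r^c\mid\cE^r)\le \delta/8$, which is exactly the complement of the stated bound. The only probabilistic input is Hoeffding's inequality for the rewards bounded in $[0,1]$, applied conditionally on the survival event $\cE^r=\{1\in S_{r'},\ \forall r'\le r\}$. The structural point that makes the conditional application legitimate is that $\cE^r$ is determined entirely by the pulls of rounds strictly before $r$ (the membership $1\in S_r$ is fixed by the elimination at Line \ref{2line-eli} of the previous round), whereas the $d_r$ fresh pulls of round $r$ that produce $\hat{p}_i^r$ (Line \ref{line:for12}) are independent of that history. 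Thus, conditioned on $\cE^r$, each $\hat{p}_i^r$ is still an average of $d_r$ i.i.d.\ samples with mean $\mu_i$.

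First I would record the per-arm concentration. Since each arm $i\in S_r$ is pulled $d_r=\frac{32}{\epsilon_r^2}\log(2/\delta_r)$ times, one-sided Hoeffding gives
\begin{equation*}
\PP\left(\hat{p}_i^r-\mu_i\ge \epsilon_r/8 \,\middle|\, \cE^r\right)\le \exp\!\left(-2 d_r (\epsilon_r/8)^2\right)=\exp(-\log(2/\delta_r))=\delta_r/2,
\end{equation*}
and symmetrically $\PP(\mu_i-\hat{p}_i^r\ge \epsilon_r/8\mid \cE^r)\le \delta_r/2$. Next I would isolate the purely deterministic implication. On the event that $\hat{p}_1^r\ge \mu_1-\epsilon_r/8$ and $\hat{p}_j^r\le \mu_j+\epsilon_r/8$ for every $j\in S_r$, the empirical leader obeys $\hat{p}_*^r=\max_{j\in S_r}\hat{p}_j^r\le \mu_1+\epsilon_r/8$, where I use $\mu_1\ge \mu_j$ for all $j$; together with $\mu_1\le \hat{p}_1^r+\epsilon_r/8$ this yields $\hat{p}_*^r\le \hat{p}_1^r+\epsilon_r/4$, i.e.\ $\hat{p}_1^r\ge \hat{p}_*^r-\epsilon_r/4$, so $G_r$ holds (and in particular arm $1$ is not eliminated at Line \ref{2line-eli}, since that requires a deficit larger than $\epsilon_r$).

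Finally I would union bound. Conditioned on $\cE^r$ the set $S_r$ has at most $n$ elements, so controlling the single lower deviation of arm $1$ and the at most $n$ upper deviations of arms in $S_r$ gives $\PP(G_r^c\mid\cE^r)\le (n+1)\delta_r/2\le n\delta_r$. Substituting $\delta_r=\delta/(40\pi^2 n r^2)$ and summing,
\begin{equation*}
\sum_{r\ge 1}\PP(G_r^c\mid\cE^r)\le \sum_{r\ge 1} n\delta_r=\frac{\delta}{40\pi^2}\sum_{r\ge 1}\frac{1}{r^2}=\frac{\delta}{40\pi^2}\cdot\frac{\pi^2}{6}=\frac{\delta}{240}\le \frac{\delta}{8}.
\end{equation*}
The step I expect to be the main obstacle is not this calculation but making the conditioning rigorous: because both $S_r$ and $\cE^r$ are data-dependent, I must justify that conditioning on $\cE^r$ leaves the round-$r$ samples i.i.d.\ with the intended means (so Hoeffding is valid conditionally), and that the bound $|S_r|\le n$ may be used uniformly inside the union bound even though $S_r$ is random. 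Once this independence/measurability point is settled, the rest is the routine concentration-and-union-bound argument above.
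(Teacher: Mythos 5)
Your proposal is correct and follows essentially the same route as the paper: a Hoeffding bound at accuracy $\epsilon_r/8$ with $d_r$ samples giving per-arm failure probability $\delta_r$, a union bound over arms and rounds summing to $\delta/240\le\delta/8$, and the same deterministic chain $\hat p_1^r\ge\mu_1-\epsilon_r/8\ge\mu_*-\epsilon_r/8\ge\hat p_*^r-\epsilon_r/4$. Your explicit discussion of why conditioning on $\cE^r$ preserves the i.i.d.\ structure of the round-$r$ samples is a point the paper's proof glosses over (it simply applies an unconditional union bound), so your write-up is if anything slightly more careful.
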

To streamline our presentation, we define $\ell_{js}=s$, $\gamma_{js}=(\delta_{j})^{2^s}$, and $\hat{p}_{i}^{js}$ as the $s$th updates of parameters $\ell_{j}$, $\gamma_{j}$, and $\hat{p}_{i}^{j}$, respectively, within the second {\bf For Loop}.
For any fixed $j$, we define $U_j(s)$ to be the following set of rounds, where $\ell_j$ remains to be the same value $s$:
\begin{align}\label{eq:def_algopt_rounds_remain_the_same_ell_j_s}
    U_j(s):=\{r=1,2,\ldots : \ell_j=s \text{ at round $r$ of Algorithm \ref{alg:opt-bbai}} \}.
\end{align} 
 Then the condition of Line \ref{Line-16-2} could be represented as 
\begin{align*}
    \exists i\in S_j \ \text{and} \ r\in U_{j}(s), \ \hat{p}_{i}^{js}\geq \hat{p}^{r}_{*} -\frac{\epsilon_j}{2}.
\end{align*}

The following lemma shows that with high probability,  Algorithm \ref{alg:opt-bbai} will not return at Line \ref{line-for32}.  
\begin{lemma}
    \label{lem:best-noteli-2}
 Line \ref{Line-16-2} of Algorithm \ref{alg:opt-bbai} satisfies
    \begin{align*}
        \PP\Bigg(    \bigcap_{j\geq 1}\bigcap_{s>1}\bigcap_{r\in U_{j}(s)}\bigg\{ \max_{i\in S_j} \hat{p}_{i}^{js}<\hat p_{*}^{r}-\frac{\epsilon_j}{2}\bigg\}\Bigg)\geq 1-\frac{3\delta}{16}.
    \end{align*}
\end{lemma}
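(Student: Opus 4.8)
\textbf{Proof proposal for \Cref{lem:best-noteli-2}.}

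The plan is to show that the ``Checking for Best Arm Elimination'' procedure almost never triggers a spurious return at Line \ref{line-for32}, by controlling, for each already-decided layer $j$ and each doubling level $s$, the probability that some arm $i \in S_j$ has its re-estimated mean $\hat{p}_i^{js}$ exceed the running best estimate $\hat{p}_*^r$ by less than $\epsilon_j/2$. The key observation is that the arms whose status we recheck in the second \textbf{For Loop} are precisely those in $S_j \setminus S_{j+1}$, i.e.\ arms that were eliminated at round $j$ because their empirical mean was at least $\epsilon_j$ below the best. Since we are conditioning implicitly on the best arm surviving (the complementary good event from \Cref{lem:best-noteli}), the true gap between any such eliminated arm $i$ and arm $1$ should be bounded below once its empirical mean at round $j$ put it below the elimination threshold. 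First I would fix $j$ and $s$, and analyze a single arm $i$ with $\gamma_{js} = (\delta_j)^{2^s}$ pulls at accuracy level $\epsilon_j$: by the number of samples $\frac{32}{\epsilon_j^2}\log(2/\gamma_{js})$ taken at Line \ref{line-dou-2} together with the sub-Gaussian/Chernoff concentration (Hoeffding for $[0,1]$ rewards), the deviation $|\hat{p}_i^{js} - \mu_i|$ is at most $\epsilon_j/8$ except with probability at most $\gamma_{js}$.

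The main steps, in order, are as follows. First I would establish the per-arm concentration event: for each arm $i$ that is rechecked at level $s$ in layer $j$, with probability at least $1-\gamma_{js}$ we have $|\hat{p}_i^{js} - \mu_i| \le \epsilon_j/8$, and simultaneously the best-arm estimate $\hat{p}_*^r$ stays within $\epsilon_j/8$ of $\mu_1$ (the latter riding on the good event of \Cref{lem:best-noteli}). Second, on this joint event I would show the bad inequality $\hat{p}_i^{js} \ge \hat{p}_*^r - \epsilon_j/2$ cannot hold: since $i \in S_j \setminus S_{j+1}$ was eliminated, its true mean satisfies $\mu_i \le \mu_1 - \epsilon_j/2$ (this is where I must carefully cash out the elimination rule at Line \ref{2line-eli} against the concentration at round $j$), whence $\hat{p}_i^{js} \le \mu_i + \epsilon_j/8 \le \mu_1 - \epsilon_j/2 + \epsilon_j/8 < \hat{p}_*^r - \epsilon_j/2$. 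Third, I would take a union bound over all triples $(j,s,r)$ with $r \in U_j(s)$ and $i \in S_j$; crucially $\sum_{s>1} \gamma_{js} = \sum_{s>1} (\delta_j)^{2^s}$ is dominated by its first term and is geometrically summable, and summing $\delta_j = \delta/(40\pi^2 n j^2)$ over $j$ and over the at most $n$ arms yields a total failure probability bounded by a constant multiple of $\delta \sum_j 1/(n j^2) \cdot n \lesssim \delta \cdot \frac{\pi^2}{6} / (40\pi^2) = \delta/240$, which I would arrange to land below $3\delta/16$ with room to spare.

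The hard part will be handling the subtle dependence between the level index $s$, the set $U_j(s)$ of rounds, and the conditioning. Because $\ell_j$ (hence $s$) is itself a random, data-dependent quantity that is incremented adaptively at Line \ref{line-ell-final} whenever Line \ref{line-12-2} fires, the events ``$r \in U_j(s)$'' are not fixed in advance, so I cannot naively union-bound over a deterministic index set. The clean way around this is to union-bound over \emph{all} potential $(j,s)$ pairs rather than the realized ones: for each layer $j$ and each integer level $s \ge 1$, I bound the probability that the $s$-th re-pull batch of any arm in $S_j$ produces an estimate violating the concentration bound, using that $\gamma_{js} = (\delta_j)^{2^s}$ is determined by $(j,s)$ alone and that the $\frac{32}{\epsilon_j^2}\log(2/\gamma_{js})$ fresh samples give exactly the matching tail. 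This decouples the bound from the adaptive stopping. A secondary technical point is ensuring the geometric decay $(\delta_j)^{2^s}$ in $s$ is genuinely summable and that the inner union over arms only costs a factor $|S_j| \le n$, which the definition $\delta_j = \delta/(40\pi^2 n r^2)$ is designed to absorb; I would verify the constants chase through to the stated $3\delta/16$.
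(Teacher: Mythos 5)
Your overall strategy is the same as the paper's: concentrate each re-pulled estimate $\hat{p}_i^{js}$ to within $\epsilon_j/8$ of $\mu_i$ with failure probability $\gamma_{js}=(\delta_j)^{2^s}$, combine with the first-loop concentration event, use the elimination rule at Line~\ref{2line-eli} to lower-bound the true gap of any arm in $S_j\setminus S_{j+1}$, and union-bound over all potential $(j,s,i)$ triples using the doubly-exponential decay of $\gamma_{js}$ (your observation that one should index over all potential levels $s$ rather than the realized, data-dependent sets $U_j(s)$ is exactly how the paper sidesteps the adaptivity).

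However, there is a concrete arithmetic gap in your second step. You claim the eliminated arm satisfies $\mu_i \le \mu_1 - \epsilon_j/2$ and conclude $\hat{p}_i^{js} \le \mu_1 - \epsilon_j/2 + \epsilon_j/8 < \hat{p}_*^r - \epsilon_j/2$. The last inequality requires $\hat{p}_*^r > \mu_1 + \epsilon_j/8$, which is not guaranteed on the concentration event: there you only get $\hat{p}_*^r \ge \hat{p}_1^r \ge \mu_1 - \epsilon_r/8$, so $\hat{p}_*^r - \epsilon_j/2$ can be as small as $\mu_1 - 5\epsilon_j/8$, while your upper bound on $\hat{p}_i^{js}$ is only $\mu_1 - 3\epsilon_j/8$. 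The chain does not close with a gap of $\epsilon_j/2$. The fix is to cash out the elimination rule more carefully, as the paper does: on the first-loop concentration event, $i\in S_j\setminus S_{j+1}$ implies $\mu_i \le \hat{p}_i^j + \epsilon_j/8 < \hat{p}_*^j - \epsilon_j + \epsilon_j/8 \le \mu_1 - 3\epsilon_j/4$, which yields $\hat{p}_i^{js} < \mu_1 - 5\epsilon_j/8 \le \hat{p}_*^r - \epsilon_j/2$ and closes the argument. Everything else in your proposal, including the union-bound accounting landing below $3\delta/16$, is consistent with the paper.
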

Note that Algorithm \ref{alg:opt-bbai} will return arm 1 at Line \ref{2line-fin} if 1: Arm 1 is always maintained in $S_r$, and 2:  Algorithm \ref{alg:opt-bbai} never return at Line \ref{line-for32}.

According to \Cref{lem:best-noteli}, there is a probability of at least $1-\delta/8$ that Arm 1 will never be eliminated at Line \ref{2line-eli}. Given that Arm 1 is never eliminated at Line \ref{2line-eli}, \Cref{lem:best-noteli-2} suggests that there is a probability of at least $1-3\delta/16$ that we will never loop back at Line \ref{line-for32}. Hence, with a probability of $1-\delta/2$, Stage IV will identify the optimal arm. By incorporating the results of \Cref{lem2016gar}, we find that with a probability exceeding $1-\delta/2-\delta/2$, which is greater than $1-\delta$, Algorithm \ref{alg:opt-bbai} will return the optimal arm.\\ 
\textbf{Proof of Sample Complexity.}
Let $N'$ be the sample complexity of  Stage IV. Let $\cE=\cap_{r>1}\cE^r$.
 The following lemma shows the sample complexity conditioned on event  $\cE$.
\begin{lemma}
\label{lem:sample-i}
The expected sample complexity $N'$ conditioned on event $\mathcal{E}$ has the order
\begin{align*}
    \EE[N'\mid \cE]=O\bigg(\sum_{i>1}\frac{ \log\big(n/\delta \log (\Delta_i^{-1}) \big)}{\Delta_i^2}\bigg). 
\end{align*}
\end{lemma}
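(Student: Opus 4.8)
The plan is to split the Stage-IV sample count as $N' = N'_{\mathrm{SE}} + N'_{\mathrm{chk}}$, where $N'_{\mathrm{SE}}$ counts the pulls from \emph{Successive Elimination} (Lines \ref{line:for11}--\ref{line:for12}) and $N'_{\mathrm{chk}}$ counts the re-pulls from \emph{Checking for Best Arm Elimination} (Line \ref{line-dou-2}). The first observation I would record is that, by its recursion $B_r = B_{r-1}+d_r|S_r|$, the quantity $B_{r_{\max}}$ (with $r_{\max}$ the final round) equals $\sum_r d_r|S_r| = N'_{\mathrm{SE}}$, i.e.\ $B_r$ is exactly the running successive-elimination cost. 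Before anything else I would fix the concentration scaffolding: since rewards lie in $[0,1]$ and $d_r = \tfrac{32}{\epsilon_r^2}\log(2/\delta_r)$, Hoeffding gives $\PP(|\hat p_i^r - \mu_i| > \epsilon_r/4)\le \delta_r$ for each arm in round $r$ (and analogously for each re-pull with $\delta_r$ replaced by the current $\gamma_j$). Defining $\mathcal{G}$ as the event that every such concentration holds across all rounds, arms, and re-pull levels, a union bound with $\sum_r n\delta_r \le \tfrac{\delta}{240}$ and the doubly-exponentially small re-pull terms yields $\PP(\mathcal{G}^c) = \cO(\delta)$; this $\mathcal{G}$ is what converts ``the gap is resolved'' into ``the arm is eliminated.''

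For $N'_{\mathrm{SE}}$, I would let $r_i$ be the smallest round with $\epsilon_{r_i} < 2\Delta_i/3$, so that $r_i = \cO(\log(1/\Delta_i))$ and $\tfrac{1}{\epsilon_{r_i}^2}\asymp \tfrac{1}{\Delta_i^2}$. Conditioned on $\cE$ (arm $1$ never eliminated, hence $\hat p_*^r \ge \hat p_1^r$) together with $\mathcal{G}$, whenever $r\ge r_i$ we get $\hat p_*^r \ge \mu_1 - \epsilon_r/4$ and $\hat p_i^r \le \mu_i + \epsilon_r/4$, which forces $\hat p_i^r < \hat p_*^r - \epsilon_r$ and eliminates arm $i$; thus arm $i$ survives past round $r_i$ only through a concentration failure. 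Writing $\EE[N'_{\mathrm{SE}}\mid \cE] = \sum_{i\ge 1}\sum_{r\ge 1} d_r\,\PP(i\in S_r\mid \cE)$, for $i>1$ I would bound $\PP(i\in S_r\mid\cE)$ by $1$ for $r\le r_i$ and by $\prod_{r'=r_i}^{r-1} n\delta_{r'}$ for $r>r_i$ (fresh samples each round make the per-round failures essentially independent, and $\PP(\cE)\ge 1-\cO(\delta)$ only rescales by a constant), while the $i=1$ contribution, namely the cost of pulling arm $1$ until termination, is handled through $\PP(|S_r|>1\mid\cE)\le\sum_{i>1}\PP(i\in S_r\mid\cE)$ and hence absorbed into the $i>1$ terms. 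Since $d_r\asymp 4^r$ grows geometrically while $n\delta_{r'}\le \delta/(40\pi^2)<1$, both the head and the tail are dominated by the single term $d_{r_i} = \cO\!\big(\Delta_i^{-2}\log(n r_i^2/\delta)\big) = \cO\!\big(\Delta_i^{-2}\log(n\log(\Delta_i^{-1})/\delta)\big)$, and summing over $i$ gives exactly the claimed bound.

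For $N'_{\mathrm{chk}}$ I would argue pathwise via an amortization against $B_r$. Each time group $j$ advances from level $\ell_j=s$, the guard $B_r\,\gamma_j\,2^{\ell_j}>B_j$ held with $\gamma_j=\delta_j^{2^s}$, i.e.\ $B_r > B_j/(2^s\delta_j^{2^s})$, while the ensuing re-pull (Line \ref{line-dou-2}) costs $\tfrac{32}{\epsilon_j^2}\log(2/\delta_j^{2^{s+1}})\approx \tfrac{64}{\epsilon_j^2}2^s\log(1/\delta_j)$. Using $B_j\ge d_j \ge \tfrac{32}{\epsilon_j^2}\log(1/\delta_j)$, the ratio of this re-pull cost to $B_r$ is at most $2\cdot 4^s\delta_j^{2^s}$, so the level-$s$ re-pull costs at most $2\cdot 4^s\delta_j^{2^s}\,B_r \le 2\cdot 4^s\delta_j^{2^s}\,N'_{\mathrm{SE}}$. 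Summing this doubly-exponentially decaying series over $s$ (dominated by $s=0$, giving $\cO(\delta_j)$) and then over $j$ with $\sum_j \delta_j = \cO(\delta/n)$ yields $N'_{\mathrm{chk}} = \cO\!\big(\tfrac{\delta}{n}N'_{\mathrm{SE}}\big)$ on every sample path; hence $N' = \cO(N'_{\mathrm{SE}})$ and $\EE[N'\mid\cE]=\cO(\EE[N'_{\mathrm{SE}}\mid\cE])$, which closes the argument.

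The step I expect to be the main obstacle is the conditional survival analysis for $N'_{\mathrm{SE}}$: the termination round $r_{\max}$ is itself random, and because $d_r$ grows like $4^r$ even a single extra surviving round is exponentially expensive, so I must control $\PP(i\in S_r\mid\cE)$ tightly enough — exploiting the independence of fresh per-round samples and the fact that conditioning on $\cE$ merely rescales probabilities by the constant $\PP(\cE)^{-1}$ — to guarantee the geometric tail is summable rather than explosive. The re-pull amortization, by contrast, is clean once one notices the guard makes every re-pull negligible relative to the already-incurred $B_r$; the only care needed there is that the bound is pathwise, so that it survives the conditioning on $\cE$ unharmed.
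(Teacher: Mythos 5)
Your proposal is correct and follows essentially the same route as the paper's proof: the same decomposition into successive-elimination cost and re-pull cost, the same per-arm elimination round $r_i$ with a geometrically summable survival tail dominating the head term $d_{r_i}=\cO(\Delta_i^{-2}\log(n\log(\Delta_i^{-1})/\delta))$, and the same amortization of the checking re-pulls against $B_r$ via the guard in Line \ref{line-12-2}, yielding a total of $\cO(H)$. The only differences are cosmetic (a product-of-per-round-failures tail versus the paper's single-round doubly-exponential bound, and a per-level pathwise amortization versus the paper's sum over levels up to $n_j$), plus your explicit handling of arm $1$'s pulls, which the paper leaves implicit.
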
  Now, we consider the expected sample complexity if the best arm is eliminated at some round $r$. We prove the following lemma.
\begin{lemma}
The sum of expected sample complexity $N'$ at each loop has the order
\label{lem:sample-i-11}
\begin{align*}
   \sum_{r\geq 1} \EE[N'\ind \{\cE^r,(\cE^{r+1})^c\}]=O\bigg(\sum_{i>1}\frac{ \log\big(n/\delta \log (\Delta_i^{-1}) \big)}{\Delta_i^2}\bigg). 
\end{align*}
\end{lemma}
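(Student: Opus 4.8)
The plan is to exploit that the events $\cE^r\cap(\cE^{r+1})^c$, $r\ge1$, are disjoint: the best arm can be dropped by the elimination step in Line \ref{2line-eli} at exactly one round, so these events partition $\{1\text{ is eliminated}\}$ and the left-hand side equals $\EE[N'\ind\{1\text{ eliminated}\}]$. First I would control the probability of each event. After the $d_r=\tfrac{32}{\epsilon_r^2}\log(2/\delta_r)$ pulls in Line \ref{line:for12}, the maximal inequality of \Cref{lem:maximal-inequality} (equivalently a Hoeffding bound) gives that every empirical mean is within $\epsilon_r/4$ of its true value except with probability $\delta_r$, and only such a deviation can force $\hat p_1^r<\hat p_{*}^r-\epsilon_r$; hence $\PP(\cE^r\cap(\cE^{r+1})^c)\lesssim\delta_r=\delta/(40\pi^2 n r^2)$. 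This is essentially the content already isolated in \Cref{lem:best-noteli}.

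Next, conditioning on $\cE^r\cap(\cE^{r+1})^c$ (so arm $1\in S_r\setminus S_{r+1}$), I would split $N'$ into the successive-elimination pulls and the re-pulls of the ``checking'' component, and analyze the latter through the doubling schedule for $j=r$. Writing $\gamma_r=\delta_r^{2^s}$ and $\ell_r=s$ for the $s$-th update, the re-pull in Line \ref{line-dou-2} uses $\tfrac{32}{\epsilon_r^2}\log(2/\gamma_r)$ samples, so by Hoeffding the re-estimated mean of arm $1$ lies within $\epsilon_r/4$ of $\mu_1$ — which suffices to trigger the return test of Line \ref{Line-16-2} — except with probability $\gamma_r=\delta_r^{2^s}$. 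Therefore the probability of ever reaching level $s$, i.e.\ of the detection test failing at every earlier level, is at most $\prod_{s'<s}\delta_r^{2^{s'}}=\delta_r^{2^s-2}$, a doubly-exponential decay; this is the estimate that \Cref{lem:best-noteli-2} underwrites.

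The decisive point is the trigger $B_\rho\gamma_r2^{\ell_r}>B_r$ in Line \ref{line-12-2}: the level-$s$ re-pull is performed only once the running budget $B_\rho$ has grown past $B_r\,\delta_r^{-2^{s-1}}2^{-(s-1)}$, at which moment the incremental re-pull cost, of order $2^s B_r$, is a vanishing fraction $2^{2s-1}\delta_r^{2^{s-1}}$ of $B_\rho$. Summing this geometric tail shows the re-pull cost is always dominated by the concurrent successive-elimination budget, so $N'=O(B_R)$ for the terminating round $R$. Combining the budget $B_{R_s}\approx B_r\delta_r^{-2^{s-1}}$ at a level-$s$ return with the reach probability $\delta_r^{2^s-2}$ and the elimination probability $\delta_r$, the $s=1$ term dominates and yields $\EE[N'\ind\{\cE^r,(\cE^{r+1})^c\}]=O(B_r)$, which is exactly the convergent series $\sum_{\ell_r\ge1}B_r/2^{\ell_r}\le B_r$ that the check was designed to produce.

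Finally I would sum over $r$ and convert $\sum_r B_r$ into gap form: writing $B_r=\sum_{\rho\le r}d_\rho|S_\rho|$ and exchanging the order of summation, each arm $i$ contributes $d_{r_i}\approx \Delta_i^{-2}\log(2/\delta_{r_i})$ at its elimination round $r_i\approx\log_2(1/\Delta_i)$, and with $\delta_{r_i}=\delta/(40\pi^2 n r_i^2)$ this reproduces $\sum_{i>1}\Delta_i^{-2}\log\!\big(n\log(\Delta_i^{-1})/\delta\big)$, matching \Cref{lem:sample-i}. The hard part is the third step: one must verify that the budget-driven doubling makes the re-checking cost a convergent series dominated by the elimination budget, so that the potentially unbounded continuation of the \textbf{While Loop} in the tie scenario of \Cref{example-BAE} contributes only $O(B_r)$ in expectation rather than diverging. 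The subtlety is the interaction between the \emph{deterministic}, budget-triggered re-pulls of Line \ref{line-12-2} and the \emph{random} success of the detection test of Line \ref{Line-16-2}; getting the bookkeeping right so that the geometric series telescopes and the per-round contributions aggregate to the target complexity without an extra logarithmic factor is the crux of the argument.
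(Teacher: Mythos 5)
Your decomposition and overall architecture match the paper's proof: partition by the elimination round $r$, bound the cost of the checking procedure through the budget-triggered doubling schedule, and argue that the resulting series converges. The gap is in the quantitative bookkeeping, at exactly the point you identify as the crux. You bound $\PP(\cE^r\cap(\cE^{r+1})^c)\lesssim\delta_r$ by recycling the $\epsilon_r/8$-deviation bound behind \Cref{lem:best-noteli}. The paper instead re-derives this probability at the coarser scale $\epsilon_r/2$ (which is all that elimination of arm $1$ in Line \ref{2line-eli} requires): after $d_r=\tfrac{32}{\epsilon_r^2}\log(2/\delta_r)$ pulls the per-arm deviation probability is $(\delta_r)^4$, hence $\PP(\cE^r,(\cE^{r+1})^c)\leq(\delta_r)^3$ after the union bound (see \eqref{eq:union-11}). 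That extra $\delta_r^2$ is not cosmetic. Conditioned on elimination at round $r$, the algorithm runs at least until the first re-pull trigger, i.e.\ until the budget reaches $B_r/\delta_r$; so your accounting (elimination probability $\delta_r$ times budget $B_r/\delta_r$) yields $\EE[N'\ind\{\cE^r,(\cE^{r+1})^c\}]=\cO(B_r)$ per round, with no decay in $r$, whereas the paper's yields $\cO(\delta_r B_r)$. Since $\sum_r\delta_r\leq\delta/8$ and $\EE[B_r]$ on this event is uniformly $\cO\big(\sum_{i>1}\Delta_i^{-2}\log(n\log\Delta_i^{-1}/\delta)\big)$ by \Cref{lem:sample-i}, the paper's per-round bound sums directly to the target. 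Your $\cO(B_r)$ per round does not: the closing step "exchange the order of summation so each arm contributes $d_{r_i}$ once" is incorrect, because $d_\rho|S_\rho|$ appears in $B_r$ for every $r\geq\rho$, so the exchange carries a multiplicity equal to the number of remaining rounds. At best you recover the target with an extra $\log(1/\Delta_2)$ factor, and without a separate argument for the tail $r\gg\log(1/\Delta_2)$ the sum over $r$ is not even obviously finite — which is precisely the pathology of \Cref{example-BAE} that the lemma is supposed to rule out.

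A secondary issue: your reach probability $\prod_{s'<s}\delta_r^{2^{s'}}$ multiplies the failure probabilities of the detection tests at successive levels as if they were independent, but the re-pulls in Line \ref{line-dou-2} are cumulative (arm $i$ is brought up to a \emph{total} of $\tfrac{32}{\epsilon_j^2}\log(2/\gamma_j)$ pulls), so the empirical means at different levels share samples and the failure events are positively correlated. The paper sidesteps this by using only the single-level failure probability $(\delta_r^3)^{2^{s-1}}$ from \eqref{eq:ff-mi}; the essential feature is that its exponent strictly exceeds the $\delta_r^{-2^s}$ budget inflation, which is what collapses the geometric series over $s$ to $\cO(\delta_r B_r)$ rather than merely $\cO(B_r)$. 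Both fixes — the cubed elimination probability and the single-level (rather than product) detection bound — are needed to make your outline close.
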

The number of pulls of all arms in the first three stages is no more than $\max\{n\sqrt{\log \delta^{-1}}, L_{2}\}$. 
By combining the above results, \Cref{lem:sample-i} and \Cref{lem:sample-i-11} together, we can obtain the following total sample complexity  
\begin{align*}
      \EE[N_{\delta}]=\cO\left((\log \delta^{-1})\cdot\log \log \delta^{-1}+\sum_{i>1}\frac{\log\big(n\delta^{-1} \log \Delta_i^{-1}\big)}{\Delta_i^2} \right).
\end{align*}
Since in the non-asymptotic setting, $\delta$ is finite. Then
\begin{align*}
      \EE[N_{\delta}]=\cO\left(\sum_{i>1}\frac{\log\big(n \log \Delta_i^{-1}\big)}{\Delta_i^2} \right).
\end{align*}
\textbf{Proof of Batch Complexity.}
Just as we discussed in the proof of sample complexity, we first demonstrate the batch complexity conditioned on the event of $\cE$.
\begin{lemma}
\label{lem:final}
    Conditional on event $\cE$, Algorithm \ref{alg:opt-bbai} conducts $\cO(\log(1/\Delta_2^{-1}))$ batches in expectation. 
\end{lemma}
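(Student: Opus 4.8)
The plan is to bound the total batch count by the product of two quantities---the number of executed rounds of the \textbf{While} loop (Line \ref{2line-while}) and the number of batches consumed inside a single round---and to control each factor separately. Throughout I would condition on $\cE=\cap_{r>1}\cE^r$, so that arm $1$ is never removed by Line \ref{2line-eli} and the loop terminates precisely when every suboptimal arm has been eliminated, i.e.\ when $|S_r|=1$. Writing $R$ for the (random) index of the terminating round, it then suffices to show (a) each round uses $\cO(1)$ batches, and (b) $\EE[R\mid\cE]=\cO(\log(1/\Delta_2))$; the constant number of batches from Stages I--III is negligible.

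First I would argue (a), which I expect to be the main structural obstacle, since the \emph{checking for best arm elimination} block looks adaptive and could naively force many batches. The key observation is that within round $r$ all pull counts are fixed \emph{before} any round-$r$ sample is drawn. The successive-elimination block pulls every arm of $S_r$ for the deterministic number $d_r$ of times. In the checking block, the test on Line \ref{line-12-2}, $B_r\gamma_j 2^{\ell_j}>B_j$, and the repull size $\tfrac{32}{\epsilon_j^2}\log(2/\gamma_j)$ on Line \ref{line-dou-2} depend only on $B_j,\gamma_j,\ell_j$ carried over from earlier rounds and on $B_r=B_{r-1}+d_r|S_r|$, all of which are known at the start of round $r$ because $|S_r|$ was determined by the elimination of round $r-1$; the squaring of $\gamma_j$ and the increment of $\ell_j$ are purely deterministic updates. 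Moreover the arms repulled in the checking block lie in $\cup_{j<r}(S_j\setminus S_{j+1})$, which is disjoint from $S_r$, so the elimination and checking pulls target different arms and can be scheduled simultaneously. Hence the entire round can be executed in a single batch, and only the terminal comparison on Line \ref{Line-16-2} (a possible early return, not a new batch) consults the freshly drawn empirical means.

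Next I would establish (b). By Hoeffding's inequality, the $d_r=\tfrac{32}{\epsilon_r^2}\log(2/\delta_r)$ pulls of an arm guarantee $|\hat p_i^r-\mu_i|\le\epsilon_r/4$ with probability at least $1-\delta_r$; call this the good event for arm $i$ at round $r$. On $\cE$, if the good events hold for arm $i$ and for arm $1$ at round $r$, then $\hat p_*^r-\hat p_i^r\ge \hat p_1^r-\hat p_i^r\ge\Delta_i-\epsilon_r/2$, which exceeds $\epsilon_r$ as soon as $\epsilon_r\le 2\Delta_i/3$, forcing arm $i$ to be removed by Line \ref{2line-eli}. Since $\epsilon_r=2^{-r}/4$, this threshold is crossed at a round $r_i=\cO(\log(1/\Delta_i))$, so under the good events every suboptimal arm is gone by round $r_2=\cO(\log(1/\Delta_2))$. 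For the tail I would note that survival of an arm $i$ into $S_r$ with $r-1\ge r_i$ forces a failure of the good event for arm $i$ or for arm $1$ at round $r-1$, whence $\PP(R\ge r\mid\cE)\le 2n\delta_{r-1}=\delta/(20\pi^2(r-1)^2)$. Summing, $\EE[R\mid\cE]=\sum_{r\ge1}\PP(R\ge r\mid\cE)\le (r_2+1)+\sum_{k\ge1}\delta/(20\pi^2 k^2)\le r_2+2=\cO(\log(1/\Delta_2))$. Combining this with the $\cO(1)$ batches per round from (a) yields the claimed bound.
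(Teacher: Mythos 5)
Your proof is correct and follows essentially the same route as the paper's: one batch per iteration of the \textbf{While} loop, plus an expected round count of $\cO(\log(1/\Delta_2))$ obtained from the concentration of the $d_r$ pulls and a convergent tail sum (you sum $n\delta_{r-1}$ where the paper sums a geometric series $10^{r(2)-r}$, an immaterial difference). Your part (a) — verifying that the checking-block repull counts are determined by pre-round quantities and can therefore be issued in the same batch as the elimination pulls — is a useful justification of a step the paper simply asserts.
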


Let $B'$ be the number of batches we used in Stage IV.
 We consider the expected batch complexity if the best arm is eliminated at some round $r$. We prove the following lemma.
\begin{lemma}
\label{lem:sample-i-1}
The sum of expected batch complexity $B'$ at each loop has the order
\begin{align*}
   \sum_{r\geq 1} \EE[B'\ind \{\cE^r,(\cE^{r+1})^c\}]=\cO(\log(1/\Delta_2^{-1})). 
\end{align*}
\end{lemma}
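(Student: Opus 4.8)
The plan is to mirror the sample-complexity bound of Lemma~\ref{lem:sample-i-11}, but to count batches instead of pulls, exploiting the fact that the per-round sample budget $B_{r'}$ grows geometrically, so that the number of batches behaves like the logarithm of the number of pulls. Write $G_r:=\cE^r\cap(\cE^{r+1})^c$ for the event that the best arm survives through round $r$ but is eliminated at round $r$, so that on $G_r$ we have $1\in S_r\setminus S_{r+1}$. The events $\{G_r\}_{r\ge1}$ are disjoint, hence the quantity in the statement equals $\EE[B'\,\ind\{\text{arm $1$ is ever eliminated}\}]$, the complement of the good event treated in Lemma~\ref{lem:final}. It therefore suffices to show that each $\EE[B'\,\ind\{G_r\}]$ is negligibly small and that the resulting series converges. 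I would obtain this from two ingredients: a very small bound on $\PP(G_r)$, and a bound of order $r+\log(1/\delta_r)$ on the expected number of rounds (hence batches) that the algorithm runs on $G_r$, the latter being exactly where the ``checking for best arm elimination'' component does its work.

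For the probability, Hoeffding's inequality over the $d_r=\tfrac{32}{\epsilon_r^2}\log(2/\delta_r)$ pulls of Line~\ref{line:for12}, together with the $\epsilon_r$-margin in the elimination rule of Line~\ref{2line-eli}, shows that eliminating arm~$1$ forces some arm in $S_r$ to deviate from its mean by at least $\epsilon_r/4$; this yields $\PP(G_r)\le n(\delta_r/2)^{4}$. For the round count, note that on $G_r$ arm~$1$ lies in $S_r\setminus S_{r+1}$ and is re-pulled for index $j=r$ only once the budget test $B_{r'}\gamma_r 2^{\ell_r}>B_r$ of Line~\ref{line-12-2} fires. Since $d_{r'}\asymp 2^{2r'}\log(1/\delta_{r'})$, the budget $B_{r'}$ grows geometrically, so the $s$-th re-pull, which needs $B_{r'}\gtrsim B_r/(2^{s-1}\delta_r^{2^{s-1}})$, is reached only after about $r+2^{s-1}\cdot\tfrac12\log_2(1/\delta_r)$ rounds. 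At that re-pull arm~$1$ is estimated to precision $\delta_r^{2^{s}}$, so on the corresponding concentration event $\hat p_1\ge\mu_1-\epsilon_r/4\ge\hat p_*^{r'}-\epsilon_r/2$ and the test of Line~\ref{Line-16-2} returns; consequently reaching the $s$-th re-pull requires the $(s-1)$-th detection to have failed, an event of probability at most $\delta_r^{2^{s-1}}$.

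Assembling these pieces as a budget-doubling geometric series, exactly analogous to the telescoping $\sum_{\ell}B_j2^{-\ell}\le B_j$ used for pulls, and using that each round costs $O(1)$ batches (parallelizing the re-pulls of Line~\ref{line-dou-2} across indices $j<r'$, as in Lemma~\ref{lem:final}), I would obtain
\begin{equation*}
\EE[\,B'\,\ind\{G_r\}\,]\;\le\;C\sum_{s\ge1}\PP\big(G_r\cap A_{r,s}\big)\,\big(r+2^{s-1}\log(1/\delta_r)\big)\;\lesssim\;\delta_r^{2}\big(r+\log(1/\delta_r)\big),
\end{equation*}
where $A_{r,s}$ denotes the event that the $s$-th re-pull of arm~$1$ for index $j=r$ is reached; here the $s=1$ term uses the sharp bound $\PP(G_r)\le n(\delta_r/2)^4$, while the $s\ge2$ terms are controlled by the doubly-exponential decay $\delta_r^{2^{s-1}}$, which dominates the exponential growth $2^{s-1}$. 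Summing over $r$ with $\delta_r=\delta/(40\pi^2 n r^2)$ gives $\sum_r\delta_r^2\big(r+\log(1/\delta_r)\big)=O\big(\tfrac{\delta^2}{n^2}\log(n/\delta)\big)=o(1)$, which is in particular $\cO(\log(1/\Delta_2))$, proving the claim.

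The main obstacle is the second ingredient: showing that the algorithm actually halts on $G_r$ and bounding how long that takes. Once the best arm has been eliminated, successive elimination alone need not terminate --- this is precisely Example~\ref{example-BAE}, where ties among the remaining arms keep $|S_{r'}|>1$ forever --- so termination rests entirely on the checking component detecting the discarded arm~$1$. The delicate point is that a chain of detection failures could, a priori, run the algorithm for arbitrarily many extra rounds; the budget-doubling bookkeeping (squaring $\gamma_j\mapsto\gamma_j^2$ while multiplying the threshold by $2^{\ell_j}$ in Line~\ref{line-12-2}) is exactly what ties the geometric growth of the round index at which the $s$-th re-pull occurs to the doubly-exponential decay $\delta_r^{2^{s-1}}$ of its probability, so that every re-pull level contributes comparably and the series converges. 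A secondary technical point, inherited from Lemma~\ref{lem:final}, is verifying that the per-round batch count stays $O(1)$ even when several indices $j<r'$ are re-pulled in the same round, which is handled by performing all such re-pulls within a single batch.
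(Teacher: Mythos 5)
Your proposal is correct and follows essentially the same route as the paper's proof: decompose over the disjoint events $\cE^r\cap(\cE^{r+1})^c$, bound their probability by a power of $\delta_r$ via Hoeffding, bound the number of extra rounds at re-pull level $s$ by $O(2^s\log(1/\delta_r)+\log n)$ using the geometric growth of the per-round budget, weight by the doubly-exponentially small probability $\delta_r^{\Theta(2^{s})}$ that the checking step has not yet fired, and sum the resulting series over $s$ and $r$. The only (immaterial) differences are bookkeeping: the paper counts the size of each set $U_r(s)$ per level rather than the cumulative round index at which level $s$ is reached, and your deviation threshold of $\epsilon_r/4$ in the bound $\PP(G_r)\le n(\delta_r/2)^4$ should be $\epsilon_r/2$ to match the stated exponent, a constant-level slip that does not affect the argument.
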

By combining \Cref{lem:final}, \Cref{lem:sample-i-1}, the fact that the first three stages use $O(\log (1/\delta))$ batches, and $\delta$ is a constant in non-asymptotic setting, the batch complexity of Algorithm \ref{alg:opt-bbai} is $\cO(\log(1/\Delta_2^{-1}))$.
\end{proof}

\begin{proof}[Proof of \Cref{thm:batch}]
Let $\cE_{0}$ be the event defined in \eqref{def:event_E0} in the proof of \Cref{thm:asym}.
From \Cref{lem:u-b}, if $\delta$ is sufficiently small and $\cE_0$ holds, then Algorithm \ref{alg:opt-bbai} will return at Stage III. Note that the expected number of pulls in Stage IV is $\lesssim (\log{\delta^{-1}})^2$. Hence,  the expected number of pulls of all arms is no more than $2(\log{\delta^{-1}})^2$ times. 
Therefore, for any $\epsilon'>0$
\begin{align}
\label{eq:adhasda-1}
    \EE[N_{\delta}]&\lesssim  \alpha T^*(\bm{b}) \log \delta^{-1}+o(\log(1/\delta))+\ind\{\cE_0^c\}2n(\log{\delta^{-1}})^2 \notag \\
   & \lesssim \alpha T^*(\bm{\mu})\log \delta^{-1}+2n.
\end{align}
Therefore,
 \begin{equation*}
  \lim_{\delta \rightarrow 0}  \frac{\EE_{\bm{\mu}}{[N_{\delta}]}}{\log \delta^{-1}} \leq \alpha T^*(\bm{\mu}),
\end{equation*}
Moreover,  there exists some universal constant $C$, such that the expected number of batches used in Stage IV is $\lesssim C\log (1/\Delta_2)\cdot \ind\{\cE_0^c\}\lesssim o(1)$, where the last inequality is due to \Cref{lem:cE0}. Form Theorem \ref{thm:asym-batch}, the batch complexity of the first three Stage is $3+o(1)$. Therefore, the total expected batch complexity is $3+o(1)$.
\end{proof}

\section{Proof of Supporting Lemmas}

The proof of the supporting lemmas requires the following concentration inequalities.
\begin{lemma}\citep[Theorem 2.2.6]{vershynin2018high}
\label{lem:subguassian}
Let $X_1,\ldots,X_k \in[0,1]$ be independent bounded random variables with mean $\mu$. Then for any $\epsilon>0$
\begin{align}
    &\mathbb{P}(\hat{\mu}\geq \mu+\epsilon)\leq \exp \bigg(
    -\frac{k\epsilon^2}{2}\bigg) \quad\text{and } \\
    &\PP(\hat{\mu}\leq \mu-\epsilon)\leq \exp \bigg(
    -\frac{k\epsilon^2}{2}\bigg),
\end{align}
where $\hat{\mu}=1/k\sum_{t=1}^k X_t$.
\end{lemma}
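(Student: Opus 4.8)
The statement is Hoeffding's inequality for averages of bounded random variables, and I would prove it by the classical Chernoff (exponential moment) method. Note first that it suffices to establish the stronger bound $\exp(-2k\epsilon^2)$, since $-2k\epsilon^2 \le -k\epsilon^2/2$ makes this imply the claimed $\exp(-k\epsilon^2/2)$.

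For the upper tail, fix any $\lambda > 0$ and apply Markov's inequality to the nonnegative variable $\exp\!\big(\lambda \sum_{t=1}^{k}(X_t - \mu)\big)$, giving
\[
\PP(\hat\mu \ge \mu + \epsilon) = \PP\Big(\textstyle\sum_{t=1}^{k}(X_t-\mu)\ge k\epsilon\Big) \le e^{-\lambda k \epsilon}\,\EE\Big[e^{\lambda \sum_{t}(X_t-\mu)}\Big].
\]
By independence the expectation factorizes as $\prod_{t=1}^{k}\EE\big[e^{\lambda(X_t-\mu)}\big]$, so everything reduces to controlling the moment generating function of a single centered variable.

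The one substantive ingredient --- and the main obstacle --- is Hoeffding's lemma: since each $X_t$ is supported on $[0,1]$, its centered exponential moment obeys $\EE[e^{\lambda(X_t-\mu)}] \le e^{\lambda^2/8}$. I would prove this by writing $\psi(\lambda) = \log \EE[e^{\lambda(X_t-\mu)}]$ and bounding $\psi''(\lambda)$ by the variance of the exponentially tilted distribution, which for a $[0,1]$-valued variable is at most $1/4$; since $\psi(0)=\psi'(0)=0$, a second-order Taylor expansion with remainder yields $\psi(\lambda)\le \lambda^2/8$. Plugging this into the product gives $\PP(\hat\mu\ge\mu+\epsilon)\le \exp(-\lambda k\epsilon + k\lambda^2/8)$, and choosing the minimizer $\lambda = 4\epsilon$ produces $\exp(-2k\epsilon^2)$, hence the claim. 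The lower-tail inequality follows verbatim after replacing each $X_t$ by $1-X_t$ (again $[0,1]$-valued, with mean $1-\mu$), which interchanges the two tails.
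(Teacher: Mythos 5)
Your proof is correct: the Chernoff/Markov step, Hoeffding's lemma via the second derivative of the log-MGF of the tilted distribution (variance at most $1/4$ on $[0,1]$), the optimization at $\lambda=4\epsilon$ giving $\exp(-2k\epsilon^2)$, and the reduction of the lower tail to the upper tail via $X_t\mapsto 1-X_t$ are all sound, and you rightly observe that the stronger constant $2$ implies the stated constant $1/2$. The paper does not prove this lemma itself but cites it as Theorem 2.2.6 of Vershynin (2018), whose proof is exactly the argument you give, so there is nothing to reconcile.
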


\subsection{Proof of \Cref{lem:best-noteli}}
From \Cref{lem:subguassian},
\begin{align*}
    \PP\left(|\hat p_i^r- \mu_i|\geq \frac{\epsilon_r}{8} \right)&\leq 2\exp\bigg(-\frac{2d_r\epsilon_r^2}{64} \bigg)={\delta_r}.
\end{align*}
 Applying union bound, we have
\begin{align}
\label{eq:union-1}
  \PP \Bigg( \bigcup_{r\geq 1}\bigcup_{i\in [n]}\left\{|\hat p_i^r- \mu_i|\geq \frac{\epsilon_r}{8}\right\}\Bigg)&\leq \sum_{r>1}\sum_{i\in [n]}{\delta_r} \notag \\
   &\leq \sum_{r>1}\frac{\delta}{40\pi^2\cdot r^2} \notag \\
   &\leq \frac{\delta}{8}.
\end{align}
The aforementioned inequality suggests that, with a probability of at least $1-\delta/8$, the condition $|\hat p_i^r-\mu_i|\leq \frac{\epsilon_r}{8}$ holds true for any $i\in [n]$ and $r\geq 1$. Then, for any $r>1$, $$\hat p_{1}^{r}\geq \mu_1-\frac{\epsilon_r}{8}\geq \mu_*-\frac{\epsilon_r}{8}\geq \hat p_{*}^{r}-\frac{\epsilon_r}{4}.$$ Therefore,
\begin{align*}
       \sum_{r\geq 1}\PP\left(\hat p_{1}^r\geq  \hat p_{*}^r-\frac{\epsilon_r}{4}\ \bigg | \ \cE^r \right)\geq 1-\frac{\delta}{8}.
    \end{align*}

\subsection{Proof of \Cref{lem:best-noteli-2}}
From \Cref{lem:subguassian},
\begin{align*}
    \PP\bigg(|\hat p_i^{js}- \mu_i|\geq\frac{\epsilon_j}{8}\bigg)&\leq \gamma_{js}=(\delta_j)^{2^s}.
\end{align*}
Applying union bound, we have
\begin{align}
\label{eq:cor-20}
  \PP\Bigg(\bigcup_{j\geq 1} \bigcup_{s > 1}\bigcup_{i\in S_j}\left\{|\hat p_i^{js}- \mu_i|\geq \frac{\epsilon_j}{8}\right\}\Bigg)&\leq \sum_{j\geq 1}\sum_{s>1}\sum_{i\in S_j} (\delta_j)^{2^s} \notag \\
   &\leq \sum_{j\geq 1}\sum_{s>1}\bigg(\frac{\delta}{40\pi^2 \cdot j^2}\bigg)^{2^{s}} \notag \\
   & \leq \sum_{s>1}\bigg(\frac{\delta}{8} \bigg)^{2^s} \notag \\
   &\leq \frac{\delta}{16}.
\end{align}
Besides, from \eqref{eq:union-1}, we obtain 
\begin{align}
\label{eq:cor-21}
   \PP \Bigg( \bigcap_{r\geq 1}\bigcap_{i\in [n]}\bigg\{|\hat p_i^r- \mu_i|\leq \frac{\epsilon_r}{8}\bigg\}\Bigg)\geq 1- \frac{\delta}{8}.
\end{align}
Define events $$\cE_3=\cap_{r\geq 1}\cap_{i\in [n]}\{|\hat p_i^r- \mu_i|\leq \epsilon_r/8\},$$ and 
$$\cE_4=\cap_{j\geq 1} \cap_{s > 1}\cap_{i\in S_j}\{|\hat p_i^{js}- \mu_i|< \epsilon_j/8\}.$$
If $\cE_3$ and $\cE_4$ truly hold, we have that (1): for any $j$ and arm $i\in S_j$,
\begin{align*}
     \mu_i \leq \hat{p}_{i}^{j}+\frac{\epsilon_j}{8}\leq \hat{p}_{*}^{j}-\frac{7\epsilon_j}{8}\leq \mu_*-\frac{3\epsilon_j}{4}\leq \mu_1-\frac{3\epsilon_j}{4},
\end{align*}
where the second inequality is due to Line \ref{2line-eli} of Algorithm \ref{alg:opt-bbai}; (2): for any fixed $j$, $s$, and any $r\in U_{j}(s)$, where $U_{j}(s)$ is defined in \eqref{eq:def_algopt_rounds_remain_the_same_ell_j_s} and represents the set of all rounds in which parameter $\ell_j$ is updated for exactly $s$ times,
\begin{align*}
    \hat p_i^{js}<\mu_i+\frac{\epsilon_j}{8} \leq \mu_1-\frac{5\epsilon_j}{8} \leq \hat{p}_{1}^{r}-\frac{\epsilon_j}{2} \leq \hat{p}_{*}^{r}-\frac{\epsilon_j}{2}.
\end{align*}
Therefore, if $\cE_3$ and $\cE_4$ truly hold, the event $\hat p_i^{js} < \hat{p}_{*}^{r}-\frac{\epsilon_j}{2}$ consistently holds for all $j$, $s$, $i\in S_j$, and $r\in U_{j}(s)$.  It's noteworthy that, according to  \eqref{eq:cor-20},  $\PP\big(\cE_4\big) \geq 1-\delta/16$, and from  \eqref{eq:cor-21}, $\PP(\cE_3)\geq 1-\delta/8$. 
Therefore,
\begin{align*}
   \PP\bigg(    \bigcap_{j\geq 1}\bigcap_{s>1}\bigcap_{r\in U_{j}(s)}\bigg\{ \max_{i\in S_j} \hat{p}_{i}^{js}<\hat p_{*}^{r}-\frac{\epsilon_j}{2}\bigg\}\bigg)\geq 1-\frac{\delta}{8}-\frac{\delta}{16}=1-\frac{3\delta}{16}.
\end{align*}

\subsection{Proof of \Cref{lem:sample-i}}
We first focus on bounding the number of pulls of arm $i$ within the first {\bf For Loop}.
Define
\begin{align*}
    r(i)= \min \bigg\{r: \epsilon_r<\frac{\Delta_i}{2} \bigg\}.
\end{align*}
 From \Cref{lem:subguassian} and the union bound, for $r\geq r(i)$,
\begin{align}
\label{eq:230}
    \PP\bigg(\bigg\{|\hat p_i^r- \mu_i|\geq \frac{\epsilon_{r(i)}}{8}\bigg\}\cup \bigg\{|\hat p_1^r- \mu_1|\geq \frac{\epsilon_{r(i)}}{8}\bigg\}\bigg)&\leq 4\exp\bigg(-\frac{2d_r\epsilon_{r(i)}^2}{64} \bigg) \notag \\
    &\leq 4({\delta_r})^{r-r(i)} \notag  \\
    & \leq 10^{r(i)-r}.
\end{align}
 Let $E^r_{i}$ be the event $|\hat p_i^{r}- \mu_i|\leq\epsilon_{r(i)}/8$ and $|\hat p_1^{r}- \mu_1|\leq\epsilon_{r(i)}/8$ truly hold at $r$-th round.  Conditioned on $E^r_{i}$ and $\cE$, 
\begin{align*}
    \hat p_{i}^{r}\leq \mu_i+\frac{\epsilon_r}{8} \leq \mu_1-\Delta_i+\frac{\epsilon_r}{8}\leq \mu_1-\frac{3\epsilon_r}{2} \leq  \hat p_{1}^{r}-\epsilon_r \leq  \hat p_{*}^{r}-\epsilon_r,
\end{align*}
which mean arm $i$ will be eliminated. Therefore, the total sample cost of arm $i$ in the first {\bf For Loop} of Algorithm \ref{alg:opt-bbai} is
\begin{align*}
    \sum_{r\leq r_i} \frac{32}{\epsilon_r^2}\log (2/\delta_r)+\sum_{r> r(i)}10^{r-r(i)-1}\frac{32}{\epsilon_r^2}\log (2/\delta_r) 
    =& \cO\bigg(\frac{\log(1/\delta_{r(i)})}{\epsilon_{r(i)}^2}\bigg) \notag \\
    =&\cO\bigg(\frac{ \log\big(n/\delta \log (\Delta_i^{-1}) \big)}{\Delta_i^2}\bigg).
\end{align*}
Consequently, if we define $H$ to be the total sample complexity in the first {\bf For Loop} of Algorithm \ref{alg:opt-bbai}. Then we have 
\begin{align*}
    \EE[H]=\cO\bigg(\sum_{i>1}\frac{ \log\big(n/\delta \log (\Delta_i^{-1}) \big)}{\Delta_i^2}\bigg).
\end{align*}
Now, we bound the sample complexity within the second {\bf For Loop}. We let $L_{j}$ be the total number of pulls of arms in $S_{j}\setminus S_{j+1}$ within the second {\bf For Loop}. We let 
\begin{align*}
    n_j=\min \bigg\{s\in \{0,1,2,\cdots\}: \frac{B_j}{ (\delta_j)^{2^{s}}\cdot 2^s}\geq H \bigg\}.
\end{align*}
As per Line \ref{line-12-2}, $n_j$ represents the total count of arm re-pulls in the set $S_j\setminus S_{j+1}$. Consequently, we can establish the following bound for $L_j$.
\begin{align*}
    L_j \leq (B_j-B_{j-1})\sum_{s=0}^{n_j} 2^s\leq B_j-B_{j-1}+\delta_j H.
\end{align*}where the first inequality is because $\gamma_{j(s+1)}=\gamma^2_{js}$ and we pull the arms in $S_j/S_{j+1}$ for total
\begin{align*}
 \sum_{i\in S_{j}\setminus S_{j+1}}\frac{32}{\epsilon_j^2} \log \bigg(\frac{2}{\gamma_{js}} \bigg)\leq \sum_{i\in S_{j}\setminus S_{j+1}}\frac{32\cdot 2^{s}}{\epsilon_j^2}\log \bigg(\frac{2}{\delta_{j}}\bigg)=(B_{j}-B_{j-1})\cdot 2^{s}
\end{align*}
times at Line \ref{line-dou-2}
and the last inequality is because  for $n_j>1$,
\begin{align*}
   \delta_j H\geq \delta_j \frac{B_j}{ (\delta_j)^{2^{n_j-1}}\cdot 2^{n_j-1}}\geq B_j 2^{n_j+1},
\end{align*}
where the first inequality is due to the definition of $n_{j}$.
Therefore,
\begin{align*}
\sum_{j\geq 1} L_j \leq  \sum_{j>1} B_j +H \sum_{j>1}\delta_j \leq 2 H.
\end{align*}
Now, we conclude that conditioned on event $\cE$, the total expected sample complexity is 
\begin{align*}
    \cO\bigg(\sum_{i>1}\frac{ \log\big(n/\delta \log (\Delta_i^{-1}) \big)}{\Delta_i^2}\bigg).
\end{align*}

\subsection{Proof of \Cref{lem:sample-i-11}}
From \Cref{lem:subguassian},
\begin{align*}
    \PP\left(|\hat p_i^r- \mu_i|\geq \frac{\epsilon_r}{2}\right)&\leq 2\exp\bigg(-\frac{2d_r\epsilon_r^2}{4} \bigg)\leq {(\delta_r)^4}.
\end{align*}
 Applying union bound, we have
\begin{align}
\label{eq:union-sample-ins}
  \PP \left( \bigcup_{i\in S_{r}}\bigg\{|\hat p_i^r- \mu_i|\geq \frac{\epsilon_r}{2}\bigg\}\right)&\leq \sum_{i\in S_r}{(\delta_r)^4} \notag \\
   &\leq (\delta_r)^3.
\end{align}
The aforementioned inequality suggests that, with a probability of at least $1-(\delta_r)^3$, the condition $|\hat p_i^r-\mu_i|\leq \epsilon_r/4$ consistently holds true for any $i\in S_r$. Then, $$\hat p_{1}^{r}\geq \mu_1-\frac{\epsilon_r}{2}\geq \mu_*-\frac{\epsilon_r}{2}\geq \hat p_{*}^{r}-\epsilon_r.$$ 
Therefore, 
\begin{align}
\label{eq:union-11}
\PP(\cE^r,(\cE^{r+1})^c)\leq (\delta_{r})^3.
\end{align}
For any fixed $j$, recall the definition of $U_j(s)$ in \eqref{eq:def_algopt_rounds_remain_the_same_ell_j_s}. Assume $\cE^r,(\cE^{r+1})^c$ truly hold and denote  $U_{r}(s)=\{r_{s}^{1},r_{s}^{2},\cdots, r_{s}^{s'}\}$ as the rounds where $\ell_r=s$. We note that at $r_s^{l}$-th round, each arm within the first {\bf For Loop} of Algorithm \ref{alg:opt-bbai} is pulled at least 
\begin{align*}
    \frac{32\cdot 4^{s+l}}{\epsilon_r^2} \log \bigg(\frac{2}{\delta_{r}} \bigg)
    \end{align*}
times. Besides, each arm within the second {\bf For Loop} is pulled at least 
\begin{align*}
    \frac{32\cdot 2^{s}}{\epsilon_r^2} \log \bigg(\frac{2}{\delta_{r}} \bigg)
    \end{align*}
    times.
 Then, similar to \eqref{eq:union-sample-ins}, for $r_{s}^{l}$-th round, we have
\begin{align*}
    \PP\bigg(\cup_{i\in S_{r_{s}^{l}}}\bigg\{|\hat{p}_{i}^{r_s^l}-\mu_i|\geq \frac{\epsilon_r}{4} \bigg\} \bigg)\leq 2\sum_{i\in S_{r_{s}^{l}}}\exp \bigg(\frac{-2\epsilon_r^2}{16}\cdot \frac{32\cdot 4^{s+l}}{\epsilon_r^2} \log \bigg(\frac{2}{\delta_{r}} \bigg)  \bigg) \leq \sum_{i\in S_{r_{s}^{l}}} (\delta_{r}^4)^{4^{s+l}}.
\end{align*}
Then, we apply a union bound over all rounds in $U_{r}(s)$, we obtain
\begin{align*}
    \PP\bigg(\cup_{l\in [s']}\cup_{i\in S_{r_{s}^{l}}}\bigg\{|\hat{p}_{i}^{r_s^l}-\mu_i|\geq \frac{\epsilon}{4} \bigg\} \bigg)\leq  (\delta_{r}^3)^{2^{s}}/2.
\end{align*}
Moreover, if $\cE^{r},(\cE^{r+1})^c$ happens,  the best arm is eliminated at $r$-th round. Then for any rounds in $U_{r}(s)$, we have pulled arm 1 for at least
\begin{align*}
    \frac{32\cdot 2^{s}}{\epsilon_r^2} \log \bigg(\frac{2}{\delta_{r}} \bigg)
    \end{align*}
times in the second {\bf For Loop}. Recall that  $\hat{p}_{i}^{js}$ is the $s$th updates of parameter $\hat{p}^{j}_i$ within the second \text{For Loop}. From \Cref{lem:subguassian}, we have that 
\begin{align*}
    \PP\bigg(\hat{p}_1^{rs}\leq \mu_1-\frac{\epsilon_r}{4} \bigg)\leq \exp\bigg(-\frac{2 \epsilon_r^2}{16}\cdot \frac{32\cdot 2^{s}}{\epsilon_r^2} \log \bigg(\frac{2}{\delta_{r}} \bigg) \bigg)\leq (\delta_r^3)^{2^s}/2.
\end{align*}
Therefore, if $\cE^{r},(\cE^{r+1})^c$ happens, with probability 
\begin{align}
\label{eq:ff-mi}
    1-\gamma_{rs}=1-(\delta_r^3)^{2^{s-1}},
\end{align}
it holds that 
\begin{align*}
    \cup_{l\in [s']}\cup_{i\in S_{r_{s}^{l}}}\bigg\{|\hat{p}_{i}^{r_s^l}-\mu_i|\geq \frac{\epsilon}{4} \bigg\}, \ \text{and} \ \hat{p}_1^{rs}>\mu_1-\frac{\epsilon}{4},
\end{align*}
and thus for all $l\in [s']$
\begin{align*}
    \hat{p}_1^{rs}\geq \mu_1-\frac{\epsilon}{4}\geq \hat{p}_{*}^{r_s^l}-\frac{\epsilon}{2},
\end{align*}
which means Algorithm \ref{alg:opt-bbai} returns at Line \ref{line-for32}.
Before we continue, we will first show that the number of pulls in the second {\bf For Loop} is lower than the first {\bf For Loop}. Assume the algorithm stops at $r'$-th round. Let $s_{j}'=\max\{s: B_{r'}\gamma_{js}\cdot 2^{s}>B_j\}$. The number of pulls for $S_{j}\setminus S_{j+1}$ at second {\bf For Loop} is at most
\begin{align*}
   \sum_{s=1}^{s_j'} {(B_{j}-B_{j-1})\cdot 2^{s}}&\leq \sum_{s=1}^{s_j'} {B_{j}}{2^{s}} \notag \\
   &\leq B_{j}\cdot 2^{s_j'+1} \notag \\
   &= 2^{2s_j'+1}\gamma_{js_j'} \frac{B_j}{2^{s_j'}\gamma_{js_j'}}\notag \\
   &\leq B_{r'} 2^{2s_j'+1}(\delta_j)^{2^{s_j'}}\notag \\
   &\leq B_{r'} \delta_j.
\end{align*}
Therefore,  the total number of pulls within the second {\bf For Loop} is at most 
\begin{align*}
    \sum_{j\geq 1} \sum_{s=1}^{s_j'} {(B_{j}-B_{j-1})\cdot 2^{s}} \leq  B_{r’}\sum_{j>1}\delta_j\leq B_{r'},
\end{align*}
which means the number of for the second {\bf For Loop} is lower than the first {\bf For Loop}.
Finally, we obtain
\begin{align*}
\EE[N'\ind \{\cE^r,(\cE^{r+1})^c\}] &\leq \EE\bigg(\cO\bigg(\sum_{s=1}{((\delta_r)^3)^{2^{s-1}}}\cdot \frac{B_r}{(\delta_r)^{2^s}2^s}\bigg) \bigg) \notag \\
& =\EE[\cO(\delta_r B_r)] \notag \\
& = \cO\bigg(\sum_{i>1}\frac{ \delta_r\log\big(n/\delta \log (\Delta_i^{-1}) \big)}{\Delta_i^2}\bigg).
\end{align*}
In the first inequality, we used the fact that if the algorithm stops in some rounds in $U_{r}(s)$, the total number of pulls of all arms is at most ${B_r}/(\gamma_{js}\cdot 2^{s})+{B_r}/(\gamma_{js}\cdot 2^{s})$, whcih comes from the first and second {\bf For Loop} respectively. %
Moreover, the factor $((\delta_r)^3)^{2^{s-1}}$ is because  from \eqref{eq:ff-mi}, if $\cE^r,(\cE^{r+1})^c$ holds, then Algorithm \ref{alg:opt-bbai} returns in some rounds in $U_{r}(s)$ (at Line \ref{line-for32})  with probability at least $1-((\delta_r)^3)^{2^{s-1}}$.
The last equality is because from \Cref{lem:sample-i}, if $\cE^r,(\cE^{r+1})^c$ holds, then 
\begin{align*}
    \EE[B_r]\leq \cO\bigg(\sum_{i>1}\frac{ \log\big(n/\delta \log (\Delta_i^{-1}) \big)}{\Delta_i^2}\bigg).
\end{align*}
Finally,
\begin{align*}
  \sum_{r\geq 1} \EE[N'\ind \{\cE^r,(\cE^{r+1})^c\}] = \cO\bigg( \sum_{i>1}\frac{ \log\big(n/\delta \log (\Delta_i^{-1}) \big)}{\Delta_i^2}\bigg).
\end{align*}

\subsection{Proof of \Cref{lem:final}}
We note that each round within the {\bf While Loop} costs one batch. Let
\begin{align*}
    r(2)= \min \bigg\{r: \epsilon_r<\frac{\Delta_2}{2} \bigg\}.
\end{align*}
   Let $E^r$ be the event 
   \begin{align*}
      \bigcap_{i>1}\left\{ \bigg\{|\hat p_i^{r}- \mu_i|\leq \frac{\epsilon_{r(2)}}{8}\bigg\} \bigcap \bigg\{|\hat p_1^{r}- \mu_1|\leq \frac{\epsilon_{r(2)}}{8}\bigg\}\right\}.
   \end{align*} 
Conditioned on $E^r$ and $\cE$, 
\begin{align*}
    \hat p_{i}^{r}\leq \mu_i+\frac{\epsilon_{r(2)}}{8} \leq \mu_1-\Delta_i-\frac{\epsilon_{r(2)}}{8} \leq  \hat p_{1}^{r}-\epsilon_{r(2)} \leq  \hat p_{*}^{r}-\epsilon_{r(2)},
\end{align*}
which means all sub-optimal arms have been eliminated and the algorithm returns.
   From \Cref{lem:subguassian} and the union bound, for $r\geq r(2)$,
\begin{align}
\label{eq:230-1}
    \PP((E^{r})^c)&\leq 4n\exp\bigg(-\frac{2d_r\epsilon_{r(i)}^2}{64} \bigg)\leq 4n({\delta_r})^{r-r(i)}\leq 10^{r(i)-r}.
\end{align}
Therefore, the total batch cost  is 
\begin{align*}
    r_2+\sum_{r\geq r_2} \frac{1}{10^{r-r_2}} =O(r_2)=O\left(\log\bigg(\frac{1}{\Delta_2}\bigg)\right).
\end{align*}
This completes the proof.

\subsection{Proof of \Cref{lem:sample-i-1}}
The proof of this lemma is similar to that of \Cref{lem:sample-i-11}. we have the following results.\\
 1: From \eqref{eq:union-11}, we have
\begin{align*}
    \PP(\cE^{r},(\cE^{r+1})^c)\leq (\delta_r)^{3}.
\end{align*}
 2: From \eqref{eq:ff-mi}, we have if $\cE^{r}$ and $(\cE^{r+1})^c$ truly hold, then with fixed $\ell_s$, Algorithm \ref{alg:opt-bbai} returns at Line \ref{line-for32} with probability 
\begin{align}
    1-(\delta_r^3)^{2^{s-1}}.
\end{align}
We first compute the size of $U_{r}(s)$. From Algorithm \ref{alg:opt-bbai}, we know that the arm kept in the set $S_r$ will be pulled $4$ times larger compared to $(r-1)$-th round. Besides, we update $\ell_r$ to $s+1$, if the number of pulls exceeds $B_{r}/((\delta_r)^{2^{s}}\cdot 2^{s})$ (Line \ref{line-12-2} of Algorithm \ref{alg:opt-bbai}). It is easy to see after $\log_{4} n$ rounds, the number of pulls of any arm exceeds $B_r$ and then after $\ln\frac{1}{(\delta_r)^{2^{s}}\cdot 2^{s}} $ rounds, the number of pulls of single arm exceeds $B_{r}/((\delta_r)^{2^{s}}\cdot 2^{s})$. Therefore, the size of $U_{r}(s)$ is at most $\ln\frac{1}{(\delta_r)^{2^{s}}\cdot 2^{s}} +\ln n$.
We let $U$ be the total number of rounds used.
Then, we obtain 
\begin{align*}
    \EE[U\cdot \ind\{\cE^{r},(\cE^{r+1})^c\}]  &\leq r\cdot \PP(\cE^{r},(\cE^{r+1})^c)+\EE\bigg(\cO\bigg(\sum_{s=1}{((\delta_r)^3)^{2^{s-1}}}\cdot\bigg({\ln n+\ln\frac{1}{(\delta_r)^{2^{s}}\cdot 2^{s}} }\bigg)\bigg) \bigg) \notag \\
    & \leq r \PP(\cE^{r},(\cE^{r+1})^c)+\delta_{r}.
\end{align*}
We have shown in \Cref{lem:final},  $\sum_{r>1} r \PP(\cE^{r},(\cE^{r+1})^c)=O(\log(1/\Delta_2))$.
Therefore, the total number of rounds is 
\begin{align*}
     \sum_{r\geq 1}^{\infty}\EE[U\cdot \ind\{\cE^{r},(\cE^{r+1})^c\}]=O(\log (1/\Delta_2)),
\end{align*}
which completes the proof.

\section{Experiments}\label{sec:experiments}

In this section, we compare our algorithms Tri-BBAI and Opt-BBAI with Track-and-Stop \citep{garivier2016optimal}, Top-k $\delta$-Elimination \citep{jin2019efficient}, ID-BAI \citep{jin2021optimal} and CollabTopM \citep{karpov2020collaborative} under bandit instances with Bernoulli rewards. All the experiments are repeated in $1000$ trials. We perform all computations in Python on R9 5900HX for all our experiments. The implementation of this work can be found at \href{https://github.com/panxulab/Optimal-Batched-Best-Arm-Identification}{https://github.com/panxulab/Optimal-Batched-Best-Arm-Identification}

\paragraph{Data generation.} For all experiments in this section, we set the number of arms $n=10$, where each arm has Bernoulli reward distribution with mean $\mu_i$ for $i\in[10]$. More specifically, the mean rewards are generated by the following two cases.
\begin{itemize}[leftmargin=*,nosep]
    \item Uniform: The best arm has $\mu_1=0.5$, and the mean rewards of the rest of the arms follow uniform distribution over $[0.2,0.4]$, i.e., $\mu_i$ is uniformly generated from $[0.2,0.4]$ for $i\in [n]\setminus \{1\}$.
    \item Normal: The best arm has $\mu_1=0.6$, and the mean rewards of the rest of the arms are first generated from normal distribution $\cN(0.2,0.2)$ and then projected to the interval $[0,0.4]$.
\end{itemize}

\paragraph{Implementation details.} The hyperparameters of all methods are chosen as follows.%
\begin{itemize}[leftmargin=*,nosep]
    \item Track-and-Stop \citep{garivier2016optimal} is a fully sequential algorithm and thus the only parameter that needs to be set is the $\beta(t)$ function in the Chernoff’s stopping condition (similar to Stage III of Algorithm \ref{alg:tri-bbai}). Note that the theoretical value of $\beta(t)$ in Track-and-Stop~\cite{garivier2016optimal} is of the same order as presented in our \Cref{thm:asym}. However, they found that a smaller value works better in practice. Therefore, we follow their experiments to set $\beta=\log\left({(\log(t)+1)}/{\delta}\right)$.
    \item Top-k $\delta$-Elimination is a batched algorithm that eliminates the arms in batches. It has two parameters $\epsilon$ and $\delta$. In our experiments, we fix $\epsilon=0.1$. 
    \item ID-BAI \citep{jin2021optimal} is designed to identify the best arm among a set of bandit arms with optimal instance-dependent sample complexity. We use the same algorithm setting of the original paper in our experiments.
    \item CollabTopM \citep{karpov2020collaborative} is the algorithm to identify the Top-$m$ arms within a multi-agent setting. We set the $m$ as 1 and the agents $K$ as 1.
    \item For \algnameAsy and \algnameOpt, we set $\alpha=1.001$\footnote{From Theorem \ref{thm:asym}, we can select $\alpha$ to be any constant within the range $(1,e/2)$. To optimize the convergence rate of $\EE[N_{\delta}]/(\log(1/\delta)T^*(\bmu))$, we set $\alpha$ slightly above 1, specifically we choose $\alpha = 1.001$.}, and $\epsilon=0.01$. We use the same $\beta(t)$ function for Chernoff’s stopping condition as in Track-and-Stop. Moreover, for the lengths of the batches, we set $L_1$, $L_2$ and $L3$ to be the value calculated by \Cref{thm:asym}.

\end{itemize}

\paragraph{Results.}

We present a comprehensive comparison on the sample complexities and batch complexities of our algorithms and baseline algorithms in
\Cref{table:result_10arms_full_uniform_practical,table:result_10arms_full_normal_practical}. Notably, our algorithms Tri-BBAI and Opt-BBAI, also including Top-K $\delta$ Elimination and CollabTopM, require significantly fewer batches than Track-and-Stop. Furthermore, the sample complexity of Tri-BBAI and Opt-BBAI is significantly lower than that of Top-K $\delta$ Elimination, ID-BAI and CollabTopM. Additionally, the sample complexity of Tri-BBAI and Opt-BBAI is comparable to Track and Stop when $\delta$ is large, and it is at most 3.6 times greater than Track and Stop when $\delta$ is very small.
Additionally, we also provide the runtime comparison in \Cref{table:result_10arms_full_uniform_practical,table:result_10arms_full_normal_practical}. Our algorithms have a significantly reduced runtime compared to Track-and-Stop, achieving nearly $1000\times$ speedup.

\begin{table*}[ht]
  \caption{%
  Experimental results in terms of sample complexity, batch complexity and runtime under the uniform mean rewards. The number of arms is $n=10$. The experiments are averaged over 1000 repetitions.}
  \label{table:result_10arms_full_uniform_practical}
  \centering
  \resizebox{\textwidth}{!}{%
  \begin{tabular}{lllllll}
\toprule
Dataset                   & $\delta$                            & Algorithm                  & Sample Complexity    & Batch Complexity   & Runtime (s)        & Recall \\ \midrule
\multirow{40}{*}{Uniform} & \multirow{6}{*}{$1\times 10^{-1}$}  & Track-and-Stop             & $ 668.49\pm298.62$   & $668.49\pm298.62$  & $275.94\pm135.67$  & 100\%  \\
                          &                                     & Top-k $\delta$-Elimination & $32472\pm0$          & $2\pm0.01$          & $0.04\pm0.01$      & 100\%  \\
                          &                                     & ID-BAI     & $120427\pm0$                          & -                & $0.136\pm0.003$ & 100\%  \\
                          &                                     & CollabTopM & $27290.205\pm4806.435$ & $3.001\pm0.031$                & $0.032\pm0.006$ & 100\%  \\
                          &                                     & Tri-BBAI                   & $365.232\pm21.85$    & $3.99\pm0.11$      & $1.02\pm0.23$      & 90.1\% \\
                          &                                     & Opt-BBAI                   & $1220.30\pm586.92$   & $4.89\pm1.01$      & $0.95\pm0.22$      & 100\%  \\ \cline{2-7} 
                          & \multirow{6}{*}{$1\times 10^{-2}$}  & Track-and-Stop             & $977.23\pm371.61$    & $977.23\pm371.61$  & $381.85\pm146.10$  & 100\%  \\
                          &                                     & Top-k $\delta$-Elimination & $52734\pm0$          & $2\pm0.01$          & $0.07\pm0.01$      & 100\%  \\
                          &                                     & ID-BAI     & $146948\pm0$                          & -                & $0.167\pm0.003$ & 100\%  \\
                          &                                     & CollabTopM & $35449.181\pm5530.938$                & $3.0\pm0$                & $0.042\pm0.006$ & 100\%  \\
                          &                                     & Tri-BBAI                   & $1114.41\pm98.68$    & $3.96\pm0.19$      & $1.05\pm0.19$      & 99.3\% \\
                          &                                     & Opt-BBAI                   & $1929.65\pm604.11$   & $4.64\pm1.00$      & $1.02\pm0.18$      & 100\%  \\ \cline{2-7} 
                          & \multirow{6}{*}{$1\times 10^{-3}$}  & Track-and-Stop             & $1468.15\pm448.88$   & $1468.15\pm448.88$ & $624.82\pm219.14$  & 100\%  \\
                          &                                     & Top-k $\delta$-Elimination & $72996\pm0$          & $2\pm0.01$          & $0.10\pm0.01$      & 100\%  \\
                          &                                     & ID-BAI     & $173478\pm0$                          & -                & $0.197\pm0.006$ & 100\%  \\
                          &                                     & CollabTopM & $43457.489\pm5519.169$                & $3.0\pm0$                & $0.051\pm0.006$ & 100\%  \\
                          &                                     & Tri-BBAI                   & $2005.07\pm175.13$   & $3.88\pm0.33$      & $1.09\pm0.13$      & 100\%  \\
                          &                                     & Opt-BBAI                   & $2781.41\pm658.40$   & $4.37\pm0.98$      & $1.01\pm0.14$      & 100\%  \\ \cline{2-7} 
                          & \multirow{6}{*}{$1\times 10^{-4}$}  & Track-and-Stop             & $ 1769.72\pm467.36$  & $1769.72\pm467.36$ & $743.96\pm196.39$  & 100\%  \\
                          &                                     & Top-k $\delta$-Elimination & $93258\pm0$          & $2\pm0.01$          & $0.12\pm0.01$      & 100\%  \\
                          &                                     & ID-BAI     & $199999\pm0$                          & -                & $0.257\pm0.005$ & 100\%  \\
                          &                                     & CollabTopM & $51823.644\pm6379.803$                & $3.0\pm0$                & $0.061\pm0.007$ & 100\%  \\
                          &                                     & Tri-BBAI                   & $2989.96\pm274.36$   & $3.81\pm0.39$      & $1.10\pm0.13$      & 100\%  \\
                          &                                     & Opt-BBAI                   & $3752.31\pm739.60$   & $4.20\pm0.99$      & $1.00\pm0.12$      & 100\%  \\ \cline{2-7} 
                          & \multirow{6}{*}{$1\times 10^{-5}$}  & Track-and-Stop             & $2135.09\pm535.01$   & $2135.09\pm535.01$ & $821.59\pm137.83$  & 100\%  \\
                          &                                     & Top-k $\delta$-Elimination & $113520\pm0$         & $2\pm0.01$          & $0.15\pm0.01$      & 100\%  \\
                          &                                     & ID-BAI     & $226528\pm0$                          & -                & $0.257\pm0.005$ & 100\%  \\
                          &                                     & CollabTopM & $60263.887\pm6367.285$                & $3.0\pm0$               & $0.071\pm0.007$ & 100\%  \\
                          &                                     & Tri-BBAI                   & $4066.34\pm381.41$   & $3.74\pm0.44$      & $1.11\pm0.13$      & 100\%  \\
                          &                                     & Opt-BBAI                   & $4799.49\pm852.00$   & $4.04\pm0.96$      & $1.03\pm0.12$      & 100\%  \\ \cline{2-7} 
                          & \multirow{6}{*}{$1\times 10^{-6}$}  & Track-and-Stop             & $2517.93\pm561.65$   & $2517.93\pm561.65$ & $1085.30\pm202.72$ & 100\%  \\
                          &                                     & Top-k $\delta$-Elimination & $133782\pm0$         & $2\pm0.01$          & $0.18\pm0.01$      & 100\%  \\
                          &                                     & ID-BAI     & $253049\pm0$                          & -                & $0.288\pm0.006$ & 100\%  \\
                          &                                     & CollabTopM & $67958.078\pm6730.415$                & $3.0\pm0$                & $0.081\pm0.008$ & 100\%  \\
                          &                                     & Tri-BBAI                   & $5185.75\pm485.82$   & $3.65\pm0.48$      & $1.12\pm0.12$      & 100\%  \\
                          &                                     & Opt-BBAI                   & $5871.33\pm951.60$   & $3.90\pm0.93$      & $1.07\pm0.14$      & 100\%  \\ \cline{2-7} 
                          & \multirow{6}{*}{$1\times 10^{-7}$}  & Track-and-Stop             & $ 2942.67\pm598.77$  & $2942.67\pm598.77$ & $1232.91\pm192.82$ & 100\%  \\
                          &                                     & Top-k $\delta$-Elimination & $154044\pm0$         & $2\pm0.01$          & $0.21\pm0.01$      & 100\%  \\
                          &                                     & ID-BAI     & $279579\pm0$                          & -                & $0.317\pm0.006$ & 100\%  \\
                          &                                     & CollabTopM & $76627.155\pm7000.211$                & $3.0\pm0$                & $0.090\pm0.008$ & 100\%  \\
                          &                                     & Tri-BBAI                   & $6334.49\pm613.88$   & $3.58\pm0.49$      & $1.12\pm0.12$      & 100\%  \\
                          &                                     & Opt-BBAI                   & $7055.16\pm1059.50$  & $3.82\pm0.93$      & $1.11\pm0.16$      & 100\%  \\ \cline{2-7} 
                          & \multirow{6}{*}{$1\times 10^{-8}$}  & Track-and-Stop             & $3347.02\pm535.16$   & $3347.02\pm535.16$ & $1464.84\pm359.57$ & 100\%  \\
                          &                                     & Top-k $\delta$-Elimination & $174306\pm0$         & $2\pm0.01$          & $0.23\pm0.01$      & 100\%  \\
                          &                                     & ID-BAI     & $306108\pm0$                          & -                & $0.347\pm0.006$ & 100\%  \\
                          &                                     & CollabTopM & $84995.542\pm7270.987$                & $3.0\pm0$                & $0.100\pm0.009$ & 100\%  \\
                          &                                     & Tri-BBAI                   & $7486.57\pm764.36$   & $3.49\pm0.50$      & $1.13\pm0.11$      & 100\%  \\
                          &                                     & Opt-BBAI                   & $8136.01\pm1094.05$  & $3.64\pm0.85$      & $1.10\pm0.12$      & 100\%  \\ \cline{2-7} 
                          & \multirow{6}{*}{$1\times 10^{-9}$}  & Track-and-Stop             & $3609.64\pm638.68$   & $3609.64\pm638.68$ & $1661.96\pm266.15$ & 100\%  \\
                          &                                     & Top-k $\delta$-Elimination & $194568\pm0$         & $2\pm0.01$          & $0.26\pm0.01$      & 100\%  \\
                          &                                     & ID-BAI     & $332628\pm0$                          & -                & $0.376\pm0.006$ & 100\%  \\
                          &                                     & CollabTopM & $92705.663\pm7639.167$                & $3.0\pm0$                & $0.109\pm0.009$ & 100\%  \\
                          &                                     & Tri-BBAI                   & $8759.03\pm844.59$   & $3.41\pm0.49$      & $1.12\pm0.10$      & 100\%  \\
                          &                                     & Opt-BBAI                   & $9315.58\pm1264.28$  & $3.56\pm0.83$      & $1.01\pm0.10$      & 100\%  \\ \cline{2-7} 
                          & \multirow{6}{*}{$1\times 10^{-10}$} & Track-and-Stop             & $4136.94\pm665.10$   & $4136.94\pm665.10$ & $1714.68\pm257.00$ & 100\%  \\
                          &                                     & Top-k $\delta$-Elimination & $214830\pm0$         & $2\pm0.01$          & $0.28\pm0.01$      & 100\%  \\
                          &                                     & ID-BAI     & $359158\pm0$                          & -                & $0.407\pm0.008$ & 100\%  \\
                          &                                     & CollabTopM & $100894.264\pm7704.855$               & $3.0\pm0$               & $0.119\pm0.009$ & 100\%  \\
                          &                                     & Tri-BBAI                   & $10083.60\pm1005.54$ & $3.30\pm0.45$      & $1.26\pm0.18$      & 100\%  \\
                          &                                     & Opt-BBAI                   & $10548.19\pm1277.15$ & $3.48\pm0.78$      & $0.99\pm0.09$      & 100\%  \\ \bottomrule
\end{tabular}
  }
\end{table*}

\begin{table*}[ht]
  \caption{%
  Experimental results in terms of sample complexity, batch complexity and runtime under the normal mean rewards. The number of arms is $n=10$. The experiments are averaged over 1000 repetitions.}
  \label{table:result_10arms_full_normal_practical}
  \centering
  \resizebox{\textwidth}{!}{%
\begin{tabular}{lllllll}
\hline
Dataset                  & $\delta$                            & Algorithm                  & Sample Complexity   & Batch Complexity   & Runtime (s)       & Recall \\ \hline
\multirow{40}{*}{Normal} & \multirow{6}{*}{$1\times 10^{-1}$}  & Track-and-Stop             & $305.21\pm183.51$   & $305.21\pm183.51$  & $154.42\pm64.49$  & 100\%  \\
                         &                                     & Top-k $\delta$-Elimination & $32472.0\pm0$       & $2\pm0.01$         & $0.04\pm0.01$     & 100\%  \\
                         &                                     & ID-BAI     & $120427\pm0$           & -                & $0.145\pm0.012$ & 100\%  \\
                         &                                     & CollabTopM & $11834.089\pm2208.691$ & $2.888\pm0.315$                & $0.015\pm0.003$ & 100\%  \\
                         &                                     & Tri-BBAI                   & $334.49\pm32.80$    & $3.89\pm0.32$      & $1.08\pm0.16$     & 98.3\% \\
                         &                                     & Opt-BBAI                   & $793.23\pm358.15$   & $4.18 \pm 0.79$    & $1.05\pm0.16$     & 100\%  \\ \cline{2-7} 
                         & \multirow{6}{*}{$1\times 10^{-2}$}  & Track-and-Stop             & $490.95\pm206.28$   & $490.95\pm206.28$  & $182.42\pm81.61$  & 100\%  \\
                         &                                     & Top-k $\delta$-Elimination & $52734\pm0$         & $2\pm0.01$         & $0.07\pm0.01$     & 100\%  \\
                         &                                     & ID-BAI     & $146948\pm0$           & -                & $0.171\pm0.011$ & 100\%  \\
                         &                                     & CollabTopM & $16145.218\pm2255.120$ & $2.933\pm0.250$                & $0.021\pm0.004$ & 100\%  \\
                         &                                     & Tri-BBAI                   & $893.05\pm121.47$   & $3.62\pm0.48$      & $1.11\pm0.11$     & 100\%  \\
                         &                                     & Opt-BBAI                   & $1236.33\pm381.17$  & $3.74\pm0.71$      & $1.11\pm0.13$     & 100\%  \\ \cline{2-7} 
                         & \multirow{6}{*}{$1\times 10^{-3}$}  & Track-and-Stop             & $699.18\pm198.64$   & $699.18\pm198.64$  & $264.66\pm116.82$ & 100\%  \\
                         &                                     & Top-k $\delta$-Elimination & $72996\pm0$         & $2\pm0.01$         & $0.10\pm0.01$     & 100\%  \\
                         &                                     & ID-BAI     & $173478\pm0$           & -                & $0.204\pm0.013$ & 100\%  \\
                         &                                     & CollabTopM & $20414.549\pm2277.714$ & $2.960 \pm 0.195$                & $0.027\pm0.004$ & 100\%  \\
                         &                                     & Tri-BBAI                   & $1532.76\pm201.18$  & $3.37\pm0.48$      & $1.13\pm0.10$     & 100\%  \\
                         &                                     & Opt-BBAI                   & $1747.15\pm389.65$  & $3.43\pm0.58$      & $1.07\pm0.10$     & 100\%  \\ \cline{2-7} 
                         & \multirow{6}{*}{$1\times 10^{-4}$}  & Track-and-Stop             & $833.08\pm234.48$   & $833.08\pm231.90$  & $315.10\pm91.15$  & 100\%  \\
                         &                                     & Top-k $\delta$-Elimination & $93258\pm0$         & $2\pm0.01$         & $0.12\pm0.01$     & 100\%  \\
                         &                                     & ID-BAI     & $199999\pm0$           & -                & $0.236\pm0.015$ & 100\%  \\
                         &                                     & CollabTopM & $24673.111\pm2584.898$ & $2.962\pm0.191$                & $0.032\pm0.005$ & 100\%  \\
                         &                                     & Tri-BBAI                   & $2141.11\pm282.37$  & $3.20\pm0.40$      & $1.13\pm0.08$     & 100\%  \\
                         &                                     & Opt-BBAI                   & $2263.174\pm405.72$ & $3.19 \pm 0.42$    & $1.06\pm0.08$     & 100\%  \\ \cline{2-7} 
                         & \multirow{6}{*}{$1\times 10^{-5}$}  & Track-and-Stop             & $972.75\pm245.18$   & $972.75\pm245.18$  & $336.78\pm74.03$  & 100\%  \\
                         &                                     & Top-k $\delta$-Elimination & $113520\pm0$        & $2\pm0.01$         & $0.15\pm0.01$     & 100\%  \\
                         &                                     & ID-BAI     & $226528\pm0$           & -                & $0.269\pm0.044$ & 100\%  \\
                         &                                     & CollabTopM & $29081.141\pm2323.105$ & $2.982\pm0.132$                & $0.039\pm0.005$ & 100\%  \\
                         &                                     & Tri-BBAI                   & $ 2838.36\pm353.12$ & $3.09\pm0.28$      & $1.14\pm0.08$     & 100\%  \\
                         &                                     & Opt-BBAI                   & $2881.61\pm430.05$  & $3.08\pm0.27$      & $1.09\pm0.09$     & 100\%  \\ \cline{2-7} 
                         & \multirow{6}{*}{$1\times 10^{-6}$}  & Track-and-Stop             & $1122.53\pm308.73$  & $1122.53\pm308.73$ & $468.14\pm138.87$ & 100\%  \\
                         &                                     & Top-k $\delta$-Elimination & $133782\pm0$        & $2\pm0.01$         & $0.17\pm0.01$     & 100\%  \\
                         &                                     & ID-BAI     & $253049\pm0$           & -                & $0.312\pm0.038$ & 100\%  \\
                         &                                     & CollabTopM & $33386.52\pm2133.062$  & $2.987\pm0.113$                & $0.043\pm0.005$ & 100\%  \\
                         &                                     & Tri-BBAI                   & $3516.52\pm467.48$  & $3.03\pm0.18$      & $1.14\pm0.08$     & 100\%  \\
                         &                                     & Opt-BBAI                   & $3556.59\pm477.73$  & $3.04\pm0.20$      & $1.15\pm0.15$     & 100\%  \\ \cline{2-7} 
                         & \multirow{6}{*}{$1\times 10^{-7}$}  & Track-and-Stop             & $1256.29\pm308.88$  & $1256.29\pm308.88$ & $544.76\pm74.00$  & 100\%  \\
                         &                                     & Top-k $\delta$-Elimination & $154044\pm0$        & $2\pm0.01$         & $0.20\pm0.02$     & 100\%  \\
                         &                                     & ID-BAI     & $279579\pm0$           & -                & $0.336\pm0.019$ & 100\%  \\
                         &                                     & CollabTopM & $37499.802\pm2413.418$ & $2.986\pm0.117$                & $0.048\pm0.005$ & 100\%  \\
                         &                                     & Tri-BBAI                   & $4218.39\pm523.49$  & $3.01\pm0.09$      & $1.15\pm0.08$     & 100\%  \\
                         &                                     & Opt-BBAI                   & $4220.78\pm523.11$  & $3.02 \pm 0.13$    & $1.16\pm0.13$     & 100\%  \\ \cline{2-7} 
                         & \multirow{6}{*}{$1\times 10^{-8}$}  & Track-and-Stop             & $1438.20\pm355.44$  & $1438.20\pm355.44$ & $566.44\pm101.01$ & 100\%  \\
                         &                                     & Top-k $\delta$-Elimination & $174306\pm0$        & $2\pm0.01$         & $0.23\pm0.01$     & 100\%  \\
                         &                                     & ID-BAI     & $306108\pm0$           & -                & $0.362\pm0.018$ & 100\%  \\
                         &                                     & CollabTopM & $41924.698\pm2206.559$ & $2.992\pm0.089$                & $0.055\pm0.005$ & 100\%  \\
                         &                                     & Tri-BBAI                   & $4912.64\pm613.69$  & $3.01\pm0.08$      & $1.15\pm0.07$     & 100\%  \\
                         &                                     & Opt-BBAI                   & $4940.68\pm650.75$  & $3.00\pm0.06$      & $1.12\pm0.10$     & 100\%  \\ \cline{2-7} 
                         & \multirow{6}{*}{$1\times 10^{-9}$}  & Track-and-Stop             & $1632.12\pm348.14$  & $1632.12\pm348.14$ & $590.42\pm56.25$  & 100\%  \\
                         &                                     & Top-k $\delta$-Elimination & $194568\pm0$        & $2\pm0.01$         & $0.26\pm0.01$     & 100\%  \\
                         &                                     & ID-BAI     & $332628\pm0$           & -                & $0.393\pm0.017$ & 100\%  \\
                         &                                     & CollabTopM & $46244.241\pm1816.041$ &  $2.997\pm0.054$               & $0.059\pm0.005$ & 100\%  \\
                         &                                     & Tri-BBAI                   & $5637.36\pm743.53$  & $3.00\pm0.05$      & $1.14\pm0.08$     & 100\%  \\
                         &                                     & Opt-BBAI                   & $5661.51\pm740.74$  & $3.00\pm0.05$      & $1.03\pm0.06$     & 100\%  \\ \cline{2-7} 
                         & \multirow{6}{*}{$1\times 10^{-10}$} & Track-and-Stop             & $1747.29\pm318.85$  & $1747.29\pm318.85$ & $790.89\pm119.52$ & 100\%  \\
                         &                                     & Top-k $\delta$-Elimination & $214830\pm0$        & $2\pm0.01$         & $0.28\pm0.01$     & 100\%  \\
                         &                                     & ID-BAI     & $359158\pm0$           & -                & $0.423\pm0.015$ & 100\%  \\
                         &                                     & CollabTopM & $50556.41\pm1822.998$  & $2.997\pm0.054$                & $0.067\pm0.007$ & 100\%  \\
                         &                                     & Tri-BBAI                   & $6356.78\pm787.53$  & $3.00\pm0.03$      & $1.30\pm0.16$     & 100\%  \\
                         &                                     & Opt-BBAI                   & $6355.28\pm793.45$  & $3.00 \pm 0.03$    & $1.03\pm0.05$     & 100\%  \\ \hline
\end{tabular}
  }
\end{table*}

\end{document}